\definecolor{ballblue}{rgb}{0.13, 0.67, 0.8}
\definecolor{babypink}{rgb}{0.96, 0.76, 0.76}
\definecolor{antiquefuchsia}{rgb}{0.57, 0.36, 0.51}
\definecolor{blue(pigment)}{rgb}{0.2, 0.2, 0.6}
\definecolor{blush}{rgb}{0.87, 0.36, 0.51}
\newcommand{\holder}{H\"{o}lder }
\setlist{topsep=-.05in, itemsep=-.05in}
\DeclareMathOperator*{\argmax}{arg\,max}
\DeclareMathOperator*{\argmin}{arg\,min}
\newcommand*\diff{\mathop{}\!\mathrm{d}}
\newcommand{\norm}[1]{\left\lVert#1\right\rVert}
\newcommand{\der}{{\rm d}}
\renewcommand{\F}{\mathbb{F}}
\newcommand{\G}{\mathbb{G}}
\renewcommand{\Q}{\mathbb{Q}}
\def\expandafter\normalsize\expandafter{%
    \normalsize%
    \setlength\abovedisplayskip{5pt}%
    \setlength\belowdisplayskip{3pt}%
    \setlength\abovedisplayshortskip{-8pt}%
    \setlength\belowdisplayshortskip{2pt}%
}
\renewcommand{\paragraph}{%
  \@startsection{paragraph}{4}%
  {\z@}{-.0ex \@plus 0ex \@minus .2ex}{-1em}%
  {\normalfont\normalsize\bfseries}%
}
\begin{document}
\allowdisplaybreaks 
%

%





\twocolumn[

\runningtitle{Risk-sensitive Bandits: {Arm Mixture Optimality and Regret-efficient Algorithms}}
\aistatstitle{Risk-sensitive Bandits: \\ {Arm Mixture Optimality and Regret-efficient Algorithms}}

\aistatsauthor{Meltem Tatl{\i}$^*$ \And Arpan Mukherjee$^*$ \And Prashanth L.A. \And Karthikeyan Shanmugam \And Ali Tajer}

\aistatsaddress{ RPI \And  RPI \And IIT Madras \And Google Deepmind India \And RPI }

]

\begin{abstract}
This paper introduces a general framework for risk-sensitive bandits that integrates the notions of risk-sensitive objectives by adopting a rich class of {\em distortion riskmetrics}. The introduced framework subsumes the various existing risk-sensitive models. An important and hitherto unknown observation is that for a wide range of riskmetrics, the optimal bandit policy involves selecting a \emph{mixture} of arms. This is in sharp contrast to the convention in the multi-arm bandit algorithms that there is generally a \emph{solitary} arm that maximizes the utility, whether purely reward-centric or risk-sensitive. This creates a major departure from the principles for designing bandit algorithms since there are uncountable mixture possibilities. The contributions of the paper are as follows: (i) it formalizes a general framework for risk-sensitive bandits, (ii) identifies standard risk-sensitive bandit models for which solitary arm selections is not optimal, (iii) and designs regret-efficient algorithms whose sampling strategies can accurately track optimal arm mixtures (when mixture is optimal) or the solitary arms (when solitary is optimal). The algorithms are shown to achieve a regret that scales according to $O((\log T/T )^{\nu})$, where $T$ is the horizon, and $\nu>0$ is a riskmetric-specific constant.

\end{abstract}

\section{MOTIVATION \& OVERVIEW}

\label{sec:introduction}
The canonical objective of stochastic multi-armed bandits is designing a sequence of experiments to identify the arm with the largest expected reward. This is a \emph{utilitarian} view that lacks a mechanism for risk management. For instance, designing a financial portfolio based on maximizing the expected financial return is susceptible to incurring heavy financial losses due to tail events. Recovering from such losses translates into increased utility regret due to the losses and the upcoming missed opportunities.  

Leveraging the recent advances in risk analysis, we provide a versatile risk-sensitive bandit framework that replaces the utilitarian reward functions with proper risk-sensitive counterparts. Specifically, we adopt the notion of \emph{distortion riskmetrics} (DRs)~\citep{Wang2020}, which encompass a broad range of widely-used \emph{risk} and \emph{deviation} measures. For a given function $h:[0,1]\to [0,1]$ that satisfies $h(0)=0$, and a given probability measure with the cumulative distribution function (CDF) $\Q$, the distortion riskmetric $U_h$ is defined as the \emph{signed} Choquet integral 
\begin{align}
\small
\label{eq:wang_def}
\nonumber
    U_h(\Q) = \int_{-\infty}^{0} & \Big(h\big(1- \Q(x)\big)  - h(1)\Big) \, \der x \\ & \qquad \qquad + \int_{0}^{\infty} h\big(1-\Q(x)\big) \, \der x .
\end{align}
The function $h$, referred to as the {\em distortion function}, enables high versatility in specifying various risk measures and unifies various notions of risk, variability, and preference. Most such notions can be classified based on the monotonicity of the distortion function $h$ as follows.
\begin{itemize}
    \item {\bf Monotone DRs:} There exists extensive literature on a subclass of DRs in which  $h$ is monotonically increasing and $h(1)=1$. These include $L$-functionals in statistics \citep{huber2009robust}, Yaari’s dual utilities in decision theory \citep{yaari1987dual}, distorted premium principles in insurance \citep{denneberg1994nonadditive}, and distortion risk measures in finance \citep{kusuoka2001law}. Some widely used examples of monotone distortion riskmetrics are value-at-risk (VaR), expected shortfall (ES) \citep{artzner99}, and conditional VaR (CVaR) \citep{rockafellar2000optimization}.
\newpage    \item {\bf Non-monotone DRs:} Unlike their monotone counterparts and despite their significance, the non-monotone DRs are far less investigated. These include different measures of variability, such as deviation measures \citep{rockafellar2006generalized},  mean-median deviation, inter-quantile range, Wang's right-tail deviation, inter-expected shortfall, Gini deviation \citep{Wang2020}, cumulative Tsallis past entropy \citep{zuo2024worstCases};
    preference measures such as 
 Gini shortfall \citep{furman2017gini}; and rank-based decision-making in decision theory \citep{quiggin1982theory}.
\end{itemize}

\paragraph{Risk-sensitive bandits.} Canonical bandit algorithms aim to maximize the expected cumulative reward. 
Risk-sensitive bandit algorithms, in contrast, seek to strike a balance between risk and reward, often formalized through proper risk measures. Notable recent advances on risk-sensitive bandits include adopting VaR and CVaR as risk measures~\citep{
aditya2016weighted, prashanth2022wasserstein, chang2022, cassel2018general,baudry2021optimal, tamkin2019cvar,tan2022cvar,liang2023distribution}. 

Besides balancing the risk-reward dichotomy, there also exist other studies on risk-sensitive best arm identification~\citep{kagrecha2019distribution, prashanth2020concentration} and contextual bandits~\citep{Huang2021}. 

The majority of the existing studies on risk-sensitive bandits, despite their distinct emphases and approaches, fall in the category of \emph{monotone} DRs, with the only exception of mean-variance risk studied in \citep{chang2022, cassel2018general, sani2013risk, vakili2016risk}. The non-monotone DRs, while actively studied in the adjacent fields, e.g., reinforcement learning, remain unexplored for bandits. Some recent studies of non-monotone DRs in reinforcement learning include Gini deviation~\citep{Luo2023}, and inter-ES~\citep{Han2022}.

\paragraph{Contributions.} The contributions are three-fold. We (i)~design a risk-sensitive bandit framework that unifies all the monotone and non-monotone DRs; (ii)~introduce novel \emph{fixed-horizon} risk-sensitive algorithms; and (iii)~establish that our algorithms are regret-efficient. Due to the high versatility of DRs in accommodating a significant range of risk measures, our framework subsumes most existing studies in risk-sensitive bandits through proper choices of the distortion function $h$. Some special cases are presented in Table~\ref{table:table_regrets}. Our framework provides regret analysis for non-monotone DRs, which the existing literature cannot address, and recovers the regret guarantees for some of the existing monotone DRs. Finally, we note that by setting $h(u)=u$, the DR $U_h(\Q)$  simplifies to the expected value of $\Q$, representing the standard risk-neutral objective. 

\newpage
\paragraph{Key observations.} A hitherto unknown observation is that for certain DRs, the optimal policy is a \emph{mixture} policy, i.e., there is no single optimal arm and the optimal strategy should be mixing arm selections according to carefully designed mixing coefficients. This contrasts the existing utility-centric or risk-sensitive bandits in which the optimal policy involves selecting a \emph{solitary} arm \citep{cassel2018general}. 

More specifically, we show that contrary to the convention of considering the optimal solution involving a \emph{solitary} arm, the optimal solution of a non-monotone DR might involve a \emph{mixture} of arms.
In the latter case, we show that the risk of choosing any single arm is uniformly dominated by that of sampling based on a proper mixture of the arms. This guides a significantly different way of designing bandit algorithms, an integral part of which will be estimating mixing coefficients and designing arm selection strategies that accurately track the optimal mixture over time.

\paragraph{Technical challenge of learning optimal mixtures.} Learning the optimal mixtures is a problem fundamentally distinct from identifying a solitary optimal arm. Searching for a solitary arm is guided by sequentially identifying and selecting an arm that optimizes a desired metric (e.g., upper confidence bound). 
In contrast, a mixture policy introduces two new challenges to the arm-selection policy. The one pertains to estimating $K$ continuous-valued mixing coefficients and determining the optimal mixture from these estimates.
The second dimension is the need for an efficient algorithm to track the optimal mixture. This necessitates balancing arm selections over time so that their mixture, in aggregate, conforms to the optimal one.

Addressing the first challenge (estimating mixture coefficients) requires estimating arms' CDFs over time, since the mixture coefficients depend on knowing the DRs associated with different arms, which are functions of the arms' CDFs. For the second challenge (tracking the optimal mixture), we design an arm-selection policy using the mixing estimates that deviates from the counterpart strategies with a solitary optimal arm. 

\paragraph{Organization.} The DR-centric risk-sensitive bandit framework is presented in Section~\ref{sec:setting}. We present two algorithms in Section~\ref{sec:algorithm} based on the explore-then-commit (ETC) and upper-confidence bound (UCB) principles. The key processes in these algorithms are routines for learning the optimal mixtures. This new addition renders the designs and, especially, analyses of these algorithms significantly distinct from the standard ETC- and UCB-type analyses. 
The regret analyses are presented in Section~\ref{sec:analysis} and the empirical evaluations are discussed in Section~\ref{sec: experiments}. The proofs are relegated to the appendices.

\setlength{\textfloatsep}{5pt}
\begin{table*}[t!]
\small
\caption{Regret bounds of ETC-type ($\mathfrak{R}_{\bnu}^{\rm E}(T)$) and UCB-type ($\mathfrak{R}^{\rm U}_{\bnu}(T)$) algorithms, where $\varpi(T)\triangleq  \sqrt{\frac{\log T}{T}}$.}
\label{table:table_regrets}
\begin{minipage}[t]{\textwidth}
\centering
{
\begin{tabular}{|l|l|l|l|l|}
\hline 
\small
\textbf{Distortion Riskmetrics 
\footnote{Summary of results for the $K$-arm Bernoulli bandit model. The more general results are presented in Section~\ref{sec:analysis}.}\footnote{In non-shaded rows, solitary arms are optimal. In the shaded rows, mixtures of arms are optimal.}\footnote{$\kappa \in(0,\frac{1}{2})$ and $\gamma\in(0,1)$ are constants associated with the distortion function $h$ and are specified in Section~\ref{sec:analysis}.}
}&  \(h(u)\)& parameter & \textbf{$\mathfrak{R}_{\bnu}^{\rm U}(T)$} & \textbf{$\mathfrak{R}_{\bnu}^{\rm E}(T)$} \\
\hline\hline 
Risk-neutral Mean Value     &  \(u \)& &  {$O(\varpi^{1/2}(T))$} &  $O(\varpi^2(T))$ \\ 
Dual Power                 &  \(1-(1-u)^s \)& \(s \in [2,+\infty)\)  & {$O(\varpi^{1/2}(T))$} & $O(\varpi^2(T))$ \\
Quadratic                   &   \((1+s)u-su^2\)  & \(s\in[0,1]\) &{$O(\varpi^{1/2}(T))$}& $O(\varpi^2(T))$ \\
CVaR$_\alpha$ \footnote{Arm means $<1-\alpha$ for $\mathfrak{R}_{\bnu}^{\rm U}(T)$.}
&  \(\min\left\{\frac{u}{1-\alpha}, 1\right\} \)& &   {$O(\varpi^{1/2}(T))$} & $O(\varpi^2(T))$ \\ 
PHT               &  \(u^s\)  & \(s=1/2 \)& {$O(\varpi^{1/5}(T))$}     & $O(\varpi(T))$ \\
\rowcolor{gray!30}
Mean-Median Deviation             & \(\min\left\{u, 1-u\right\} \)  & & {$O(\varpi^{2 \kappa}(T))$}  & {$O(\varpi^{1/\beta}(T^{\gamma}))$} \\
\rowcolor{gray!30}
Inter-ES Range 
& \(\min\left\{\frac{u}{1-\alpha}, 1\right\} \) + \(\min\left\{\frac{\alpha-u}{1-\alpha}, 0\right\} \) & \(\alpha = 1/2\) & {$O(\varpi^{ 2 \kappa}(T))$}  & {$O(\varpi^{1/\beta}(T^{\gamma}))$} \\
\rowcolor{gray!30}
Wang's Right-Tail Deviation             &  \(\sqrt{u}-u \) & & {$O(\varpi^{2  \kappa}(T))$}   & {$O(\varpi^{1/2\beta}(T^{\gamma}))$} \\ 
\rowcolor{gray!30}
Gini Deviation        & \(u(1-u)\) & & {$O(\varpi^{4 \kappa}(T))$}  & {$O(\varpi^{2/\beta}(T^{\gamma}))$} \\
\hline 
\end{tabular}
}
\end{minipage}
\vspace{-.2 in}
\end{table*}

\section{DR-CENTRIC FRAMEWORK}
\label{sec:setting}
\vspace{-.05 in}
\paragraph{Bandit model.} Consider a $K$-armed unstructured stochastic bandit.  Each arm $i\in [K]\triangleq\{1,\cdots,K\}$ is endowed with a probability space $(\Omega,\mcF,\F_i)$, where $\mcF$ is the $\sigma$-algebra on $\Omega\subseteq\R_+$ and $\F_i$ is an {\em unknown} probability measure\footnote{We focus on positive-valued random variables. Extension to include negative values
is straightforward.}. Accordingly, define 
$\F \triangleq  \{\F_i:i\in[K]\}$.
At time $t\in\N$, a policy $\pi$ selects an arm $A_t\in[K]$ and the arm generates a stochastic sample $X_t$ distributed according to $\F_{A_t}$. Denote the sequence of actions, observations, and the $\sigma$-algebra that policy $\pi$ generates up to time $t\in\N$ by $\mcX_t\triangleq\left ( X_1,\cdots, X_t\right )$, $ \mcA^\pi_t \triangleq\left( A_1,\cdots,A_t\right )$, and $\mcH^\pi_t\triangleq\sigma\left (A_1,X_1,\cdots,A_{t-1},X_{t-1}\right )$.

Corresponding to any bandit instance $\bnu$ and policy $\pi$, $\P_{\bnu}^{\pi}$ denotes the push-forward measure on $\mcH^\pi_t$, and $\E_{\bnu}^{\pi}$ denotes the associated expectation. The sequence of independent and identically distributed (i.i.d.) rewards generated by arm $i\in[K]$ up to time $t\in\N$ is denoted by  $\mcX_t(i)\triangleq \{X_t : A_t= i\}$. We define $\tau_t^\pi(i)\triangleq |\mcX_t(i)|$ as the number of times that policy $\pi$ selects arm $i\in[K]$ up to time $t$.

\textbf{Distortion riskmetric.} We have a significant departure 
from the objective of evaluating the cumulative reward collected over time, and instead focus on evaluating the risk associated with the decisions made over time. We consider a generic distortion function $h:[0,1]\to [0,1]$, which is not necessarily monotonic. The distortion riskmetric associated with $h$ is denoted by $U_h$ and is defined in~\eqref{eq:wang_def}. Corresponding to any given policy $\pi$ at time $t$, the overall risk associated with the sequence of arm selections $(A_1,\dots, A_t)$ is given by
\begin{align}    
\label{eq:U}
\textstyle U_h\left(\sum_{s=1}^t\sum_{i\in[K]}\frac{\mathds{1}\{A_s=i\}}{t}\; \F_i\right) = U_h\left(\sum_{i\in[K]}\frac{\tau_t^\pi(i)}{t} \; \F_i\right).
\end{align}
We highlight that the DR $U_h$ depends on the full descriptions of the arms' statistical models (CDFs). 

\vspace{-.05 in}

\paragraph{Oracle policy.} An oracle policy can accurately identify the optimal sequence of arm selections $\{A_t:t\in\N\}$. Given the structure in~\eqref{eq:U}, designing an optimal oracle policy is equivalent to determining the optimal mixing of the CDFs. For a bandit instance $\bnu\triangleq(\F_1,\cdots,\F_K)$, the vector of optimal mixture coefficients is denoted by
\begin{align}
\label{eq:optimal alpha}
\balpha^\star_{\bnu}\;\in\;\argmax\limits_{\balpha\in\Delta^{K-1}}\;U_h\Bigg ( \sum\limits_{i\in[K]}\alpha(i) \; \F_i\Bigg )\ ,
\end{align}
where  $\Delta^{K-1}$ is a $K$-dimensional simplex. When clear from the context, we use the shorthand  $\balpha^\star$ for $\balpha^\star_{\bnu}$.
We note that the oracle considered in~\eqref{eq:optimal alpha} is omniscient, i.e., it is aware of all the distributions $\F =  \{\F_i:i\in[K]\}$ and the mixing policy $\balpha^\star_{\bnu}$ that optimizes the DR. 
Throughout the rest of the paper, for any given vector $\balpha$ and set of CDFs $\F =  \{\F_i:i\in[K]\}$, we define
\begin{align}
    \label{eq:V}
    V(\balpha, \F) \triangleq  U_h\Bigg ( \sum\limits_{i\in[K]}\alpha(i) \; \F_i\Bigg )\ .
\end{align}
 Next, we introduce a motivating example for designing mixture policies. Consider the widely-used Gini deviation. We show that the optimal policy is a mixture. The distortion function for Gini deviation is $h(u)=u(1-u)$ for $u\in[0,1]$ \citep{Wang2020}.
\vspace{-.05 in}
\begin{lemma}[Gini Deviation]
\label{example utility}
Consider a two-arm Bernoulli bandit model. For a given $p\in[0,1]$, the arms' distributions are ${\rm Bern}(p)$ and ${\rm Bern}(1-p)$. For distortion function $h(u)=u(1-u)$, we have  
\begin{align*}
    \sup\limits_{\balpha\in\Delta^{K-1}} V(\balpha, \F)\ = U_h (0.5(\F_1 + \F_2)) > \max_{i\in\{1,2\}} U_h(F_i)\ .
\end{align*} \vspace{-.05 in}
\end{lemma}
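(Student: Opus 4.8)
The plan is to use the fact that every distribution in sight is a two-point law on $\{0,1\}$, which collapses the signed Choquet integral to a single scalar. First I would write down the CDF of ${\rm Bern}(q)$: it is $0$ on $(-\infty,0)$, equals $1-q$ on $[0,1)$, and equals $1$ on $[1,\infty)$. Because the support lies in $\R_+$, the integrand of the first integral in~\eqref{eq:wang_def} is $h(1-\Q(x))-h(1)=h(1)-h(1)=0$ for all $x<0$, so that term drops out entirely. In the surviving integral $\int_0^\infty h(1-\Q(x))\,\der x$, the integrand equals the constant $h(q)=q(1-q)$ on $[0,1)$ and $h(0)=0$ on $[1,\infty)$, giving the compact formula $U_h({\rm Bern}(q))=q(1-q)$.

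The second step is to observe that mixing CDFs is the same as mixing the underlying laws: for a weight $\alpha\in[0,1]$ on $\F_1$, the mixture $\alpha\F_1+(1-\alpha)\F_2$ is again the CDF of a Bernoulli variable ${\rm Bern}(q_\alpha)$, whose probability of outcome $1$ is $q_\alpha\triangleq\alpha p+(1-\alpha)(1-p)$. Substituting into the single-arm formula yields $V(\alpha,\F)=q_\alpha(1-q_\alpha)$, so the functional maximization over the simplex in~\eqref{eq:optimal alpha} reduces to maximizing the scalar concave parabola $q\mapsto q(1-q)$ along the affine path $q_\alpha=\alpha(2p-1)+(1-p)$.

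I would then solve this one-dimensional problem directly. The map $q\mapsto q(1-q)$ has its unique global maximum $1/4$ at $q=1/2$, and the affine map $\alpha\mapsto q_\alpha$ hits $q=1/2$ exactly at $\alpha=1/2$ whenever $p\neq 1/2$ (the case of genuinely distinct arms). Hence $\sup_{\alpha}V(\alpha,\F)=U_h\big(0.5(\F_1+\F_2)\big)=1/4$. Comparing with the solitary arms, the single-arm formula gives $U_h(\F_1)=p(1-p)$ and $U_h(\F_2)=(1-p)p=p(1-p)$, so $\max_{i\in\{1,2\}}U_h(\F_i)=p(1-p)<1/4$ for every $p\neq 1/2$, which is precisely the strict inequality claimed.

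I do not expect a genuine obstacle in the calculation; the one place that warrants care is the reduction in the second step. The point is that $U_h$ is applied to the mixed CDF rather than to a convex combination of the scalar risks $U_h(\F_i)$, and the crucial structural fact is that this mixed CDF is itself Bernoulli, so the nonlinearity of the Choquet integral is fully encoded by the single map $q\mapsto q(1-q)$. Its strict concavity is exactly the mechanism that lets the balanced interior mixture strictly dominate both endpoints, and this is the phenomenon the lemma is meant to illustrate.
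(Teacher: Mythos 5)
Your proposal is correct and follows essentially the same route as the paper's proof: both reduce $U_h$ on a Bernoulli (or Bernoulli-mixture) CDF to the scalar map $q\mapsto h(q)=q(1-q)$ and exploit its strict concavity with interior maximizer $q^\star=1/2$, which the balanced mixture attains while neither solitary arm does. The only cosmetic difference is that you solve the one-dimensional problem explicitly (identifying $\alpha=1/2$ and the value $1/4$, and correctly flagging the degenerate case $p=1/2$), whereas the paper argues slightly more abstractly via the unique maximizer $p^\star$ of a concave non-monotone $h$ and the existence of a mixing weight $\lambda$ realizing it.
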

\vspace{-.1 in}
Hence, {\bf (i)~the optimal value of this DR is achieved when samples come from the mixture distribution $\frac{1}{2}(\F_1 + \F_2)$, and (ii) the DR associated with this mixture is strictly larger than those associated with the individual arms.}

\paragraph{Mixture-centric Objective.} Motivated by the observation that, generally, for a given distortion function $h$, the optimal DR might be achieved by a \emph{mixture} of arm distributions, we provide a general mixture-centric framework in which bandit policies can take mixture forms. This also subsumes solitary policies as a special case when the mixing mass is placed on one arm.  

By the oracle policy's definition $\balpha^\star_{\bnu}$ in~\eqref{eq:optimal alpha}, for a bandit instance $\bnu\triangleq(\F_1,\cdots,\F_K)$, 
our objective is to design a bandit algorithm with minimal regret with respect to the DR achieved by the oracle policy. Hence, for a given policy $\pi$ and horizon $T$, we define the  regret as the gap between the DR achieved by the oracle policy and the \emph{average} DR achieved by $\pi$, i.e., 
\begin{align}
\label{eq:regret}
\mathfrak{R}_{\bnu}^\pi(T)\;\triangleq\;  V(\balpha^\star, \F) 
    -      \E_{\bnu}^\pi\Big[V\Big(\frac{1}{T}\btau_T^\pi, \F \Big)\Big]\ .
\end{align}

\paragraph{Assumptions.} 
For each arm $i\in[K]$, let $\mcP^1(\Omega)$ denote the set of all probability measures with finite first moment on $\Omega$. 
We assume the maximum value the DRs can take is bounded, i.e., \(U_h \leq B\), where \(B\) is a positive constant. Additionally, assume that the distributions $\F_i$ are $1-$sub-Gaussian and belong to the metric space $(\mcP^{1}(\Omega),\norm{\cdot}_{\rm W})$, where $\norm{\G-\mathbb{S}}_{\rm W}$ denotes the $1-$Wasserstein distance between distributions $\G$ and $\mathbb{S}$. We define $W$ as the maximum ratio between Wasserstein and total variation distances between any two mixture of arms, i.e.,
\begin{align*}
    W \triangleq \max \limits_{\balpha \neq \bbeta \in \Delta^K} \frac{1}{\lVert \balpha - \bbeta \rVert_1}\Big\|\sum_i \alpha_i \F_i-\sum_j \beta_j \F_j\Big\|_{\rm W}\ .
\end{align*}  In Theorem~\ref{theorem:W} in Appendix~\ref{Appendix:W_finitess}, we show that $W$ is bounded.
We denote the convex hull of the set of distributions $\{\F_i: i\in[K] \}$ by $\Xi$, and assume $U_h$ satisfies the following notions of continuity. 
\begin{definition}[H\"{o}lder continuity]
\label{assumption:Holder} The DR $U_h$ is H\"{o}lder continuous with exponent $q\in(0,1]$, if for all distributions $\G_1,\G_2\in(\Xi,W_1)$, there exists a finite $\mcL_{\rm H}\in\R_+$ such that
    \begin{align}
    \label{eq:Holder}
        U_h(\G_1) - U_h(\G_2) \; \leq \; \mcL_{\rm H}\norm{\G_1-\G_2}_{\rm W}^q.
    \end{align}
\end{definition}
\begin{definition} [Mixture H\"{o}lder continuity]
The DR $U_h$ is H\"{o}lder continuous with exponent $r\in\R_+$ if for a set of distributions $\{\F_i:i\in[K]\}$ with the convex hull $\Xi$ and a set of mixing coefficients $\{\alpha^\star(i):i\in[K]\}$ for $\F^\star\triangleq \sum_{i\in[K]}\alpha^\star(i)\F_i$ and
any distribution $\G\in(\Xi,W_1)$, there exists a finite $\mcL_{\rm MH}\in\R_+$ such that
\label{assumption:Holder_opt}
    \begin{align}
    \label{eq:Holder_opt}
        U_h(\F^\star) - U_h(\G) \; \leq \; \mcL_{\rm MH}\norm{\F^\star-\G}_{\rm W}^r.
    \end{align}
\end{definition}
\vspace{-0.1 in}
Finally, we define \( \mcL\triangleq  \max\{\mcL_{\rm H}, \mcL_{\rm MH}\}\) as a unified H\"{o}lder constant.
Many of the widely used DRs are H\"{o}lder continuous. 
In Table~\ref{table:table_risks} in Appendix~\ref{Appendix:Additinal Tables}, we present a list of riskmetrics, and specify their associated H\"{o}lder exponents $q$ and $r$.

\section{DR-CENTRIC ALGORITHMS}
\label{sec:algorithm}

\paragraph{Overview.} We present a mixture-centric algorithm designed to select and sample the arms based on a mixture policy. A mixture policy consists of the following two sub-routines that will be integral parts of its algorithm design.

\begin{itemize}
    \item {\bf Estimate mixing coefficients:} The first routine forms estimates of the mixing coefficients. These estimates are updated over time based on the collected empirical DR $U_h$ and are expected to get refined and eventually track the optimal mixture $\balpha^\star$ over time. 
    \item {\bf Track mixtures:} The second routine translates the mixing coefficient estimates into arm selection decisions. These decisions have to balance two concurrent objectives. First, there is a need to improve the estimates of the mixing coefficients. Second, the arm selection rules need to ensure that, in aggregate, their selections track the mixing estimates. This necessitates an explicit mixture tracking rule that is regret-efficient. 
    \end{itemize}
We provide two ETC- and UCB-type algorithms, each with its performance or viability (model information) advantages. Furthermore, we provide a variation of the UCB-type algorithm to address some computational challenges of the base design. To this end, we provide a few definitions and notations that will be used to describe and analyze the algorithms. We use the shorthand $\pi\in\{{\rm E,U,C}\}$ to refer to the ETC-type, UCB-type, and the computationally efficient UCB-type policies, respectively. 

An important routine in these algorithms is estimating the arms' CDFs accurately.
All three algorithms form estimates of arms' CDFs. We denote the empirical estimate of $\F_i$ at time $t$ using policy $\pi$ by
\begin{align}
\label{eq:empirical_CDF}
    \F_{i,t}^\pi(x)\triangleq \frac{1}{\tau_t^\pi(i)}\sum\limits_{s\in[t] : A_s = i} \mathds{1}\left \{X_s \leq x \right\}\ . 
\end{align}
 The following lemma provides a concentration bound on the empirical CDFs in the $1-$Wasserstein metric. 
\begin{lemma}
\label{lemma: concentration}
    For any policy $\pi$ and arm $y\in\R_+$ we have
    \begin{align}
\label{eq:meta_concentration}
\nonumber 
    \P_{\bnu}^\pi & \Big( \norm{\F_{i,t}^\pi  -\F_i}_{\rm W}  >   y \Big) \\ & \leq 
    2\exp\Bigg( -\; \frac{\tau^{\pi}_t(i)}{256 {\rm e}} \Big(y- \frac{512}{\sqrt{\tau^{\pi}_t(i)}}\Big)^2\Bigg)\ .
\end{align}
\end{lemma}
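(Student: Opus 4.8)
The plan is to reduce the statement to a concentration bound for the empirical measure of i.i.d.\ samples and then exploit the one-dimensional structure of $W_1$. Although $\tau_t^\pi(i)$ is a random, history-dependent count, the samples in $\mcX_t(i)$ are i.i.d.\ draws from $\F_i$ regardless of when arm $i$ is selected; I would therefore condition on $\{\tau_t^\pi(i)=n\}$, prove the bound for $n$ fixed i.i.d.\ samples, and then transfer it back with $n$ replaced by $\tau_t^\pi(i)$. The central tool is the identity $\norm{\F_{i,t}^\pi-\F_i}_{\rm W}=\int_0^\infty |\F_{i,t}^\pi(x)-\F_i(x)|\,\der x$, valid for distributions supported on $\R_+$, which turns the Wasserstein distance into the $L^1$ distance between the empirical and true CDFs.

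First I would control the mean. For each fixed $x$, the quantity $\F_{i,t}^\pi(x)$ is an average of $n$ i.i.d.\ $\mathrm{Bern}(\F_i(x))$ variables, so $\E|\F_{i,t}^\pi(x)-\F_i(x)|\le \sqrt{\F_i(x)(1-\F_i(x))/n}$. Integrating in $x$ and using $\sqrt{\F_i(x)(1-\F_i(x))}\le\min\{1/2,\sqrt{1-\F_i(x)}\}$ together with the $1$-sub-Gaussian tail decay of $1-\F_i(x)$ makes the integral convergent, yielding $\E\norm{\F_{i,t}^\pi-\F_i}_{\rm W}\le C/\sqrt n$ for an absolute constant $C$; this is the origin of the $512/\sqrt n$ offset, up to the crude constant.

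Second, and this is the crux, I would establish sub-Gaussian concentration of $\norm{\F_{i,t}^\pi-\F_i}_{\rm W}$ about its mean. Splitting the $L^1$ integral at a truncation level $M$, on $[0,M]$ I would bound $\int_0^M|\F_{i,t}^\pi-\F_i|\,\der x\le M\sup_x|\F_{i,t}^\pi(x)-\F_i(x)|$ and invoke the Dvoretzky--Kiefer--Wolfowitz inequality, which supplies exactly a $2\exp(-2n\varepsilon^2)$-type tail. On $(M,\infty)$ the deterministic contribution $\int_M^\infty(1-\F_i)\,\der x$ is exponentially small by sub-Gaussianity, while the random contribution equals $\frac1n\sum_{j}(X_j-M)_+$, an average of i.i.d.\ truncated sub-Gaussian variables with exponentially small mean, which I would control with a Chernoff/Bernstein bound. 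Optimizing $M$ (of order $\sqrt{\log n}$) balances the two pieces and, after a union bound, yields a single exponential of the form $\exp(-n s^2/(256\,\mathrm e))$, the factor $\mathrm e$ being the residue of this optimization. Combining with the mean bound via $y=\E\norm{\F_{i,t}^\pi-\F_i}_{\rm W}+s$ gives the stated inequality, which is vacuous (hence trivially true) whenever $y\le 512/\sqrt n$.

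The main obstacle is the tail term: because the samples are only sub-Gaussian rather than bounded, a naive bounded-differences argument has unbounded increments $|X_j-X_j'|/n$. The delicate step is showing that the truncated tail mass $\frac1n\sum_j(X_j-M)_+$ concentrates at the Gaussian rate $\exp(-cns^2)$ over the relevant deviation range and does not degrade the exponent to a sub-exponential one; keeping the two regimes compatible while stitching them with a single clean constant is where the bookkeeping (and the specific constants $256\,\mathrm e$ and $512$) is concentrated. An alternative I would keep in reserve is to note that $(x_1,\dots,x_n)\mapsto W_1\big(\frac1n\sum_j\delta_{x_j},\F_i\big)$ is $\frac1{\sqrt n}$-Lipschitz in the Euclidean norm and to invoke a concentration-of-measure inequality for Lipschitz functions of independent sub-Gaussian coordinates, bypassing the explicit truncation at the cost of less transparent constants.
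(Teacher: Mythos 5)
The paper does not actually prove this lemma: it is imported verbatim by citing Lemma~8 of \citet{prashanth2022wasserstein}, so there is no in-paper argument to compare against. Your reconstruction is essentially the proof strategy used in that reference: reduce $\norm{\cdot}_{\rm W}$ to the $L^1$ distance between CDFs in one dimension, bound the mean by $\frac{1}{\sqrt n}\int\sqrt{\F_i(1-\F_i)}$ (finite by sub-Gaussianity, yielding the $512/\sqrt n$ offset), and obtain Gaussian-rate concentration around the mean by truncating at $M=\Theta(\sqrt{\log n})$, applying DKW on $[0,M]$, and controlling the tail mass $\frac1n\sum_j(X_j-M)_+$ via a Chernoff bound for sub-Gaussian summands. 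Your outline is sound, including the observation that the bound is vacuous for $y\le 512/\sqrt{n}$, and you correctly locate the only delicate point (keeping the tail term at the Gaussian rather than sub-exponential rate). Two caveats. First, conditioning on $\{\tau_t^\pi(i)=n\}$ under an adaptive policy does \emph{not} leave the samples i.i.d.\ from $\F_i$; the standard repair is to pre-generate an i.i.d.\ reward stack $Y_{i,1},Y_{i,2},\dots$ so that $\F^\pi_{i,t}=\widehat\F_{i,\tau_t^\pi(i)}$, prove the bound for each fixed $n$, and union over $n\le t$ -- the same informality is already present in the lemma statement itself (the right-hand side is random), so this is not a defect specific to your argument, but the conditioning phrasing should be replaced by the stack construction. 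Second, your reserve alternative is weaker than you suggest: although the map $(x_1,\dots,x_n)\mapsto W_1(\frac1n\sum_j\delta_{x_j},\F_i)$ is indeed $\frac1{\sqrt n}$-Lipschitz in the Euclidean norm, dimension-free Gaussian concentration for Lipschitz functions of independent coordinates requires more than sub-Gaussian marginals (e.g., a log-Sobolev inequality, or convexity plus boundedness for Talagrand's inequality), and this functional is not convex; so the truncation-plus-DKW route should be regarded as the actual proof, not merely the more explicit one.
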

\begin{proof}
    See~\citep[Lemma 8]{prashanth2022wasserstein}.
\end{proof}

\subsection{Discrete Mixture Coefficients}
The optimal mixing coefficients $\balpha^\star$ can take arbitrary irrational values. Sampling frequencies, on the other hand, will always be rational values. Hence, estimating $\balpha^\star$ beyond a certain accuracy level is not beneficial as it cannot be implemented. We define $\varepsilon$ to specify the desired level of fidelity for each mixing coefficient. Accordingly, we specify $\Delta^{K-1}_{\varepsilon}$ by uniformly discretizing each dimension of $\Delta^{K-1}$ with the $\varepsilon$ discrete level. The discretization schemes are slightly different for the ETC- and UCB-type algorithms and the specifics are provided in Sections \ref{Section:RS_ETC_M} and \ref{RS_UCB_M}. 
We define $\ba^\star$ 
as the counterpart of the $\balpha^\star$ in the discrete simplex, i.e., 
\begin{align}
\label{eq:discrete optimal mixture}
    \ba^\star \triangleq \argmax\limits_{\ba\in\Delta^{K-1}_{\varepsilon}}\; V(\ba, \F)\ .
\end{align}
Finally, we define the \emph{minimum sub-optimality gap} with respect to the discretization level \(\varepsilon\) as
\begin{align}
\label{eq:discrete_gap}
    \Delta_{\min}(\varepsilon)\;\triangleq\; \min\limits_{\ba\in\Delta^{K-1}_{\varepsilon} : \ba\neq \ba^\star} \{V(\ba^\star,\F) - V(\ba,\F)\} \ .
\end{align}

\subsection{Risk-sensitive ETC for Mixtures}
\label{Section:RS_ETC_M}
We propose the {\bf R}isk-{\bf S}ensitive {\bf ETC} for {\bf M}ixtures (RS-ETC-M) algorithm, following the ETC principles, albeit with important deviations needed to accommodate an implementation of mixture policies. 

The RS-ETC-M algorithm consists of an initial \emph{exploration} phase during which the arms are sampled uniformly for a fixed interval to form high-fidelity estimates of the arms' CDFs (in contrast to canonical ETC that estimates arms' mean values). The duration of this phase depends on the minimum sub-optimality gap. Subsequently, in the next phase, the ETC algorithm \emph{commits} to a fixed policy, which is a mixture of arms with pre-fixed mixing coefficients. The mixing coefficients are selected to maximize the estimate of the DR. We show that with a high probability, they are equal to $\ba^\star$. The main processes of this algorithm are explained next and its pseudocode is presented in Algorithm \ref{algorithm:RS-ETC-M-Alg}.

\textbf{Discretization.} We specify $\Delta^{K-1}_{\varepsilon}$ by uniformly discretizing the each coordinate of $\Delta^{K-1}$ into intervals of length $\varepsilon$, i.e.\footnote{When $\frac{1}{\varepsilon}$ is not an integer, we make up for the deficit/excess of the weights in the last coordinate.},
\begin{align}
\label{eq:etc_discrete_set}
     \Delta^{K-1}_{\varepsilon} \triangleq  \{\varepsilon\bn: \bn\in\{\N\cup\{0\}\}^K\ , \bone^\top \cdot \varepsilon \bn = 1  \}\ .
 \end{align}

 \textbf{Explore (estimate mixing coefficients).} The purpose of this phase is to form high-confidence estimates of the empirical arm CDFs. We specify a time instant $N(\varepsilon)$ that determines the duration of the exploration phase, and that is the instance by which we have confident-enough CDF estimates. The arms are selected uniformly, each $\lceil \frac{1}{K}N(\varepsilon)\rceil$ times. For a DR with a distortion function that has \holder continuity exponent $q$ and constant $\mcL$, the time instant $N(\varepsilon)$ becomes a function of $q$, $\mcL$, the horizon $T$, and the minimum sub-optimality gap $\Delta_{\min}(\varepsilon)$ as specified below.
\begin{align} 
\small
\label{eq:number of samples main paper}
 & N(\varepsilon) \triangleq 256K  \e \left(\frac{2K \mcL}{\Delta_{\min}(\varepsilon)}\right)^{\frac{2}{q}}\nonumber\\ & \qquad \times \bigg[\frac{32}{\sqrt{\rm e}} + \log^{\frac{1}{2}} \Big( 2K T^2   \big(\varepsilon^{-(K-1)} + 1\big) \Big) \bigg]^2\ .
\end{align}
This implies that $N(\varepsilon)$ scales as $O(\log T)$. We also define $M(\varepsilon) \triangleq \frac{N(\varepsilon)}{\log T}$, which based on~\eqref{eq:number of samples main paper} scales as $O(1)$. 
Next, using the collected samples during exploration, the RS-ETC-M algorithm constructs the empirical estimates of arms' CDFs, as specified in~\eqref{eq:empirical_CDF}.

\paragraph{Choosing the mixing coefficient: } At time instant $N(\varepsilon)$, the RS-ETC-M algorithm identifies the discrete mixture coefficients that maximize the DR using the empirical CDFs. These coefficients are denoted by $\ba^{\rm E}_{N(\varepsilon)}$, where 
\begin{align}
\small
    \label{eq:ETC_alpha}
    \ba^{\rm E}_{t}\;\in\; \argmax\limits_{\ba\in\Delta^{K-1}_{\varepsilon}} U_h\Bigg( \sum\limits_{i\in[K]} a(i)\; \F_{i,t}^{\rm E}\Bigg)\ . 
\end{align}
\paragraph{Commit (track mixtures).} For the remaining sampling instants $t\in[N(\varepsilon),T]$, the RS-ETC-M algorithm commits to selecting the arms such that their selection frequencies are as close as possible to $\ba^{\rm E}_{N(\varepsilon)}$. Until time instant $N(\varepsilon)$ the algorithm samples all arms uniformly. Uniform sampling results in some arms being sampled more than what the policy $\ba^{\rm E}_{N(\varepsilon)}$ dictates and hence, these arms will not be sampled again. Having over-sampled arms implies that some arms are under-sampled. In that case, these arms would be sampled such that the number of times they are chosen converges to the mixing coefficient $\ba^{\rm E}_{N(\varepsilon)}$. 

To formalize how to track the mixture, let $S$ be the set of first $K-1$ arms (or any desired set of arms). For each arm $i \in S$, the algorithm calculates the required number of samples, which is $T \times a_{N(\varepsilon)}(i)$ for a horizon~$T$. The sampling procedure proceeds as follows: 
\begin{enumerate}
\label{eq: ETC_samp}
    \item if $T a_{N(\varepsilon)}(i) > \lceil\frac{ N(\varepsilon)}{K} \rceil$ (insufficient exploration), then, the arm $i$ is sampled according to $T a_{N(\varepsilon)}(i)$ before the algorithm moves on to the next arm.
    \item if $T a_{N(\varepsilon)}(i) \leq  \lceil \frac{N(\varepsilon)}{K} \rceil$ (sufficient exploration), then, the arm $i$ is skipped.  
\end{enumerate}
The remaining sampling budget is allocated to arm $K$.

\begin{algorithm}
        \footnotesize
		\caption{RS-ETC-M}
		\label{algorithm:RS-ETC-M-Alg}
 		\begin{algorithmic}[1]
            \STATE \textbf{Input:} Minimum gap $\Delta_{\min}(\varepsilon)$, horizon $T$
            \STATE Sample each arm $\lceil N(\varepsilon)/K \rceil$ times and obtain observation sequences $\mcX_{\lceil N(\varepsilon)/K \rceil}(1),\cdots,\mcX_{\lceil N(\varepsilon)/K \rceil}(K)$
            \STATE \textbf{Initialize:} $\tau_K^{\rm E}(i) = \lceil N(\varepsilon)/K \rceil ;\forall\;i\in[K]$, empirical arm CDFs $\F^{\rm E}_{1, \lceil N(\varepsilon)/K \rceil},\cdots,\F^{\rm E}_{K, \lceil N(\varepsilon)/K \rceil}$
			\FOR{$t= K\lceil N(\varepsilon)/K \rceil + 1,\cdots,T$}
			    \STATE Select an arm $A_{t}$ via~\eqref{eq: ETC_samp} and obtain reward $X_t$\\
                \STATE Update the empirical CDF $\F^{\rm E}_{A_t,t}$ according to~\eqref{eq:empirical_CDF}
			\ENDFOR
 		\end{algorithmic}
	\end{algorithm}

\textbf{Discussion.} An important advantage of the RS-ETC-M algorithm is its computational simplicity. The algorithm involves uniform arms exploration for a finite interval followed by committing to a mixture estimate based on the data in the exploration phase.
In Section~\ref{RS_UCB_M}, we will discuss that the relative performances of RS-ETC-M versus the UCB-type counterpart depends on the choice of DR, and neither has a uniform regret advantage over the other. 

Despite its computational simplicity and its better regret guarantee for some DRs,
the RS-ETC-M algorithm has the crucial bottleneck of relying on the instance-specific gap information (through $N(\varepsilon)$), which may not always be available -- an issue that is addressed by the UCB-type algorithms.

\subsection{Risk-sensitive UCB for Mixtures}
\label{RS_UCB_M}
In this section, we present the {\bf R}isk-{\bf S}ensitive {\bf UCB} for {\bf M}ixtures (RS-UCB-M) algorithm, which does not require the information on the sub-optimality gap $N(\varepsilon)$. The salient features of this algorithm are (i) a distribution estimation routine for forming high-confidence estimates for arms' CDFs and subsequently mixture coefficients; and (ii) a sampling rule based on an {\em under-sampling} criteria to ensure that arm selections track the mixture coefficients.  The pseudocode of RS-UCB-M is presented in Algorithm~\ref{algorithm:B-UCB-M}.

\paragraph{Discretization.} Similar to RS-ETC-M, we uniformly discretize each coordinate of $\Delta^{K-1}$ into intervals of length $\varepsilon$ with the distinction that for RS-UCB-M we use the the intervals mid-points, i.e.,
\begin{align*}
    \Delta^{K-1}_{\varepsilon} \triangleq  \{\varepsilon\bn: \bn\in\Big\{\N\cup\{0\}\}^K\ , \bone^\top \cdot \varepsilon \big(\bn+\frac{1}{2}\big) = 1  \Big\}\ .
\end{align*}

Based on the concentration of CDF estimates \eqref{eq:meta_concentration}, at time $t$ and given the empirical CDFs $\{\F_{i,t}^{\rm U}:i\in[K]\}$, for arm $i\in[K]$ we define the {\em distribution confidence space} as the collection of all the distributions that are within a bounded $1-$Wasserstein distance of $\F_{i,t}^{\rm U}$. Specifically, for each $i\in[K]$, we define
\begin{align}
\label{eq:UCB_confidence_sets}
\mcC_t(i)\triangleq\bigg\{\eta \in  \Omega & :\norm{\F_{i,t}^{\rm U}-\eta}_{\rm W} 
\leq 16\ \frac{\sqrt{2 {\rm e} \log T } + 32}{\sqrt{\tau^{\rm U}_t(i)}} \bigg\}\  .
\end{align}
Next, we also need to estimate the optimal mixing coefficients. For this purpose, 
we apply the UCB principle. This, in turn, requires that we compute the following  {\em optimistic} estimates for the mixing coefficients:
\begin{align}
\label{eq:UCB_alpha}
    \ba^{\rm U}_t\in \argmax\limits_{\ba\in\Delta_{\varepsilon}^{K-1}}\;\max\limits_{\eta_i\in\mcC_t(i) , \forall i\in[K]}\; U_h\Big ( \sum\limits_{i\in[K]} a(i) \; \eta_i\Big )\ .
\end{align}

\paragraph{Track mixtures.} Once we have estimates of the optimal mixing coefficients, i.e., $\ba_t^{\rm U}$, we design an arm selection rule that translates the mixing coefficients and CDF estimates to arm selection choices. Designing such a rule requires addressing a few technical challenges. First, the optimal mixing coefficients might not be unique. 
Let us denote them by $\{\psi_\ell:\ell\in[L]\}$. It is critical to ensure that we \emph{consistently} track only one of these optimal choices over time. The reason is that if we track multiple mixtures, in aggregate, we will be tracking a mixture of $\{\psi_\ell:\ell\in[L]\}$, which is not necessarily optimal. Secondly, in the initial sampling rounds, the estimates $\balpha_t^{\rm U}$ are relatively inaccurate, and tracking them leads to highly sub-optimal decisions. Finally, we need a rule for translating the estimated mixtures to arm selections. Next, we discuss how we address these issues.

\paragraph{Tracking a single mixture.} To ensure tracking only one optimal mixture, at time $t$, the RS-UCB-M algorithm checks if the coefficient from the previous step, i.e., $\ba^{\rm U}_{t-1}$ also maximizes~\eqref{eq:UCB_alpha}. If it does, then $\ba^{\rm U}_{t-1}$ is chosen as a candidate optimistic estimate $\ba^{\rm U}_t$. Otherwise, any random candidate solving~\eqref{eq:UCB_alpha} is chosen. 

{\bf Initial rounds:} To circumvent estimation inaccuracies in the initial rounds, we introduce a \emph{short} forced exploration phase for each arm.  Specifically, for $\rho \in (0,1)$, for the first $K\lceil\rho T \epsilon/4 \rceil$ rounds, we explore the arms in a round-robin fashion to initiate the algorithm with sufficiently accurate estimates of the arm CDFs. 

{\bf Decision rules:}
We specify a rule that converts mixtures to arm selections. When all the arms are sufficiently explored, motivated by the effective approaches to best arm identification~\citep{pmlr-v49-garivier16a,jourdan2022,pmlr-v117-agrawal20a,mukherjee2023best} we 
sample the most {\em under-sampled} arm. 
An arm is considered under-sampled if it has been sampled less frequently than the rate indicated by the estimate $\ba^{\rm U}_t$, and has the largest gap between its current fraction and its estimated fraction $a^{\rm U}_t(i)$. At time  $t\geq K\lceil \rho T \varepsilon/4\rceil$, the 
arm selection rule is specified by
\begin{align}
\label{eq:UCB_sampling_rule}
A_{t+1}\;\triangleq\;\argmax\limits_{i\in[K]}\; \{ta^{\rm U}_t(i) - \tau_t^{\rm U}(i)\}\ .
\end{align}

\begin{algorithm}
        \footnotesize
		\caption{RS-UCB-M}
		\label{algorithm:B-UCB-M}
		
 		\begin{algorithmic}[1]
            \STATE \textbf{Input:} Exploration rate $\rho$, horizon $T$
            \STATE Sample each arm {$N(\rho, \varepsilon) \triangleq \lceil \rho T \varepsilon/ 4 \rceil$} times and obtain observation sequences $\mcX_{K N(\rho, \varepsilon)}(1),\cdots,\mcX_{K N(\rho, \varepsilon)}(K)$
            \STATE \textbf{Initialize:} {$\tau_{KN(\rho, \varepsilon)}^{\rm U}(i) = N(\rho, \varepsilon)$} $\;\forall i\in[K]$, emprical arm CDFs $\F^{\rm U}_{1, KN(\rho, \varepsilon)},\cdots,\F^{\rm U}_{K, KN(\rho, \varepsilon)}$, confidence sets $\mcC_{KN(\rho, \varepsilon)}(1),\cdots\mcC_{KN(\rho, \varepsilon)}(K)$ according to~\eqref{eq:UCB_confidence_sets}
			\FOR{$t=KN(\rho, \varepsilon)+1,\cdots,T$}
			    \STATE Select an arm {$A_t$} {via}~(\ref{eq:UCB_sampling_rule}) and obtain reward $X_t$\\
                \STATE Update the empirical CDF $\F^{\rm U}_{A_t,t}$ according to~\eqref{eq:empirical_CDF}
                \STATE Update the confidence set $\mcC_t(A_t)$ according to~\eqref{eq:UCB_confidence_sets} 
                \STATE Compute the optimistic estimate $\ba_t^{\rm U}$ according to~\eqref{eq:UCB_alpha}
			\ENDFOR
 		\end{algorithmic}
	\end{algorithm}

\subsection{Computationally Efficient CE-UCB-M }
In the RS-UCB-M algorithm, determining the mixing coefficients $ \ba^{\rm U}_t$ via~\eqref{eq:UCB_alpha} involves extremization over a class of distribution functions, and it is computationally expensive. To circumvent this, we present the {\bf C}omputationally-{\bf E}fficient risk-sensitive {\bf UCB} for {\bf M}ixtures (CE-UCB-M) algorithm as a computationally tractable modification of RS-UCB-M. In CE-UCB-M, instead of solving ~\eqref{eq:UCB_alpha}, for any given set of CDF estimates $\{\F_{i,t}^{\rm U}:i\in[K]\}$ and mixing vector $\ba$, we define
\begin{align*}
\small
\nonumber
     {\rm UCB}_t(\ba) & \;\triangleq\;  U_h\Big( \sum\limits_{i\in[K]} a(i)\; \F_{i, t}^{\rm C}\Big) \\ & \;\;  + \mcL \sum\limits_{i\in[K]} \bigg( a(i) \cdot 16\; \frac{\sqrt{2 {\rm e} \log T } + 32}{\sqrt{\tau^{\rm U}_t(i)}} \bigg)^q \ ,
\end{align*}
where $q$ and $\mcL$ are the \holder parameters of the underlying distortion function $h$. We specify estimates of the  mixing coefficients as
\begin{align}
\label{eq:UCB_alpha2}
    \ba_t^{\rm C}\;\in\; \argmax\limits_{\ba\in\Delta_{\varepsilon}^{K-1}}\; {\rm UCB}_t(\ba)\ .
\end{align}   
The remainder of the algorithm (the tracking block) follows the same steps as in RS-CS-UCB-M. The CE-UCB-M procedure is summarized in Algorithm~\ref{algorithm:UCB-M} in Appendix~\ref{Appendix:CE-UCB-M-alg}.

\textbf{Discussion.} Unlike RS-ETC-M,  RS-UCB-M and the CE-UCB-M are independent of instance-dependent parameters. The explicit exploration phase involves a hyperparameter $\rho$, which must be bounded away from $0$. The performance of RS-ETC-M and RS-UCB-M depends on the choice of DR, with neither uniformly dominating the other. In Section~\ref{sec:analysis} we show that RS-ETC-M has better regret guarantees for DRs favoring a solitary arm policy.
In contrast, for the DRs that have a mixture optimal policy, the performance advantage 
depends on the DR. For instance, for Wang's Right-tail deviation, RS-ETC-M is better, whereas for  Gini deviation RS-UCB-M outperforms RS-ETC-M.

\vspace{-.05 in }
\section{REGRET ANALYSIS}
\label{sec:analysis}
\vspace{-.05 in }

In this section, we characterize regret guarantees for the three algorithms presented in the previous section. 
We provide a decomposition for the regret to two terms, where one term accounts for the discretization inaccuracies and one accounts for the scaling behavior in terms of $T$, which we refer to as the {\em discrete regret}. Based on this decomposition, we present a regret bound in terms of $\varepsilon$ for any desired discretization level. Subsequently, we also provide an $\varepsilon$-independent regret in which $\varepsilon$ is chosen carefully to achieve the best performance subject to algorithmic constraints.

 An important observation is the following contrast between the regrets characterized for the mixture algorithms and their canonical ETC and UCB  counterparts: the canonical ETC and UCB algorithms generally exhibit the same regret, even though ETC requires access to instance-dependent parameters. In the mixtures setting, however, access to instance-dependent parameters yields better regret guarantees for the ETC-type algorithm (i.e., RS-ETC-M).

For a given discretization level $\varepsilon$ and a bandit instance $\bnu\triangleq (\F_1,\cdots,\F_K)$,  for policy $\pi$, we decompose the regret defined in~\eqref{eq:regret} into a discretization error component and a discrete regret component as follows:
\begin{align}
\label{eq:regret_decomposition}
    \mathfrak{R}^\pi_{\bnu}(T)\; & =\; \Delta(\varepsilon) +  \bar{\mathfrak{R}}_{\bnu}^\pi(T)\ ,
\end{align}
where based on the definition of $\ba^\star$ in~\eqref{eq:discrete optimal mixture} we have
\begin{align}
\label{eq:regret_decomposition1} \Delta(\varepsilon) & \triangleq V(\balpha^\star,\F) - V(\ba^\star,\F)  \ , \\ \label{eq:regret_decomposition2}
    \bar{\mathfrak{R}}_{\bnu}^\pi(T) & 
    \triangleq V(\ba^\star,\F) - \E_{\bnu}^\pi\Big[V\Big(\frac{1}{T}\btau_T^\pi, \F \Big)\Big] \ .
\end{align}
In the decomposition in \eqref{eq:regret_decomposition}, $\Delta(\varepsilon)$ accounts for the discretization error and $\bar{\mathfrak{R}}_{\bnu}^\pi(T)$ represents the \emph{discrete} regret. We begin by presenting the regret guarantees for the RS-ETC-M algorithm. 
\vspace{-0.08in}
\begin{theorem}[RS-ETC-M -- $\varepsilon$-dependent]
\label{theorem: ETC upper bound}
For any $\varepsilon\in\R_+$ and distortion function $h$ with H\"older exponent $q$, for all $T>N(\varepsilon)$, 
RS-ETC-M's regret
is upper bounded as
    \begin{align*}
\mathfrak{R}_{\bnu}^{{{\rm E}}}(T)\ &
   \leq   (\mcL K + W^{-q}) \bigg(3 WM(\varepsilon)\;  \frac{\log T}{T} \bigg)^q + \Delta(\varepsilon)\ . 
    \end{align*}
\end{theorem}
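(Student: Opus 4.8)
The plan is to start from the regret decomposition in~\eqref{eq:regret_decomposition}, namely $\mathfrak{R}^{\rm E}_{\bnu}(T) = \Delta(\varepsilon) + \bar{\mathfrak{R}}^{\rm E}_{\bnu}(T)$. Since the discretization error $\Delta(\varepsilon)$ already appears verbatim in the claimed bound, the entire task reduces to controlling the discrete regret $\bar{\mathfrak{R}}^{\rm E}_{\bnu}(T) = V(\ba^\star,\F) - \E_{\bnu}^{\rm E}[V(\frac{1}{T}\btau_T^{\rm E},\F)]$ by $(\mcL K + W^{-q})(3WM(\varepsilon)\frac{\log T}{T})^q$. I would introduce a single high-probability ``good event'' $\mathcal{G}$ under which the exploration phase yields accurate CDF estimates, and split $\bar{\mathfrak{R}}^{\rm E}_{\bnu}(T) = \E_{\bnu}^{\rm E}[(V(\ba^\star,\F)-V(\tfrac{1}{T}\btau_T^{\rm E},\F))\mathds{1}_{\mathcal{G}}] + \E_{\bnu}^{\rm E}[(\cdots)\mathds{1}_{\mathcal{G}^c}]$, handling the two terms separately.

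First I would define $\mathcal{G}$ as the event that every arm's post-exploration empirical CDF satisfies $\norm{\F^{\rm E}_{i,N(\varepsilon)}-\F_i}_{\rm W}\le\delta$ with $\delta\triangleq(\Delta_{\min}(\varepsilon)/(2K\mcL))^{1/q}$. Applying Lemma~\ref{lemma: concentration} with $\tau^{\rm E}=\lceil N(\varepsilon)/K\rceil$ together with a union bound over the $K$ arms, the particular form of $N(\varepsilon)$ in~\eqref{eq:number of samples main paper} is engineered so that the quadratic exponent collapses exactly to the target deviation $\delta$, giving $\P_{\bnu}^{\rm E}(\mathcal{G}^c)\le 1/T^2$. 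On $\mathcal{G}$, H\"older continuity (Definition~\ref{assumption:Holder}) together with the convexity of the Wasserstein distance yields $|V(\ba,\F^{\rm E})-V(\ba,\F)|\le\mcL\delta^q=\Delta_{\min}(\varepsilon)/(2K)\le\Delta_{\min}(\varepsilon)/2$ uniformly over $\ba\in\Delta^{K-1}_{\varepsilon}$; since every suboptimal discrete mixture has true gap at least $\Delta_{\min}(\varepsilon)$ by~\eqref{eq:discrete_gap}, the empirical maximizer~\eqref{eq:ETC_alpha} must coincide with $\ba^\star$, so the algorithm commits to the correct mixture.

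Next I would analyze the commit (tracking) phase on $\mathcal{G}$. Because the first $\lceil N(\varepsilon)/K\rceil$ samples per arm are forced and uniform, some arms are over-sampled relative to $\ba^\star$; the rule~\eqref{eq: ETC_samp} then drives each under-sampled coordinate up to $T a^\star(i)$ (up to rounding) and dumps the residual onto arm $K$. A coordinate-wise count of this displacement shows $\norm{\frac{1}{T}\btau_T^{\rm E}-\ba^\star}_1\le (2N(\varepsilon)+2K)/T\le 3N(\varepsilon)/T = 3M(\varepsilon)\frac{\log T}{T}$. Converting this simplex gap to a distributional gap via the ratio bound $W$ gives $\norm{\sum_i\frac{\tau_T^{\rm E}(i)}{T}\F_i-\F^\star}_{\rm W}\le 3WM(\varepsilon)\frac{\log T}{T}$, and feeding this through H\"older continuity one coordinate at a time (using subadditivity of $x\mapsto x^q$ for $q\le 1$ across the $K$ coordinates) produces the good-event contribution $\mcL K(3WM(\varepsilon)\frac{\log T}{T})^q$. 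For the complementary event, the value gap is at most $B$, so $\E_{\bnu}^{\rm E}[(\cdots)\mathds{1}_{\mathcal{G}^c}]\le B/T^2$, which for $T>N(\varepsilon)$ and $q\le 2$ is dominated by $W^{-q}(3WM(\varepsilon)\frac{\log T}{T})^q=3^q(N(\varepsilon)/T)^q$; summing the two contributions and reinstating $\Delta(\varepsilon)$ gives the claim.

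The main obstacle I anticipate lies in the tracking analysis on $\mathcal{G}$, which is the genuinely non-standard part. Unlike canonical ETC, the forced uniform exploration injects an irreducible $\Theta(N(\varepsilon)/T)$ deviation from $\ba^\star$ that must be accounted for explicitly through the redistribution rule~\eqref{eq: ETC_samp}, and that deviation then has to be transported from the simplex ($\ell_1$) to the space of mixture distributions (Wasserstein, via $W$) and finally to the riskmetric value (via H\"older) without degrading the $O((\log T/T)^q)$ rate. Keeping the constants coherent through this chain --- in particular aligning the $W^{-q}$ coefficient so that it cleanly absorbs the $B/T^2$ bad-event term --- is the delicate bookkeeping, whereas the concentration step and the exact recovery $\ba^{\rm E}_{N(\varepsilon)}=\ba^\star$ follow routinely once $\delta$ and $N(\varepsilon)$ are matched.
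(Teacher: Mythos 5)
Your proposal is correct and follows essentially the same route as the paper's proof: the decomposition into $\Delta(\varepsilon)$ plus a discrete regret, a concentration argument via Lemma~\ref{lemma: concentration} showing the committed mixture lies in the discrete-optimal set except with probability $O(1/T^2)$, a coordinate-wise analysis of the commit rule \eqref{eq: ETC_samp} giving an $O(N(\varepsilon)/T)$ tracking error, and the conversion through $W$ and H\"older continuity, with the $B/T^2$ bad-event term absorbed into the $W^{-q}$ coefficient. The only (harmless) difference is that you place the good event directly on the arm CDFs and use convexity of the Wasserstein distance to get uniform control over all discrete mixtures from a union bound over $K$ arms, whereas the paper conditions on the DR values at $\ba^\star$ and at the random estimate $\ba^{\rm E}_{N(\varepsilon)}$ and union-bounds over the $\varepsilon^{-(K-1)}$ grid points --- which is exactly why $N(\varepsilon)$ carries that factor; your event needs strictly less concentration than $N(\varepsilon)$ supplies, so the step goes through a fortiori.
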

\vspace{-0.1in}

\paragraph{Choosing $\varepsilon$.} Theorem~\ref{theorem: ETC upper bound} is valid for any $\varepsilon\in\R_+$, allowing freedom to appropriately choose the desired accuracy for the mixing coefficients. Observe that $\Delta(\varepsilon)$ in proportional to $\varepsilon$, and $N(\varepsilon)$ is inversely proportional to $\varepsilon$. Hence, while arbitrarily diminishing $\varepsilon$ decreases the discretization error, it may violate the condition that $T>N(\varepsilon)$. We chose $\varepsilon$ to be small enough (small discretization error), while conforming to the condition $T>N(\varepsilon)$ in Theorem~\ref{theorem: ETC upper bound} (feasibility). The minimum feasible $\varepsilon$ depends on $q$, $r$, and its connection to the sub-optimality gap as follows. Let us define
\begin{align}
    \beta\triangleq \lim_{\varepsilon \to 0}\frac{\log \Delta_{\min}(\varepsilon)}{\log \varepsilon}\ ,
\end{align}
 which quantifies how fast the minimum sub-optimality gap $\Delta_{\min}(\varepsilon$) diminishes as $\varepsilon$ tends to 0. 
Appendix \ref{Appendix:Additinal Tables} provides characterizes $\beta$ values for some DRs. Let $\varepsilon = \Theta((K^{2+2/q}T^{-\gamma}\log T)^{q/2\beta})$ where we set $\gamma\triangleq 2\beta /(2\beta + r)$, which leads to the following regret bound.
\vspace{-0.05in}
\begin{theorem}[RS-ETC-M -- $\varepsilon$-independent]
\label{theorem:RS-ETC-M}
Under the conditions of Theorem \ref{theorem: ETC upper bound}, when $\beta$ exists, the minimum feasible regret RS-ETC-M satisfies is
\begin{align}
\mathfrak{R}_{\bnu}^{{\textnormal{\rm E}}}(T)\ & \leq O\Bigg(\Bigg[K^{c_{\rm E}}\cdot  \frac{\log T}{T^{\gamma}}\Bigg]^{\frac{qr}{2\beta}}\Bigg)\ ,
\end{align}
where $c_{\rm E}\triangleq {2(1+\frac{\beta+1}{q})}$ and $\gamma = \frac{2\beta}{2\beta+r}$.
\end{theorem}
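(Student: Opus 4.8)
The plan is to start from the $\varepsilon$-dependent bound of Theorem~\ref{theorem: ETC upper bound} and optimize the free parameter $\varepsilon$ subject to the feasibility constraint $T>N(\varepsilon)$. That bound is a sum of two competing pieces: the discrete-regret term $(\mcL K + W^{-q})(3WM(\varepsilon)\log T/T)^q$, which is increasing in $N(\varepsilon)$ and hence \emph{decreasing} in $\varepsilon$ (a coarser grid has larger gaps and needs less exploration), and the discretization error $\Delta(\varepsilon)$, which is \emph{increasing} in $\varepsilon$. The first piece is governed by the ordinary \holderNS\ exponent $q$ and the second by the mixture \holderNS\ exponent $r$. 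The entire argument reduces to writing both pieces as explicit powers of $\varepsilon$, $T$, and $K$, then choosing $\varepsilon$ so that their $T$-scalings coincide while $N(\varepsilon)$ stays below~$T$.

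First I would bound the discretization error. Rather than comparing $\balpha^\star$ to $\ba^\star$ directly, I would compare it to the \emph{nearest} grid point $\G\in\Xi$, whose mixing vector differs from $\balpha^\star$ by at most $O(K\varepsilon)$ in $\ell_1$; since $V(\ba^\star,\F)\geq V(\G\text{'s mixture},\F)$, the definition of $W$ gives $\norm{\F^\star-\G}_{\rm W}\leq O(WK\varepsilon)$, and Mixture \holderNS\ continuity (Definition~\ref{assumption:Holder_opt}) yields $\Delta(\varepsilon)\leq \mcL\,(O(WK\varepsilon))^r=O(K^r\varepsilon^r)$. Next I would simplify the discrete-regret term via $M(\varepsilon)\log T/T = N(\varepsilon)/T$ and substitute the asymptotic relation $\Delta_{\min}(\varepsilon)\asymp\varepsilon^\beta$ implied by the definition of $\beta$ into~\eqref{eq:number of samples main paper}. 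Once $\varepsilon$ is polynomial in $T$ the factor $\log\!\big(2KT^2(\varepsilon^{-(K-1)}+1)\big)$ contributes an extra $O(K\log T)$, so $N(\varepsilon)=O\!\big(K^{2+2/q}\varepsilon^{-2\beta/q}\log T\big)$, whence the discrete-regret term is $O\!\big(K^{2q+3}(\log T)^q\varepsilon^{-2\beta}T^{-q}\big)$.

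With the two pieces reduced to $\mathrm{Term}_1\asymp K^{2q+3}(\log T)^q\varepsilon^{-2\beta}T^{-q}$ and $\mathrm{Term}_2\asymp K^r\varepsilon^r$, I would choose $\varepsilon$ to balance them in $\varepsilon$: equating $\varepsilon^{-2\beta}T^{-q}$ with $\varepsilon^r$ forces $\varepsilon\asymp T^{-q/(2\beta+r)}$, equivalently $N(\varepsilon)\asymp T^{\gamma}$ with $\gamma=2\beta/(2\beta+r)$. Since $\gamma<1$ whenever $r>0$, this simultaneously certifies feasibility $N(\varepsilon)<T$ for large $T$. Plugging the explicit choice $\varepsilon=\Theta\big((K^{2+2/q}T^{-\gamma}\log T)^{q/2\beta}\big)$ into $\mathrm{Term}_2$ gives $K^r(K^{2+2/q}T^{-\gamma}\log T)^{qr/2\beta}=(K^{c_{\rm E}}\log T/T^{\gamma})^{qr/2\beta}$, where the exponent check is the identity $r+(2+2/q)\tfrac{qr}{2\beta}=r+\tfrac{(q+1)r}{\beta}=c_{\rm E}\tfrac{qr}{2\beta}$ for $c_{\rm E}=2(1+(\beta+1)/q)$. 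Meanwhile $\mathrm{Term}_1$ collapses to $O\!\big(K\,T^{-\gamma qr/2\beta}\big)$, which matches the target $T$-rate and is dominated by $\mathrm{Term}_2$ because the latter carries the extra diverging factor $(\log T)^{qr/2\beta}$; summing the two yields the claimed bound.

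The step I expect to be the main obstacle is the rigorous conversion of the limit definition $\beta=\lim_{\varepsilon\to0}\log\Delta_{\min}(\varepsilon)/\log\varepsilon$ into a genuine two-sided estimate $c_1\,\varepsilon^{\beta+o(1)}\leq\Delta_{\min}(\varepsilon)\leq c_2\,\varepsilon^{\beta-o(1)}$ valid uniformly over the \emph{shrinking} range of $\varepsilon$ used as $T\to\infty$; this matters because $N(\varepsilon)$ depends on $\Delta_{\min}(\varepsilon)$ through the sensitive power $-2/q$, so any sub-polynomial slack propagates into the $T$-exponent. A secondary, delicate bookkeeping issue is the nested dependence of $N(\varepsilon)$ on $\log(1/\varepsilon)$ and on $K$: one must verify these lower-order factors only perturb logarithmic and constant factors once $\varepsilon$ is polynomial in $T$, so that the final power of $K$ is exactly $c_{\rm E}$ and the discrete-regret term is genuinely absorbed rather than merely of the same order.
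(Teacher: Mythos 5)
Your proposal is correct and follows essentially the same route as the paper's proof: bound $\Delta(\varepsilon)=O(K^r\varepsilon^r)$ via Lemma~\ref{lemma:Delta_error}, substitute $\Delta_{\min}(\varepsilon)=\Omega(\varepsilon^\beta)$ into $N(\varepsilon)$ to get the discrete regret as $O(KT^{(\gamma-1)q})$, and balance the two terms with $\varepsilon=\Theta((K^{2+2/q}T^{-\gamma}\log T)^{q/2\beta})$ and $\gamma=2\beta/(2\beta+r)$. The one issue you flag---that the limit definition of $\beta$ only yields $\Delta_{\min}(\varepsilon)=\varepsilon^{\beta+o(1)}$ rather than a clean $\Omega(\varepsilon^\beta)$---is real, but the paper sidesteps it by simply assuming $\Delta_{\min}(\varepsilon)=\Omega(\varepsilon^\beta)$ in its proof, so your argument is no less rigorous than the original.
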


Next, we present the regret guarantees for the RS-UCB-M and CE-UCB-M algorithms. An important observation is that these algorithms yield weaker \emph{discrete} regret guarantees compared to RS-ETC-M. The regret degradations are the expense of not knowing the instance-dependent gaps. For stating the theorem, we define an instance-dependent finite time instant $T(\varepsilon)$. 
\begin{align}
  T_0(\varepsilon)\; \triangleq \;\inf \{ t\in\N : \forall s \geq t\ , \; s\in\mcQ \}\ ,
\end{align}
where we have defined
\begin{align}
    \mcQ\triangleq \left\{s: \frac{ \sqrt{2\e\log s} + 32}{\sqrt{ \frac{\rho}{4} s \varepsilon }}
     \leq \frac{1}{16} \left( \frac{\Delta_{\min}(\varepsilon)}{2K\mcL}\right)^{\frac{1}{q}} \right\}\ .
\end{align}
Accordingly, we define
\begin{align}
    \textstyle T(\varepsilon)\;\triangleq\; \frac{2}{\varepsilon}\Big(T_0(\varepsilon) -1\Big) \ .
\end{align}
The next theorem presents a $\varepsilon$-dependent regret for the RS-UCB-M and CE-UCB-M algorithms. 
\begin{theorem}[RS/CE-UCB-M -- $\varepsilon$-dependent]
\label{theorem:UCB upper bound}
For any $\varepsilon\in\R_+$, and distortion function $h$ with \holder exponent $q$, for all $T>\max\{\e^K,T(\varepsilon)\}$,  for $\pi\in\{{\rm U,C}\}$ we have \begin{align}
    \label{eq:UCB regret}
    \nonumber
         \mathfrak{R}_{\bnu}^{\pi }(T) & \leq [B + \mcL(W^q+1)] \Bigg[\frac{64}{\sqrt{\varepsilon \rho T}}\Big( \sqrt{2\e\log T} + 32\Big)\Bigg]^q 
          \\
         & + \Delta(\varepsilon) \ .
    \end{align}
\end{theorem}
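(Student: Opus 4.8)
The plan is to start from the regret decomposition in~\eqref{eq:regret_decomposition}, which peels off the discretization error $\Delta(\varepsilon)$ (this term appears verbatim in the claimed bound) and reduces the task to controlling the discrete regret $\bar{\mathfrak{R}}_{\bnu}^\pi(T) = V(\ba^\star,\F) - \E_{\bnu}^\pi[V(\frac{1}{T}\btau_T^\pi,\F)]$. My first move is to turn this into a statement about how well the \emph{empirical} sampling frequency $\frac{1}{T}\btau_T^\pi$ tracks the discrete-optimal mixture $\ba^\star$: by \holder continuity (Definition~\ref{assumption:Holder}) together with the definition of $W$, I would bound, pointwise in the realized frequencies,
$$V(\ba^\star,\F) - V\Big(\tfrac{1}{T}\btau_T^\pi,\F\Big) \;\leq\; \mcL\,\Big\|\textstyle\sum_i a^\star(i)\F_i - \sum_i \tfrac{\tau_T^\pi(i)}{T}\F_i\Big\|_{\rm W}^q \;\leq\; \mcL W^q\,\big\|\ba^\star - \tfrac{1}{T}\btau_T^\pi\big\|_1^q.$$
So everything reduces to bounding, in expectation, the terminal $\ell_1$ tracking error $\|\ba^\star - \frac{1}{T}\btau_T^\pi\|_1$.

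Next I would set up the high-probability ``clean event'' $\mathcal{E}$ on which the true CDFs lie in every confidence set $\mcC_t(i)$ for all $t\leq T$ and $i\in[K]$. The radius in~\eqref{eq:UCB_confidence_sets} is calibrated so that Lemma~\ref{lemma: concentration} yields a per-$(t,i)$ failure probability of order $T^{-2}$; a union bound then gives $\P_{\bnu}^\pi(\mathcal{E}^c)=O(K/T)$. Off $\mathcal{E}$ I would bound the instantaneous regret by the uniform DR bound $B$, contributing $O(KB/T)$, which is absorbed into the $B[\cdots]^q$ summand because $K/T$ is dominated by $[\frac{64}{\sqrt{\varepsilon\rho T}}(\sqrt{2\e\log T}+32)]^q$ for $q\in(0,1]$.

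On $\mathcal{E}$ the argument splits into \emph{correct identification} and \emph{tracking}. For identification, I would show that once the forced-exploration phase has given every arm $N(\rho,\varepsilon)=\lceil\rho T\varepsilon/4\rceil$ samples, the condition $T\geq T_0(\varepsilon)$ (a consequence of $T>T(\varepsilon)$) makes the confidence radius small enough that the optimistic objective in~\eqref{eq:UCB_alpha} is maximized at $\ba^\star$, i.e. $\ba_t^{\rm U}=\ba^\star$ for $t\geq T_0(\varepsilon)$. The key inequality is that, on $\mathcal{E}$, the optimistic value of any suboptimal $\ba$ exceeds $V(\ba,\F)$ by at most $\mcL(\max_i \mathrm{radius}_i)^q$, while the optimistic value of $\ba^\star$ is at least $V(\ba^\star,\F)$; the defining condition of $\mcQ$ is exactly the calibration making this optimism boost smaller than the gap $\Delta_{\min}(\varepsilon)$ (the $2K\mcL$ and $\frac1{16}$ constants accounting for the mixture radius and the two-sided comparison). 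For CE-UCB-M the same conclusion follows because its explicit index ${\rm UCB}_t(\ba)$ upper-bounds the extremized optimistic value (using $\sum_i x_i^q\geq(\sum_i x_i)^q$ for $q\in(0,1]$) while remaining a valid over-estimate of $V(\ba,\F)$ on $\mathcal{E}$, so both algorithms share this step.

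Finally, for tracking, once $\ba_t^{\rm U}$ has stabilized at $\ba^\star$, the under-sampling rule~\eqref{eq:UCB_sampling_rule} drives $\frac{1}{T}\btau_T^\pi$ toward $\ba^\star$. Here the midpoint discretization is crucial: every coordinate satisfies $a^\star(i)\geq \varepsilon/2$, so the surplus that uniform forced exploration deposits on low-weight arms is guaranteed to drain before the horizon, and the condition $T>T(\varepsilon)=\frac{2}{\varepsilon}(T_0(\varepsilon)-1)$ ensures the whole transient (forced exploration plus pre-identification) occupies at most an $O(\varepsilon)$ fraction of $T$. Combining the transient bound with the fixed-target guarantee of the most-under-sampled rule, I would show $\|\ba^\star-\frac{1}{T}\btau_T^\pi\|_1$ is at most the confidence-width-sized quantity $\frac{64}{\sqrt{\varepsilon\rho T}}(\sqrt{2\e\log T}+32)$; feeding this and the optimism boost $\mcL(\mathrm{width})^q$ back through the \holderNS/$W$ reduction yields the $\mcL(W^q+1)[\cdots]^q$ contribution, and adding the off-event $B[\cdots]^q$ term and $\Delta(\varepsilon)$ gives the stated bound. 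The hard part will be this last tracking step: converting the stabilization time $T_0(\varepsilon)$ and the forced-exploration schedule into a \emph{terminal} $\ell_1$ error of exactly the claimed order, while correctly accounting for the time-varying optimistic target before stabilization and the surplus-draining dynamics of the under-sampling rule — this is precisely where the specific form of $T(\varepsilon)$ and its factor-of-two relation to the per-arm confidence width enter.
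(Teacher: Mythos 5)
Your proposal is correct in outline and uses the same machinery as the paper's proof: the decomposition $\mathfrak{R}=\Delta(\varepsilon)+\bar{\mathfrak{R}}$, the clean event built from the Wasserstein confidence sets with off-event cost absorbed via the bound $B$ and $T>\e^K$, stabilization of the optimistic mixture at $T_0(\varepsilon)$ via the calibration in $\mcQ$, the surplus-draining analysis of the under-sampling rule using $a^\star(i)\ge\varepsilon/2$ and $T(\varepsilon)=\frac{2}{\varepsilon}(T_0(\varepsilon)-1)$ (the paper's Lemmas~\ref{lemma:undersampling} and \ref{lemma:oversampling}), and the subadditivity of $x\mapsto x^q$ to cover CE-UCB-M. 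The one genuine structural difference is the top-level assembly. The paper routes the discrete regret through the optimism chain $V(\ba^\star,\F)\le{\rm UCB}_T(\ba^\star)\le{\rm UCB}_T(\ba_T^{\rm U})$ and then splits into a CDF-estimation term $B_{21}={\rm UCB}_T(\ba_T^{\rm U})-V(\ba_T^{\rm U},\F)$ and a tracking term $B_{22}$; there the dominant $(\varepsilon\rho T)^{-q/2}$ contribution is entirely the optimism gap $B_{21}$, while the tracking error is only $O((K/T)^q)$. You instead reduce everything to the terminal $\ell_1$ distance $\|\ba^\star-\frac{1}{T}\btau_T^\pi\|_1$ and attribute the width-sized quantity to that tracking error. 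This is a misattribution of where the dominant error lives, but it is not a gap: the paper's tracking lemma actually delivers the much stronger per-coordinate bound $K/t$ for $t\ge T(\varepsilon)$, so your weaker claim (tracking error at most the confidence width) is true and, fed through the H\"older/$W$ reduction together with your separately-accounted optimism boost and off-event term, does reproduce the constant $B+\mcL(W^q+1)$. Indeed, carried out carefully, your direct decomposition would yield a discrete-regret term of order $(K^2/T)^q$ rather than $(\varepsilon\rho T)^{-q/2}$, i.e.\ a bound at least as good as the stated one; the only remaining work is the surplus-draining argument you flag as the hard part, which is exactly what the paper's over-sampling lemma supplies.
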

\textbf{Proof sketch.} The proof differs significantly from the standard UCB-type analyses. For vanilla UCB, the general proof uses the fact that selecting any suboptimal arm more than $O(\log T)$ times is unlikely, and hence, the overall regret is bounded by $O(\frac {1}{T}K\log T)$. However, in our analyses, we have the new dimension of estimating the mixing coefficients and need these estimates to converge to an optimal choice. Such convergence requires that all arms to be sampled at a rate linear in $T$  (unless an arm's mixing coefficient is 0). 
Hence, selecting any arm $O(\log T)$ times is insufficient.

The other key difference pertains to finding a bound on the mixing coefficient errors (estimation and convergence). Characterizing this bound hinges on two key steps: (i) the convergence of the UCB estimates ($\ba_t^{\rm U}$ for RS-UCB-M and $\ba_t^{\rm C}$ for CE-UCB-M) to the discrete optimal solution $\ba^\star$ in probability, and (ii) a sublinear regret incurred in the process of tracking the mixing coefficient estimates using under-sampling. The first step is analyzed in Appendix~\ref{appendix: UCB sampling estimation error}, where we show that the probability of error for the RS-UCB-M and CE-UCB-M algorithms in identifying the discrete optimal mixture is upper bounded by $T((\frac{1}{T^2} + 1)^K-1)$. The second step is analyzed in Lemma~\ref{lemma:undersampling} in Appendix~\ref{appendix: UCB sampling estimation error}, in which we show that the regret incurred by the tracking block of the RS-UCB-M and CE-UCB-M algorithms is of the order $O(K/T)$. The regret upper bound is a combination of the regrets in these results.

\textbf{Choosing $\varepsilon$.} Similarly to RS-ETC-M, we choose an $\varepsilon$ that ensures $T > T(\varepsilon)$ for the bound in Theorem \ref{theorem:UCB upper bound}. The choice 
\begin{align*}
    \varepsilon = \Theta \left(\left(K^{\frac{2}{q}}\; \frac{\log T}{T}\right)^{\kappa}\right)\ , \quad \mbox{where} \;\; \kappa \triangleq \Big(\frac{2\beta}{q}+2\Big)^{-1}\ .
\end{align*}
\begin{theorem}[RS/CE-UCB-M -- $\varepsilon$-independent]
\label{corollary:RS-UCB-M}
Under the conditions of Theorem \ref{theorem:UCB upper bound}, when $\beta$ exists and $r \leq \beta + \frac{q}{2} $ the regret upper bound for $\pi\in\{{\rm U,C}\}$ is
\begin{align}
\mathfrak{R}_{\bnu}^{\pi}(T) \leq O \left (K^{c_{\rm U}}  \Big[\frac{\log T}{T}\Big]^{r \kappa} \right) \ ,
\end{align}
where $c_{\rm U}\triangleq \max\{ r(1+ \frac{2\kappa}{q}),{1-\kappa} \}$.
\end{theorem}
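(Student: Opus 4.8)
The plan is to derive this $\varepsilon$-independent bound by optimizing the $\varepsilon$-dependent bound of Theorem~\ref{theorem:UCB upper bound} over the discretization level, subject to the feasibility requirement $T>\max\{\e^K,T(\varepsilon)\}$. The first ingredient I would establish is an explicit bound on the discretization error $\Delta(\varepsilon)$ in terms of $\varepsilon$. Let $\tilde{\ba}\in\Delta^{K-1}_{\varepsilon}$ be a grid point nearest the continuous optimum $\balpha^\star$; standard simplex rounding gives $\norm{\balpha^\star-\tilde{\ba}}_1=O(K\varepsilon)$. Since $\ba^\star$ maximizes $V(\cdot,\F)$ over the grid, $V(\ba^\star,\F)\ge V(\tilde{\ba},\F)$, and hence by Mixture H\"older continuity (Definition~\ref{assumption:Holder_opt}) applied at $\F^\star$ together with the definition of $W$,
\begin{align*}
\Delta(\varepsilon)\;\le\; V(\balpha^\star,\F)-V(\tilde{\ba},\F)\;\le\;\mcL\big(W\norm{\balpha^\star-\tilde{\ba}}_1\big)^r\;=\;O\big((K\varepsilon)^r\big).
\end{align*}
Combined with Theorem~\ref{theorem:UCB upper bound}, the regret then splits (up to the constants $B,\mcL,W,\rho$) into a sampling/tracking term decreasing in $\varepsilon$ and this discretization term increasing in $\varepsilon$:
\begin{align*}
\mathfrak{R}_{\bnu}^{\pi}(T)\;=\;O\bigg(\Big(\frac{\log T}{\varepsilon T}\Big)^{q/2}\bigg)+O\big((K\varepsilon)^r\big).
\end{align*}

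Next I would determine the smallest feasible $\varepsilon$, which is the binding constraint and is what fixes $\kappa$. Because $\beta$ exists, $\Delta_{\min}(\varepsilon)=\Theta(\varepsilon^\beta)$; substituting this into the inequality defining $\mcQ$ and solving for $T_0(\varepsilon)$ yields $T(\varepsilon)=\Theta\big(K^{2/q}\varepsilon^{-(2+2\beta/q)}\big)$ up to logarithmic factors. Imposing $T>T(\varepsilon)$ then forces the minimum feasible discretization level $\varepsilon=\Theta\big((K^{2/q}\log T/T)^{\kappa}\big)$ with $\kappa=(2\beta/q+2)^{-1}$, which is exactly the prescribed choice. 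Taking $\varepsilon$ as small as feasibility permits is optimal here, since it minimizes the increasing discretization term while the decreasing sampling term stays controlled.

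Substituting this $\varepsilon$ gives the sampling term as $O\big(K^{-\kappa}(\log T/T)^{q(1-\kappa)/2}\big)$ and the discretization term as $O\big(K^{r(1+2\kappa/q)}(\log T/T)^{r\kappa}\big)$. A short algebraic identity using $\kappa=q/(2\beta+2q)$ shows that the hypothesis $r\le\beta+q/2$ is \emph{equivalent} to $q(1-\kappa)/2\ge r\kappa$; hence, as $T\to\infty$, the sampling term decays at least as fast as the discretization term, which therefore sets the rate. Bounding $K^{-\kappa}\le K^{1-\kappa}$ and collecting the two $K$-exponents yields
\begin{align*}
\mathfrak{R}_{\bnu}^{\pi}(T)\;=\;O\bigg(K^{\max\{r(1+2\kappa/q),\,1-\kappa\}}\Big(\frac{\log T}{T}\Big)^{r\kappa}\bigg)\;=\;O\bigg(K^{c_{\rm U}}\Big(\frac{\log T}{T}\Big)^{r\kappa}\bigg),
\end{align*}
which is the claimed bound.

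\textbf{Main obstacle.} I expect the principal difficulty to be the feasibility analysis: translating the implicit definition of $T(\varepsilon)$ through $\mcQ$ and $\Delta_{\min}(\varepsilon)$ into the explicit scaling $T(\varepsilon)=\Theta\big(K^{2/q}\varepsilon^{-(2+2\beta/q)}\big)$. This requires both the existence of $\beta$ to pin down $\Delta_{\min}(\varepsilon)=\Theta(\varepsilon^\beta)$ and a careful inversion of the defining inequality of $\mcQ$, where the $\sqrt{\log s}$ factor prevents an exact power law and is what produces the logarithmic factor inside $\varepsilon$. The hypothesis $r\le\beta+q/2$ is then the decisive condition guaranteeing that the discretization error, rather than the sampling error, governs the final rate; outside this regime the balance flips and a different exponent would emerge.
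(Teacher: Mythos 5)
Your proposal is correct and follows essentially the same route as the paper: bound $\Delta(\varepsilon)=O((K\varepsilon)^r)$ via the mixture-\holder argument (Lemma~\ref{lemma:Delta_error}), pick the smallest $\varepsilon$ compatible with $T>T(\varepsilon)$ using $\Delta_{\min}(\varepsilon)=\Theta(\varepsilon^\beta)$, substitute into the Theorem~\ref{theorem:UCB upper bound} bound, and use the equivalence $r\le\beta+q/2\iff q(1-\kappa)/2\ge r\kappa$ to let the discretization term set the rate. Your explicit inversion of the $\mcQ$ condition and the observation that the sampling term actually carries $K^{-\kappa}$ (before loosening to $K^{1-\kappa}$) are slightly more detailed than the paper's "readily verified" treatment, but the argument is the same.
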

\vspace{-0.1in}
In the case of Gini deviation, we observe that for discrete regret, the RS-ETC-M algorithm achieves \(O(K/T)\) regret while RS-UCB-M achieves \(O(K/\sqrt{T})\) regret, ignoring polylogarithmic factors. However, when we optimize the algorithms for the discretization level \(\varepsilon\), we observe an order-wise improvement in the regret of the RS-UCB-M and CE-UCB-M algorithms.  


\textbf{Regret bounds for important cases.} We specialize our general results to a number of widely-used DRs in Table~\ref{table:table_regrets} and Table~\ref{table:table_risks} (Appendix \ref{appendix:lit_comp}). Table~\ref{table:table_regrets} provides the regret bounds and Table~\ref{table:table_risks} specifies the \holder continuity constants $q$, $r$, and $\beta$. We remark that the $\varepsilon$-dependent regret bounds depend only on the \holder continuity constant $q$ (Theorems~\ref{theorem: ETC upper bound} and \ref{theorem:UCB upper bound}). On the other hand, the $\varepsilon$-independent regret bounds, will additionally depend on the mixture \holder continuity constant $r$ and $\beta$ (Theorems~\ref{theorem:RS-ETC-M} and \ref{corollary:RS-UCB-M}). 
It is worth noting that, 
for RS-ETC-M algorithm, the constant $\beta$ will be relevant only for the DR choices with mixture policies. The reason is that when we have a solitary arm policy, we can discretize the simplex into unit vectors along each coordinate. In this case $\Delta_{\min}(\varepsilon) = O(1)$, which does not scale with $\varepsilon$. Consequently, for RS-ETC-M and for solitary arms, the order of the regret is the same as the discrete regret. 

\begin{figure*}[ht!]
    \centering
    \begin{subfigure}[t]{0.40\textwidth}
        \centering
    \includegraphics[height=1.5in]{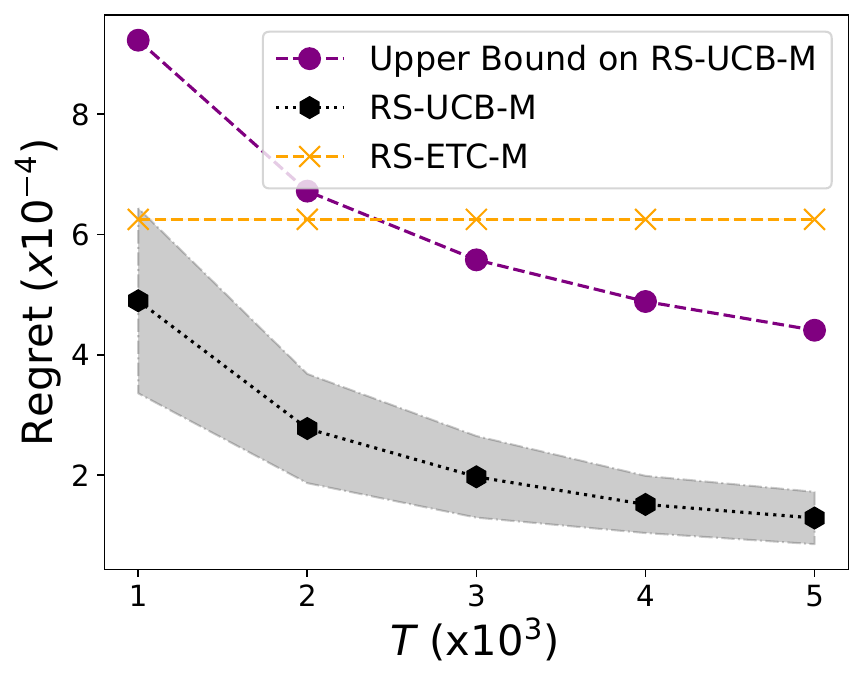}
        \caption{Regret versus time horizon $T$.}
        \label{fig:T_experiment}
    \end{subfigure}%
    ~ 
    \begin{subfigure}[t]{0.40\textwidth}
        \centering        \includegraphics[height=1.5in]{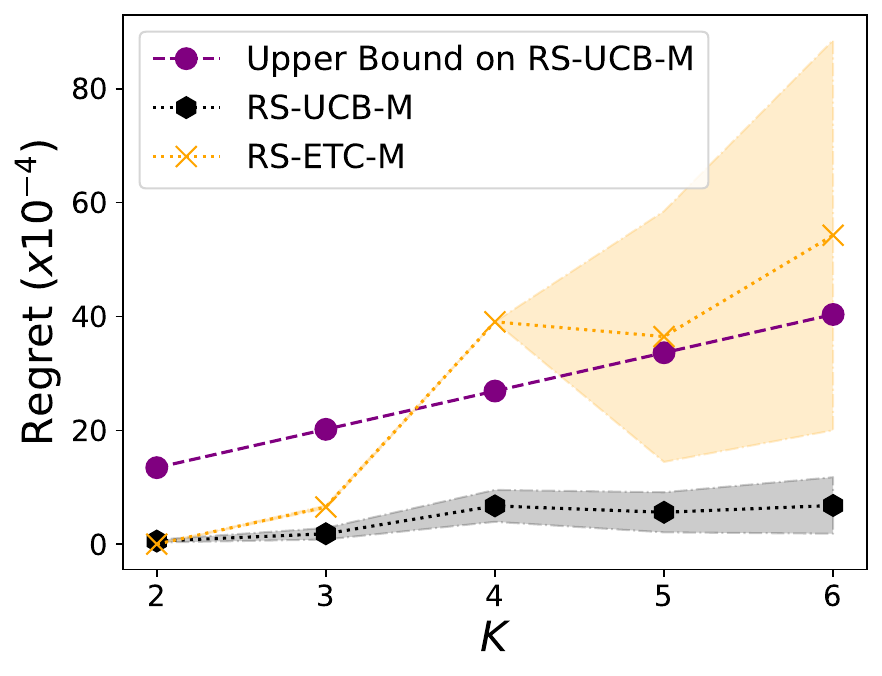}
        \caption{Regret versus number of arms $K$.}
        \label{fig:K_experiment}
    \end{subfigure}
    \caption{Regret of the algorithms for different parameters}
\end{figure*}
\paragraph{Relevance to the existing literature.} In Section~\ref{appendix:lit_comp} we provide a thorough discussion on the relevance of our general results to those of the existing risk-sensitive literature. In summary, the existing literature has studied only the case of monotone DRs, 
and solitary arm policies including \citet{liang2023distribution} and \citet{prashanth2022wasserstein}, which proposes Wasserstein distance based UCB algorithms. 
We present a detailed comparison of regret bounds, which depend on whether the underlying distributions are bounded or sub-Gaussian, in Table \ref{table:literature_comparison} (Appendix \ref{appendix:lit_comp}). For the well-investigated CVaR, the existing literature establishes a regret bound of $O(\log T/T)$ for bounded support \citep{baudry2021optimal} and $O(\sqrt{\log T/T)}$ for sub-Gaussian distributions \citep{prashanth2022wasserstein}. Under the same sub-Gaussian assumption, RS-ETC-M improves the regret bound to \(O(\log T/T)\).


\section{EMPIRICAL EVALUATIONS}
\label{sec: experiments}

We provide empirical evaluations of the RS-ETC-M and RS-UCB-M algorithms and compare them with the following baseline algorithm: a uniform sampling strategy that selects each arm $T/K$ times, where $T$ denotes the horizon. We also note that RS-ETC-M assumes knowing instance-specific gap information, while RS-UCB-M does not. The experiments are conducted for the DR Gini deviation, whose distortion function is $h(p) = p(1-p)$. We focus on Bernoulli bandits with mean vectors $\bp\in[0,1]^K$, in which case, the DR has a closed-form expression for the case of mixtures, given by $V(\balpha,\F)\;=\; \langle \balpha , \bp\rangle (1-\langle \balpha, \bp\rangle )$. All the experiments are averaged over $10^3$ trials. \footnote{https://github.com/MeltemTatli/Risk-sensitive-Bandits-Arm-Mixture-Optimality.git}

We empirically investigate regret's variations with varying levels of the horizon $T$ and number of arms $K$ as well as other properties of the algorithms, including their sensitivity to the exploration rate $\rho$ and instance-specific gap parameters. Regret evaluations are presented in this section, and analyzing the properties is presented in Appendix~\ref{Appendix: Additional Experiments}. 

\textbf{Regret versus horizon.} Our first experiment  considers a $2$-armed bandit instance $\bnu \triangleq \big({\rm Bern}(0.4), {\rm Bern}(0.9) \big)$. For this instance, note that the optimal mixing coefficient is $\balpha^\star = [0.8, 0.2]^\top$. For implementing the RS-UCB-M algorithm, we have set the discretization level to $\varepsilon = \sqrt{(K\log T)/T}$,
and explore each arm $\frac{T}{10}$ times. 
The results are presented in Figure~\ref{fig:T_experiment} and demonstrate the regret variations of RS-UCB-M and RS-ETC-M versus $T$.

For RS-UCB-M, we present the range of the results in the shaded region with their average specified by the dashed curve. We do the same for RS-UCB-M, but its range is notably narrow and invisible on the curve. We observe that RS-UCB-M uniformly outperforms RS-ETC-M. This is expected as the exploration phase of the RS-ETC-M algorithm explores an arm more frequently than the associated estimated mixing coefficient recommends.
Additionally, Figure~\ref{fig:regret_uniform} (Appendix~\ref{Appendix: Additional Experiments}) demonstrates that 
RS-UCB-M outperforms uniform sampling. 

\paragraph{Regret versus number of arms.} Next, we investigate the variations of RS-UCB-M's regret in the number of arms $K$. We choose $K\in\{2,3,4,5,6\}$. For fair comparisons, we create the instances so that they have uniform gaps between the arm means for every instance. For both RS-UCB-M and RS-ETC-M, we set the discretization level to $\varepsilon = \sqrt{(K\log T)/T}$; set the horizon to $T = 3\times 10^5$; and explore every arm $\frac{T}{20}$ times for RS-UCB-M algorithm.
Figure~\ref{fig:K_experiment} shows that, on average, the RS-UCB-M algorithm outperforms the RS-ETC-M, especially as the number of arms increases.

\section{CONCLUDING REMARKS}

We have provided a novel approach to viewing and analyzing stochastic bandits in which the decision-makers are expected to be conscious of the decision risks involved. This contrasts with the conventional risk-neutral approaches, which aim to optimize the average reward in a fully utilitarian way. Adopting a rich class of distortion riskmetrics, we observed that many of deviation measures and measures of variability are optimized by a mixture distribution over the arms. This is in sharp contrast to the commonly adopted premise that there exists a solitary best arm -- a premise shared by the existing literature on risk-sensitive bandit algorithms, too. Designing regret-efficient algorithms for such mixtures poses various technical and design challenges, mainly pertinent to identifying and tracking the optimal mixing coefficients of the arms. Based on the UCB and ETC principles, we have designed two sets of bandit algorithms and established regret results for a broad spectrum of commonly used distortion riskmetrics. A potential future direction is finding a general lower-bound for distortion riskmetrics in this setting, which is in general uninvestigated for risk-sensitive bandits.

 \subsubsection*{Acknowledgements}
A portion of this work was done when Prashanth L. A. was at IIT Bombay. The work of Meltem Tatl{\i}, Arpan Mukherjee, and Ali Tajer was supported in part by the U.S. National Science Foundation under Grants ECCS-193310 and DMS-2319996, and in part by the Rensselaer-IBM Future of Computing Research Collaboration (FCRC).






\bibliographystyle{unsrtnat}
\bibliography{BAIRef, references}

\section*{Checklist}



 \begin{enumerate}

 \item For all models and algorithms presented, check if you include:
 \begin{enumerate}
   \item A clear description of the mathematical setting, assumptions, algorithm, and/or model. Yes. Please see \ref{sec:setting} and \ref{sec:algorithm}.
   \item An analysis of the properties and complexity (time, space, sample size) of any algorithm. Yes. Please see \ref{sec:algorithm} on more discussion on algorithms.
   \item (Optional) Anonymized source code, with specification of all dependencies, including external libraries. Yes. 
 \end{enumerate}

 \item For any theoretical claim, check if you include:
 \begin{enumerate}
   \item Statements of the full set of assumptions of all theoretical results. Yes. All of our theorems were included in section \ref{sec:analysis}.
   \item Complete proofs of all theoretical results. Yes. Proof overview is given in \ref{sec:analysis}. Due to the space constraints, the proofs are postponed to the Appendix \ref{proof:lemma_example}, \ref{Appendix:W_finitess}, \ref{proof:Aux_Lemma}, \ref{Appendix:RS-ETC-M}, \ref{Appendix:Upper_last_UCB}, and \ref{proof:UCB upper bound}.
   \item Clear explanations of any assumptions. Yes. We have included our assumptions in section \ref{sec:setting}.   
 \end{enumerate}

 \item For all figures and tables that present empirical results, check if you include:
 \begin{enumerate}
   \item The code, data, and instructions needed to reproduce the main experimental results (either in the supplemental material or as a URL). Yes. We provided the code as an URL.
   \item All the training details (e.g., data splits, hyperparameters, how they were chosen). Not Applicable.
         \item A clear definition of the specific measure or statistics and error bars (e.g., with respect to the random seed after running experiments multiple times). Yes. We have provided the details in Section \ref{sec: experiments} and Appendix \ref{Appendix: Additional Experiments}.
         \item A description of the computing infrastructure used. (e.g., type of GPUs, internal cluster, or cloud provider). Yes. We have provided the details in Appendix \ref{Appendix: Additional Experiments}.
 \end{enumerate}

 \item If you are using existing assets (e.g., code, data, models) or curating/releasing new assets, check if you include:
 \begin{enumerate}
   \item Citations of the creator If your work uses existing assets. Not Applicable
   \item The license information of the assets, if applicable. Not Applicable
   \item New assets either in the supplemental material or as a URL, if applicable. Not Applicable.
   \item Information about consent from data providers/curators. Not Applicable
   \item Discussion of sensible content if applicable, e.g., personally identifiable information or offensive content. Not Applicable.
 \end{enumerate}

 \item If you used crowdsourcing or conducted research with human subjects, check if you include:
 \begin{enumerate}
   \item The full text of instructions given to participants and screenshots. Not Applicable
   \item Descriptions of potential participant risks, with links to Institutional Review Board (IRB) approvals if applicable. Not Applicable.
   \item The estimated hourly wage paid to participants and the total amount spent on participant compensation. Not Applicable.
 \end{enumerate}

 \end{enumerate}

 \newpage

\clearpage
\appendix
\onecolumn

\hsize\textwidth
 \linewidth\hsize 
\hrule height4pt
\vskip .25in
{\centering
  {\Large\bfseries Risk-sensitive Bandits:\\ Arm Mixture Optimality and Regret-efficient Algorithms\\ \vspace{.15 in}Supplementary Materials \par}}
\vskip .25in
\hrule height1pt
\vskip .25in

\doparttoc 
\faketableofcontents 

\part{} 
\parttoc 

\newpage

\section*{Organization of the Supplementary Material}

The supplementary material consists of eight parts grouped in Appendices \ref{Appendix:Specializing}-\ref{Appendix: Additional Experiments}. 
 We begin by presenting the \holder parameters used in the theorems for DRs listed in Table \ref{table:table_regrets}, along with a comparison to the risk-sensitive bandit literature (Appendix \ref{Appendix:Specializing}). The proof of Lemma~\ref{example utility} is presented in Appendix \ref{proof:lemma_example} and the proof of the finiteness of the Wasserstein-related constant $W$ is presented in Appendix~\ref{Appendix:W_finitess}. Subsequently, in Appendix \ref{proof:Aux_Lemma}, we provide several auxiliary lemmas that are used for the proofs of regret results presented in Theorems \ref{theorem: ETC upper bound}-\ref{corollary:RS-UCB-M} and the subsequent appendices. Next, we have Appendices \ref{Appendix:RS-ETC-M}-\ref{Appendix:Upper_last_UCB} to present the CE-UCB-M pseudo-algorithm (Appendix~\ref{sec:SE-UCB-M proofs}), as well as the performance guarantees for the RS-ETC-M (Appendix~\ref{Appendix:RS-ETC-M}), RS-UCB-M (Appendix~\ref{proof:UCB upper bound}), and CE-UCB-M (Appendix~\ref{sec:SE-UCB-M proofs}) algorithms. Finally, we provide additional empirical results in Appendix \ref{Appendix: Additional Experiments}.

\section{Specializing the Results to Distortion Riskmetrics}
\label{Appendix:Specializing}
In this section, we present the parameters used in Theorems \ref{theorem: ETC upper bound}-\ref{corollary:RS-UCB-M} for some of the widely-used DRs and compare our results with known regret guarantees in the literature.

\subsection{Distortion Riskmetric Examples and Parameters}
\label{Appendix:Additinal Tables}

\paragraph{\holder exponents:} In this section, we present the \holder exponents  $q,r$ for the DRs discussed in Table~\ref{table:table_regrets}. These constants are reported in Table~\ref{table:table_risks}. For the \holder exponents, we note that the expressions of $q$ and $r$ depend on the bandit model. Furthermore, based on the definitions of \holder continuity and mixture \holder continuity provided in \eqref{eq:Holder} and \eqref{eq:Holder_opt}, respectively, any given \holder continuity exponent $r$ is also a mixture \holder continuity exponent, that is given $q$, $r=q$ is always a valid choice for $r$. We characterize the \holder exponents and constants for general for
Bernoulli bandits in Lemmas \ref{lemma:example_utility_Holder}--\ref{lemma:Quadratic} in Appendix \ref{proof:Aux_Lemma}.  Table~\ref{table:table_risks} summarizes these results. 

\paragraph{Gap constant $\beta$:} We also analyze 
the gap constant $\beta$, which we have defined as 
\begin{align}
\beta= \lim_{\varepsilon \to 0}\frac{\log \Delta_{\min}(\varepsilon)}{\log \varepsilon} \ ,    
\end{align}
which plays an important role in characterizing $\varepsilon$-independent regret guarantees. We characterize $\beta$ for DRs with strictly increasing distortion functions (risk-neutral mean value, dual power, quadratic, and CVaR under a condition) in Lemma \ref{lemma:beta_lemma_monotone}.  Table~\ref{table:table_risks} lists these $\beta$ values.





\begin{table}[h]
\captionsetup{position=top}
\caption{\holder  continuity parameters and distortion functions of the DRs reported in Table~\ref{table:table_regrets}.}
\label{table:table_risks}
\begin{minipage}[t]{\textwidth}
\centering
\begin{tabular}{@{}|l|c|l|l|l|l|l|@{}}
\hline 
\textbf{Distortion Riskmetrics }  & $q $ \footnote{We report $q$ for $K$-arm Bernoulli bandit distributions except for CVaR and PHT Measure.} & $r$ \footnote{We report $r$  for $K$-arm Bernoulli bandit distributions.} & $\beta$ \footnote{We report $\beta$ for $K$-arm Bernoulli bandit distributions.} \footnote{The non-shaded DRs have strictly increasing DRs and their $\beta$ values are characterized analytically.} 
\\ \hline\hline 
Risk-Neutral Mean Value                      & $1$ & $1$ & 1\\ 
Dual Power  \citep{vijayan2021policy}                  &  $1$& $1$ & 1  \\
Quadratic    \citep{vijayan2021policy}                  &   1 & $1$  & 1 \\
CVaR   \citep{dowd2006}  & $1$ \footnote{\citet{prashanth2022wasserstein} reports $q=1$ for sub-gaussian distributions.} & $1$ & $1$ \footnote{When the arms' mean values are smaller than $1-\alpha$. This assumption is only needed for the analysis of $\beta$. Hence, it is needed for only the regret of RS-UCB-M.}\\ 
PHT Measure (\(s \in (0,1)\)) \citep{jones03}                   &   $s$ & $s$   & 1\\
\rowcolor{gray!30}
Mean-median deviation  \citep{jones03} \citep{Wang2020}    & $1$   & $1$  & $-$\\ 
\rowcolor{gray!30}
Inter-ES Range \({\rm IER}_{\alpha=1/2}\) \citep{Wang2020}& $1$ & $1$ &$-$ 
\\
\rowcolor{gray!30}
Wang's Right-Tail Deviation  \citep{jones03}     & $1/2$   & $1$  & $-$\\ 
\rowcolor{gray!30}
Gini Deviation & $1$ & $2$ &$-$\\
\bottomrule
\end{tabular}
\end{minipage}
\end{table}

\newpage

\subsection{Comparison with Risk-Sensitive Bandit Literature}
\label{appendix:lit_comp}

In this subsection, we discuss two broad categories of the existing studies. First, we discuss the existing studies focused on risk-sensitive bandits and establish their connection to our framework and regret results. Secondly, we discuss the empirical distribution performance measures (EDPM) framework~\citep{cassel2018general}.

\paragraph{Frameworks subsumed by the DR-centric framework.}   As discussed earlier, the DR-centric framework subsumes several existing frameworks for risk-sensitive bandits. we note that the existing literature has considered only monotone DRs and they include CVaR and distortion risk measures (DRM), which we discuss next.


\paragraph{CVaR:} For CVaR, the best-known regret is \(O(\log(T)/T)\) under the assumption of bounded support \citep{baudry2021optimal, cassel2018general, tamkin2019cvar}. When the assumption is loosened to sub-Gaussian, the regret weakens to $O(\sqrt{\log(T)/T})$ for \citep{prashanth2022wasserstein}. Compared to these known regret bounds, for sub-Gaussian models, RS-ETC-M achieves a regret bound of \(O(\log(T)/T)\) while for Bernoulli model RS-UCB-M achieves a regret bound of $O\left((\log(T)/T)\right)^{1/4}$. The reason that RS-ETC-M outperforms RS-UCB-M is that it assumes knowing the minimum suboptimality gap \(\Delta_{\min}\).

\paragraph{DRMs:} For the more general class of DRMs, under the assumption of bounded support, the study in \citep{chang2022, aditya2016weighted} reports a regret of \(O(\log(T)/T)\).
For the sub-Gaussian support, \citep{prashanth2022wasserstein} reports a regret bound of $O(\sqrt{\log(T)/T})$. In the sub-Gaussian setting, the regret bound of RS-ETC-M depends on \holder exponent \(q\), and hence, for the DRMs with \(q=1\), we achieve the same regret as in CVaR. For \(q < 1\), which is the case for PHT with \(s = 1/2\), we report a regret of $O(\sqrt{\log(T)/T})$ for RS-ETC-M and under Bernoulli model we report $O({(\log(T)/T)}^{1/5})$ for RS-UCB-M. All these results are summarized in Table \ref{table:literature_comparison}.

\paragraph{EDPM Framework.} The EDPM framework introduced in~\citep{cassel2018general} is, in principle, more general than the DR-centric framework, and it includes risk measures that cannot be modeled by DRs, e.g., variance and Sharpe Ratio. Nevertheless, the analysis of the EDPM is focused entirely on settings in which the optimal policy is necessarily a solitary arm policy. Such focus precludes risk measures with optimal mixture policies, e.g., Gini deviation and Wang's Right-tail deviation.

\begin{table}[h]
\captionsetup{position=top}
\caption{Comparison of the existing risk-sensitive bandit studies where.}
\label{table:literature_comparison}
\begin{minipage}[t]{\textwidth}
\centering
\begin{tabularx}{\textwidth}{@{}|l|c|X|l|@{}}
\hline 
\textbf{Work} & \textbf{DR} & \textbf{Distribution Assumptions} & \textbf{Regret ($\varpi(T)\triangleq  \sqrt{\frac{\log T}{T}}$)}  \\ \hline \hline
\citet{baudry2021optimal}     & CVaR        & Bounded & $O(\varpi^2(T))$      \\ \hline
\citet{tamkin2019cvar}    & CVaR        & Bounded & $O(\varpi^2(T))$ \footnote{This is a problem-dependent bound.}  \\  
\hline
\citet{cassel2018general}      & CVaR 
& Bounded       & $O(\varpi^2(T))$\footnote{This bound holds under some assumptions on the arm distribution densities.} %
\\
\hline
\citet{chang2022}      & DRM         & Bounded & $O(\varpi^2(T))$   \\ 
\hline
\citet{aditya2016weighted}       & DRM         & Bounded                & $O(\varpi^2(T))$ 
\\
\hline
\citet{liang2023distribution}      & DRM         & Bounded                  & $O(\varpi^2(T))$ \\
\hline
\citet{prashanth2022wasserstein}      & DRM         & Sub-Gaussian                   & $O(\varpi(T))$
\\
\hline
RS-ETC-M (This work)  & DRM       & Sub-Gaussian        & $O(\varpi^{2q}(T))$ \footnote{Assuming that the optimal solution is a solitary arm.} \\
\hline
RS-UCB-M (This work)     & DRM         & Sub-Gaussian                          & $O(\varpi^{2r  \kappa}(T))$  \\
\bottomrule
\end{tabularx}
\end{minipage}
\end{table}

\newpage

\section{Proof of Lemma \ref{example utility} (Gini Deviation)}
\label{proof:lemma_example}
In this section, we prove Lemma~\ref{example utility}, which states that the utility-maximizing solution for Gini deviation is a mixture of arm CDFs.
For any values $p_1,p_2\in(0,1)$, consider two Bernoulli distributions ${\sf Bern}(p_1)$ and ${\sf Bern}(p_2)$ with CDFs $\F_1$ and $\F_2$, respectively.  It can be readily verified that for the distribution $\F = {\sf Bern}(p)$ for any $p\in[0,1]$, we have
\begin{align} 
    U_h(\F) &= \int_{0}^{\infty} h(1-\F(x)) \diff{x} \\
    \label{eq:h_1_p}
    & = h(p)\ ,
\end{align}
which implies that $U_h(\F_1) = h(p_1)$ and $U_h(\F_2) = h(p_2)$. Owing to concavity of the distortion function \(h\), the maximizer
\begin{align}
    p^\star \triangleq \argmax_{p \in [0,1]} h(p)
\end{align}
is unique. Furthermore, due to the function being non-monotone, \(p^\star\) cannot lie at the boundaries, i.e., at $0$ or at  $1$. Hence, \(p^\star \in (0,1)\). Let us choose the mean values of the arms such that $p_1 < p^\star$ and $p_2>p^\star$. With these choices, there exists $\lambda\in(0,1)$ such that
\begin{align}
\label{eq:p_max_def}
    p^\star = \lambda p_1 + (1-\lambda)p_2 \ .
\end{align}
Accordingly, define the mixture distribution
\begin{align}
    \F^\star \triangleq \lambda \F_1 + (1-\lambda)\F_2\ .
\end{align}
Subsequently, we have
\begin{align}
    U_h(\F^\star ) = h(p^\star) > \max\{h(p_1), h(p_2)\} = \max\{U_h(\F_1),U_h(\F_2)\}\ .
\end{align}
This indicates that there exists a mixture of $F_1$ and $F_2$ whose DR value dominates those of $F_1$ and $F_2$.


\section{Finiteness of $W$ for sub-Gaussian Random Variables}
\label{Appendix:W_finitess}
We characterize an upper bound on the parameter $W$ defined in Section~\ref{sec:setting} and show that it is finite. Recall the definition of $W$:
\begin{align}
\label{eq:W}
W \triangleq \max \limits_{\balpha \neq \bbeta \in \Delta^K} \frac{1}{\lVert \balpha - \bbeta \rVert_1}\Big\|\sum_i \alpha_i \F_i-\sum_j \beta_j \F_j\Big\|_{\rm W}\ .
\end{align}
In order to show the finiteness of $W$ for $1$-sub-Gaussian random variables, we leverage the Kantorvich-Rubinstein duality of the $1$-Wasserstein measure, which is stated below.


\begin{theorem}[\cite{villani2009optimal}]
\label{theorem:Wasserstein_dual}
Let $\mcL^1(\Omega)$ denote the space of probability measures supported on $\Omega$ with finite first moment. For any $\P_1, \P_2\in \mcL^1(\Omega)$, we have
  \begin{align}
    \norm{\mathbb{P}_1 - \mathbb{P}_2}_{\rm W} = \sup \limits_{\lVert  f \rVert_L \leq 1} \left\{\displaystyle\int_{\Omega} f\diff \P_1 -\displaystyle\int_{\Omega} f\diff\P_2\right\}\ ,
  \end{align}
where $\lVert f \rVert_{L} \leq 1  $ denotes the space of all $1-$ Lipschitz functions $f:\mathbb{R} \rightarrow \mathbb{R}$. 
\end{theorem}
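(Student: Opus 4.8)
The plan is to derive this identity as the special case of the general Kantorovich duality for the metric cost $c(x,y) = |x-y|$, and then to collapse the abstract dual onto the supremum over $1$-Lipschitz functions. Recall that the $1$-Wasserstein distance is the value of the primal transport problem
\begin{align*}
  \norm{\P_1 - \P_2}_{\rm W} = \inf_{\gamma \in \Pi(\P_1,\P_2)} \int_{\Omega \times \Omega} |x - y| \diff\gamma(x,y)\ ,
\end{align*}
where $\Pi(\P_1,\P_2)$ is the set of couplings of $\P_1$ and $\P_2$. First I would establish that this infimum is attained, using tightness of $\Pi(\P_1,\P_2)$ together with lower semicontinuity of the cost under weak convergence, so that the primal is well posed.

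The first main step is to prove the unrestricted Kantorovich duality
\begin{align*}
  \inf_{\gamma \in \Pi(\P_1,\P_2)} \int |x-y| \diff\gamma = \sup_{\phi(x) + \psi(y) \le |x-y|} \Big\{ \int \phi \diff\P_1 + \int \psi \diff\P_2 \Big\}\ ,
\end{align*}
where the supremum is over integrable pairs $(\phi,\psi)$ obeying the pointwise constraint. I would obtain this by regarding the marginal constraints that define $\Pi(\P_1,\P_2)$ as linear constraints and introducing $\phi,\psi$ as the associated dual variables; a Fenchel--Rockafellar / minimax argument (e.g.\ Sion's theorem on a suitable weak-$*$ compact set) then interchanges the infimum and supremum and certifies the absence of a duality gap.

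The second step specializes to the metric cost. Given any feasible pair $(\phi,\psi)$, I would pass to the $c$-transform $\phi^c(y) \triangleq \inf_x\{|x-y| - \phi(x)\}$, which only increases the objective since $\psi \le \phi^c$, and then iterate to $\phi^{cc}$. The key elementary observation is that for $c = |\cdot - \cdot|$ every such transform is $1$-Lipschitz, and that for a $1$-Lipschitz $f$ the pair $(f,-f)$ is admissible because $f(x) - f(y) \le |x-y|$; hence the optimal dual pair has the form $(f,-f)$ with $\norm{f}_L \le 1$. Substituting $\psi = -f$ collapses the dual to $\sup_{\norm{f}_L \le 1}\{\int f \diff\P_1 - \int f \diff\P_2\}$, which is the claimed identity.

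The main obstacle is the rigorous justification of the no-duality-gap interchange in the first step on the non-compact domain $\Omega \subseteq \R_+$ when $\P_1,\P_2$ are only assumed to have finite first moments: the dual potentials need not be bounded, so one must work in an appropriate function space and approximate by compactly supported problems, controlling the tails through the finite-first-moment assumption. A convenient shortcut, available here because $\Omega$ is one-dimensional, is to bypass the abstract duality and use the closed form $\norm{\P_1-\P_2}_{\rm W} = \int_{\Omega} |\Q_1(x) - \Q_2(x)| \diff x$ in terms of the CDFs; an integration by parts then shows the dual is maximized by taking $f$ to be a suitable antiderivative of $\mathrm{sgn}(\Q_2 - \Q_1)$, a $1$-Lipschitz function that attains the supremum and yields exactly $\int_{\Omega}|\Q_1 - \Q_2|\diff x$.
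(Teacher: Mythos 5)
The paper never proves this statement: it is the Kantorovich--Rubinstein duality, imported verbatim from \citet{villani2009optimal} and used as a black box (e.g.\ in the proof of Theorem~\ref{theorem:W}, and again in the guise of Lemma~\ref{lemma:lipschitz-wasserstein}, which is also cited rather than proved). Your outline is therefore not competing with an in-paper argument but reconstructing the textbook proof, and as a sketch it follows the standard route correctly: primal attainment by tightness and lower semicontinuity, the general Kantorovich duality via a Fenchel--Rockafellar/minimax interchange, and the collapse of the dual to pairs $(f,-f)$ with $\norm{f}_L\le 1$ through the $c$-transform, using that for the metric cost $\phi^{c}$ is $1$-Lipschitz and that $f^{c}=-f$ whenever $f$ is $1$-Lipschitz.

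Two caveats keep this from being a complete proof. First, the step you yourself flag as the main obstacle --- absence of a duality gap on a non-compact $\Omega$ with measures having only finite first moments --- is genuinely the hard part, and the sketch does not resolve it: Sion's theorem does not apply off the shelf because the dual feasible set is not compact in a topology that makes the pairing continuous, so one must carry out the restriction-to-compact-sets and tail-control approximation you allude to. Second, your one-dimensional shortcut is sound in its mechanics (integration by parts against an absolutely continuous $1$-Lipschitz $f$ with $|f'|\le 1$ a.e., optimized by an antiderivative of $\mathrm{sgn}(\Q_2-\Q_1)$, boundary terms vanishing by the first-moment assumption), but it only establishes that the Lipschitz supremum equals $\int_\Omega|\Q_1(x)-\Q_2(x)|\diff x$; identifying that integral with the optimal-transport value $\norm{\P_1-\P_2}_{\rm W}$ is itself one of the equivalences the paper merely cites, and would require the separate monotone (quantile) coupling argument to close the loop.
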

Based on the characterization of the $1$-Wasserstein distance in Theorem~\ref{theorem:Wasserstein_dual}, we next provide an upper bound on $W$ for sub-Gaussian random variables.

\begin{theorem}[Upper bound on $W$] \label{theorem:W}
Let $\{\mathbb{F}_i : i\in[K]\}$ be probability measures on $\Omega\subseteq \R$ that are $1$-sub-Gaussian and define $W$ as in~\eqref{eq:W}.  We have
\begin{align}
    W\;\leq\;\sqrt{2\pi}\ .
\end{align}
\end{theorem}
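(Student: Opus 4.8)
The plan is to reduce the numerator of $W$ to a supremum over $1$-Lipschitz test functions via the Kantorovich--Rubinstein duality (Theorem~\ref{theorem:Wasserstein_dual}), and then to exploit the fact that the two mixing vectors sum to the same total mass. Since $\balpha,\bbeta\in\Delta^{K}$, both $\sum_i\alpha_i\F_i$ and $\sum_j\beta_j\F_j$ are genuine probability measures, so duality applies directly and gives
\[
\Big\|\sum_i\alpha_i\F_i-\sum_j\beta_j\F_j\Big\|_{\rm W}
=\sup_{\|f\|_L\le1}\sum_{i\in[K]}(\alpha_i-\beta_i)\int_{\Omega} f\,\der\F_i\ .
\]
Writing $\gamma_i\triangleq\alpha_i-\beta_i$, the crucial structural fact is $\sum_i\gamma_i=\sum_i\alpha_i-\sum_i\beta_i=0$. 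Consequently, replacing $f$ by $f-f(0)$ leaves the right-hand side unchanged, so I may restrict the supremum to $1$-Lipschitz functions normalized by $f(0)=0$, for which $|f(x)|\le|x|$ at every $x$.

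With this normalization, I would bound each summand: $\big|\int f\,\der\F_i\big|\le\int|f|\,\der\F_i\le\int|x|\,\der\F_i=\E_{\F_i}|X|$. This reduces everything to controlling the first absolute moment of a $1$-sub-Gaussian law. The second step is the moment estimate: from the $1$-sub-Gaussian tail bound $\P_{\F_i}(|X|>t)\le 2e^{-t^2/2}$ and the layer-cake identity $\E_{\F_i}|X|=\int_0^\infty \P_{\F_i}(|X|>t)\,\der t$, I obtain $\E_{\F_i}|X|\le\int_0^\infty 2e^{-t^2/2}\,\der t=\sqrt{2\pi}$. Combining the two steps yields
\[
\Big\|\sum_i\alpha_i\F_i-\sum_j\beta_j\F_j\Big\|_{\rm W}\le\sqrt{2\pi}\sum_{i\in[K]}|\gamma_i|=\sqrt{2\pi}\,\norm{\balpha-\bbeta}_1\ ,
\]
and dividing by $\norm{\balpha-\bbeta}_1$ and maximizing over $\balpha\neq\bbeta$ gives $W\le\sqrt{2\pi}$.

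The main obstacle is not the moment computation, which is routine, but the observation that the zero-sum constraint $\sum_i\gamma_i=0$ lets me center the test function at $f(0)=0$: without this step, a uniform bound on a single $\int f\,\der\F_i$ would have to absorb the (a priori unbounded) offset $f(0)$, and the per-term estimate $|\int f\,\der\F_i|\le \E_{\F_i}|X|$ would fail. I would also keep in mind that the estimate $\E_{\F_i}|X|\le\sqrt{2\pi}$ requires the $1$-sub-Gaussian concentration to be anchored at the origin; this is what prevents the mixture distances from being inflated by arbitrarily large location shifts. An alternative, equivalent route decomposes $\sum_i\gamma_i\F_i=S(\P_+-\P_-)$ with $S=\tfrac12\norm{\balpha-\bbeta}_1$ and $\P_\pm$ the normalized positive and negative parts, so that $W=\tfrac12\max\norm{\P_+-\P_-}_{\rm W}\le\tfrac12\max_{i,j}\norm{\F_i-\F_j}_{\rm W}$ by convexity of the Wasserstein seminorm; bounding $\norm{\F_i-\F_j}_{\rm W}\le\E_{\F_i}|X|+\E_{\F_j}|X|\le 2\sqrt{2\pi}$ recovers the same constant.
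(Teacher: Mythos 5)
Your proposal is correct and follows essentially the same route as the paper: Kantorovich--Rubinstein duality, centering the test function at $f(0)$ by exploiting $\sum_i(\alpha_i-\beta_i)=0$, the per-term bound $|\E_{\F_i}[f(X)]|\le\E_{\F_i}|X|$, and the layer-cake estimate $\E_{\F_i}|X|\le\int_0^\infty 2e^{-t^2/2}\,\der t=\sqrt{2\pi}$. The alternative decomposition into normalized positive and negative parts is a nice extra observation but is not needed, and the main argument matches the paper's proof step for step.
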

\begin{proof}
Let $\balpha,\bbeta\in\Delta^{K-1}$ denote two distinct probability mass functions on $[K]$. We have
\begin{align}
    \label{eq:W_1}
   \Big\|\sum_{i\in[K]} \Big(\alpha(i) - \beta(i)\Big) \mathbb{F}_i \Big\|_{\rm W} \;&=\;\sup \limits_{\lVert  f \rVert_L \leq 1} \sum_{i\in[K]} \Big(\alpha(i) - \beta(i)\Big) \E_{\mathbb{F}_i}\Big[f(X)\Big] \\
   \label{eq:W_2}
   \hfill & \leq \;\sup \limits_{\lVert  f \rVert_L \leq 1} \bigg\lvert \sum_{i\in[K]} \Big(\alpha(i) - \beta(i)\Big) \Big(\E_{\mathbb{F}_i}\Big[f(X) - f(0)\Big]  + f(0)\Big) \bigg\rvert \\ 
   \label{eq:W_3}
    \hfill & =\; \sup \limits_{\lVert  f \rVert_L \leq 1} \bigg\lvert \sum_{i\in[K]} \Big(\alpha(i) - \beta(i)\Big) \Big(\E_{\mathbb{F}_i}\Big[f(X) - f(0)\Big] \Big) \bigg\rvert \\ 
    \label{eq:W_4}
   \hfill & \leq\; \sup \limits_{\lVert  f \rVert_L \leq 1}
    \sum_{i\in[K]} \bigg\lvert  \Big(\alpha(i) - \beta(i)\Big) \E_{\mathbb{F}_i}\Big[f(X)-f(0)\Big] \bigg\rvert\\
    \label{eq:W_5}
    \hfill & \leq\; \sum_{i\in[K]} \Big\lvert \alpha(i) - \beta(i) \Big\rvert \cdot\sup \limits_{\lVert  f \rVert_L \leq 1} \Big\lvert \E_{\mathbb{F}_i}[f(X)-f(0)] \Big\rvert \\
    \label{eq:W_7}
    \hfill & \leq\; \sum_{i\in[K]} \Big\lvert \alpha(i) - \beta(i)\Big\rvert\cdot     \E_{\mathbb{F}_i}\Big[|X|\Big] \ ,
\end{align}
where,
\begin{itemize}
    \item the equality in \eqref{eq:W_1}~follows from Theorem~\ref{theorem:Wasserstein_dual};
    \item the transition~\eqref{eq:W_1}-\eqref{eq:W_2} holds since we take the absolute value;
    \item the transition~\eqref{eq:W_2}-\eqref{eq:W_3} follows from the fact that $\sum_{i\in[K]}\alpha(i) = \sum_{i\in[K]}\beta(i) = 1$;
    \item the transition~\eqref{eq:W_3}-\eqref{eq:W_4} follows from triangle inequality;
    \item the transition~\eqref{eq:W_4}-\eqref{eq:W_5} follows from the fact that for any two functions $f_1$ and $f_2$, we have $\sup_x \{f_1(x) + f_2(x) \}\leq \sup_x f_1(x) + \sup_x f_2(x)$;
    \item and the transition~\eqref{eq:W_5}-\eqref{eq:W_7} follows from $1$-Lipschitzness of $f$.
\end{itemize}

 For sub-Gaussian variables, $\E[|X|]$ is bounded in terms of the sub-Gaussian parameter. Since all distributions are $1-$sub- Gaussian, we have  
\begin{align}
    E_{\mathbb{F}_i}[|X|] &= \int_{0}^{+\infty} \P(|x| > u)du \\
    &\leq \int_{0}^{+\infty} 2\exp{-t^2/2} dt \\
    &= \int_{-\infty}^{+\infty} \exp{-t^2/2} dt \\
    &= \sqrt{2\pi} \underbrace{\int_{-\infty}^{+\infty} \frac{1}{\sqrt{2\pi}} \exp{-t^2/2} dt}_{= 1} \\
    &=  \sqrt{2\pi} 
\end{align}
 which implies that
\begin{align}
\label{eq:W_8}
     \Big\| \sum_{i\in[K]} \Big(\alpha(i) - \beta(i)\Big) \mathbb{F}_i \Big\|_{\rm W}\;\stackrel{\eqref{eq:W_7}}{\leq}\;\sqrt{2\pi}\cdot\norm{\balpha-\bbeta}_1\ .
\end{align}
Hence, from~\eqref{eq:W_8}, we have that
\begin{align}
    W\;=\; \max \limits_{\balpha \neq \bbeta \in \Delta^K} \frac{1}{\lVert \balpha - \bbeta \rVert_1}\Big\|\sum_i \alpha_i \F_i-\sum_j \beta_j \F_j\Big\|_{\rm W}\;\leq\;\sqrt{2\pi}\ .
\end{align}

\end{proof}


    

\section{Auxiliary Lemmas}
\label{proof:Aux_Lemma}

\subsection{Useful Properties}
In this section, we present some auxiliary lemmas that will be used in the proofs of Theorems \ref{theorem: ETC upper bound}, \ref{theorem:RS-ETC-M}, \ref{theorem:UCB upper bound}, and \ref{corollary:RS-UCB-M}. For the proofs, we use equivalent characterizations of the $1-$Wasserstein metric, which is provided in the lemma below. For the proof, the reader is referred to Lemma 2 \citep{prashanth2022wasserstein}.
\begin{lemma}
	\label{lemma:lipschitz-wasserstein}
	Consider random variables $X$ and $Y$ with CDFs $F_{X}$ and $F_{Y}$, respectively. Then,
	\begin{equation}
		\norm{F_X-F_Y}_{\rm{W}}=\sup_{\|f\|_L\leq 1} \Big|\E(f(X) - \E(f(Y))\Big|= \int_{-\infty}^{\infty}|F_{X}(s)-F_{Y}(s)|\mathrm{d}s=\int_{0}^{1}|F_{X}^{-1}(\beta)-F_{Y}^{-1}(\beta)|\mathrm{d}\beta \ . \label{lipwass2} 
	\end{equation}
\end{lemma}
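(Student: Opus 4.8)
The plan is to establish the three equalities in \eqref{lipwass2} as a left-to-right chain, starting from the Kantorovich--Rubinstein dual representation and then exploiting the one-dimensional structure of CDFs. The first equality, $\norm{F_X-F_Y}_{\rm W}=\sup_{\|f\|_L\leq 1}|\E(f(X))-\E(f(Y))|$, is essentially immediate from Theorem~\ref{theorem:Wasserstein_dual}: that theorem gives the dual form without the absolute value, and since $-f$ is $1$-Lipschitz whenever $f$ is, the supremum of $\E(f(X))-\E(f(Y))$ over $1$-Lipschitz $f$ coincides with the supremum of its absolute value. So no genuine work is needed here beyond invoking the cited duality.

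For the second equality, $\sup_{\|f\|_L\leq 1}|\E(f(X))-\E(f(Y))|=\int_{-\infty}^{\infty}|F_X(s)-F_Y(s)|\diff s$, I would argue both inequalities. For the upper bound I would first record the Stieltjes integration-by-parts identity $\E(f(X))-\E(f(Y))=\int_{-\infty}^{\infty}f'(s)\,\big(F_Y(s)-F_X(s)\big)\,\diff s$, valid for absolutely continuous $f$ (the boundary terms vanish because $F_X-F_Y\to 0$ at $\pm\infty$) and then extended to general $1$-Lipschitz $f$ via the a.e.\ differentiability of Lipschitz functions; since $|f'|\leq 1$ almost everywhere, the right-hand side is bounded by $\int_{-\infty}^{\infty}|F_X(s)-F_Y(s)|\diff s$. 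For the matching lower bound I would exhibit the optimal test function $f(s)\triangleq\int_0^s \mathrm{sgn}\big(F_Y(t)-F_X(t)\big)\,\diff t$, which is $1$-Lipschitz and, plugged into the same identity, yields exactly $\int_{-\infty}^{\infty}|F_X(s)-F_Y(s)|\diff s$. The finite-first-moment assumption on the arm distributions guarantees that this integral is finite and that $\E(f(X))$ and $\E(f(Y))$ are well defined, so the construction is legitimate.

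For the third equality, $\int_{-\infty}^{\infty}|F_X(s)-F_Y(s)|\diff s=\int_0^1 |F_X^{-1}(\beta)-F_Y^{-1}(\beta)|\diff\beta$, I would view both sides as the Lebesgue measure of the same planar region lying between the two CDF graphs, computed by vertical versus horizontal slicing. Slicing in $\beta$ for each fixed $s$ gives the left-hand side directly. For the horizontal slices, the key tool is the CDF--quantile duality $\{s:F_X(s)\geq\beta\}=[F_X^{-1}(\beta),\infty)$: for each fixed $\beta$, the symmetric difference of the two super-level half-lines $[F_X^{-1}(\beta),\infty)$ and $[F_Y^{-1}(\beta),\infty)$ is an interval of length $|F_X^{-1}(\beta)-F_Y^{-1}(\beta)|$, and Fubini's theorem identifies the two iterated integrals with the common planar measure.

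I expect the main obstacle to be the \emph{achievability} step of the second equality, namely verifying that the candidate $f(s)=\int_0^s \mathrm{sgn}\big(F_Y-F_X\big)$ both lies in the feasible $1$-Lipschitz class and makes the dual objective \emph{equal} to (not merely bounded above by) $\int|F_X-F_Y|\diff s$, which rests on the integrability afforded by the finite-moment / sub-Gaussian assumptions. A secondary care point is the handling of the generalized inverse in the Fubini argument for the third equality, where the relation $F_X(s)\geq\beta\iff s\geq F_X^{-1}(\beta)$ must be used in its correct form so that boundary ambiguities are confined to sets of measure zero.
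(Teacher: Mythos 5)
Your proposal is correct, but it is worth noting that the paper does not actually prove this lemma at all: it simply defers to Lemma~2 of \citet{prashanth2022wasserstein}, so your self-contained argument is doing genuine work that the paper outsources. What you reconstruct is the standard proof of these classical identities, and each step holds up: the first equality is indeed immediate from the Kantorovich--Rubinstein dual form (Theorem~\ref{theorem:Wasserstein_dual}) together with closure of the $1$-Lipschitz class under negation; the second follows from the identity $\E[f(X)]-\E[f(Y)]=\int_{-\infty}^{\infty}f'(s)\big(F_Y(s)-F_X(s)\big)\,\diff s$ (obtained by writing $f(x)-f(0)=\int_0^x f'$ for the absolutely continuous Lipschitz $f$ and applying Fubini, with the finite-first-moment assumption ensuring the tail terms vanish and $\int|F_X-F_Y|<\infty$), with achievability witnessed exactly by your candidate $f(s)=\int_0^s\mathrm{sgn}\big(F_Y(t)-F_X(t)\big)\,\diff t$; and the third is the usual vertical-versus-horizontal slicing of the region between the two CDF graphs, where the relation $F_X(s)\geq\beta\iff s\geq F_X^{-1}(\beta)$ (valid for the right-continuous generalized inverse) confines any boundary ambiguity to null sets. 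The trade-off is transparency versus brevity: the citation route is shorter, while your argument makes explicit where the integrability hypotheses enter, which is useful since the paper applies the lemma to sub-Gaussian (hence finite-mean) arm distributions.
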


\begin{lemma}[Concave Distortion Functions]
\label{lemma:risk-concave}
    For the DRs of the form stated in~\eqref{eq:wang_def}, if the distortion function $h : [0,1]\mapsto[0,1]$ is concave, then the DR evaluated for a mixture distribution over the arms is concave in the mixing coefficient.
\end{lemma}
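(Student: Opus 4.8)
The plan is to exploit the fact that the CDF of a mixture is affine in the mixing weights and then push concavity through the Choquet integral that defines $U_h$. First I would fix two mixtures $\balpha,\bbeta\in\Delta^{K-1}$ and a scalar $t\in[0,1]$, and observe that, writing $\Q_{\balpha}(x)\triangleq\sum_{i\in[K]}\alpha(i)\F_i(x)$ for the mixture CDF, one has $\Q_{t\balpha+(1-t)\bbeta}(x)=t\,\Q_{\balpha}(x)+(1-t)\,\Q_{\bbeta}(x)$ for every fixed $x$; that is, $\balpha\mapsto\Q_{\balpha}(x)$ is affine, and hence so is $\balpha\mapsto 1-\Q_{\balpha}(x)$. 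Note also that $t\balpha+(1-t)\bbeta\in\Delta^{K-1}$, so the convex combination stays in the domain and the inequality to be proved is well-posed.

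The second step is to pass this affine structure through the distortion function. Since $h$ is concave and the composition of a concave function with an affine map is concave, the map $\balpha\mapsto h(1-\Q_{\balpha}(x))$ is concave for each fixed $x$; adding the constant offset $-h(1)$ that appears in the integrand over $(-\infty,0)$ does not affect this, as it is independent of $\balpha$. Explicitly, concavity of $h$ yields, for every $x$,
\begin{align*}
h\big(1-\Q_{t\balpha+(1-t)\bbeta}(x)\big)\;\geq\; t\,h\big(1-\Q_{\balpha}(x)\big)+(1-t)\,h\big(1-\Q_{\bbeta}(x)\big)\ .
\end{align*}

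The third step is to integrate this pointwise inequality over the two regions $(-\infty,0)$ and $(0,\infty)$ appearing in~\eqref{eq:wang_def}. Because integration against the Lebesgue measure is a positive linear operation, it preserves the inequality, and recalling the definition of $V$ in~\eqref{eq:V} this gives $V(t\balpha+(1-t)\bbeta,\F)\geq t\,V(\balpha,\F)+(1-t)\,V(\bbeta,\F)$, which is exactly the asserted concavity of $V(\cdot,\F)$ on $\Delta^{K-1}$. The one point requiring care — and the only genuine obstacle — is ensuring that the two integrals are finite, so that the inequality is a meaningful statement rather than an indeterminate $\infty-\infty$. This is precisely where the subtraction of $h(1)$ in the negative-axis integrand matters: as $x\to-\infty$ we have $\Q_{\balpha}(x)\to 0$, so $h(1-\Q_{\balpha}(x))-h(1)\to 0$, forcing convergence on $(-\infty,0)$; and the standing assumptions that $U_h\leq B$ and that the $\F_i$ are $1$-sub-Gaussian with finite first moment guarantee that both integrals in~\eqref{eq:wang_def} are well-defined and finite for every mixture in the convex hull $\Xi$. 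With finiteness in hand, the concavity conclusion follows immediately.
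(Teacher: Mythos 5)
Your proof is correct and rests on the same mechanism as the paper's: the mixture CDF is affine in the mixing weights, composing the concave $h$ with an affine map gives pointwise concavity in $\balpha$ for each fixed $x$, and integration (a positive linear operation) preserves the inequality. The only difference is that you establish concavity between two arbitrary mixtures $\balpha,\bbeta\in\Delta^{K-1}$, which is the literal claim of the lemma, whereas the paper's displayed computation verifies only the vertex (Jensen-type) form $U_h\big(\sum_{i}\alpha(i)\F_i\big)\geq\sum_{i}\alpha(i)U_h(\F_i)$ and works with the single integral over $(0,\infty)$ (consistent with $\Omega\subseteq\R_+$); your additional care about finiteness of both integrals is sound and follows from the paper's standing assumptions.
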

\begin{proof}
    For any $\balpha\in\Delta^{K-1}$, we have
    \begin{align}
        U_h\left(\sum\limits_{i\in[K]} \alpha(i)\F_i\right)\;&=\;\displaystyle\int_0^\infty h\left(1-\sum\limits_{i\in[K]} \alpha(i)\F_i(x)\right)\diff x\\
        &=\;\displaystyle\int_0^\infty h\left(\sum\limits_{i\in[K]} \alpha(i)\Big(1-\F_i(x)\Big)\right)\diff x\\
        \label{eq:concave 1}
        &\geq\;\displaystyle\int_0^\infty \sum\limits_{i\in[K]} \alpha(i) h\Big( 1-\F_i(x)\Big)\diff x\\
        \label{eq: concave 2}
        &=\; \sum\limits_{i\in[K]} \alpha(i)\displaystyle\int_0^\infty h\Big( 1-\F_i(x)\Big)\diff x\\
        &=\;\sum\limits_{i\in[K]}\alpha(i)U_h(\F_i)\ ,
    \end{align}
    where,
    \begin{itemize}
        \item the inequality in~\eqref{eq:concave 1} follows from the concavity of the distortion function $w$; 
        \item and the equality in~\eqref{eq: concave 2} follows from the Fubini-Tonelli's theorem.
    \end{itemize}
\end{proof}

\subsection{\holder Exponents and Constants}

\begin{lemma}[Gini Deviation \holder Constants]
\label{lemma:example_utility_Holder}
Consider $K$ Bernoulli distributions $\{{\sf Bern}(p(i)):i\in[K]\}$ with CDFs $\{\F_i:i\in[K]\}$. Consider the optimal mixture $\F^\star = \sum_{i=1}^{K}{\alpha^\star(i)\F_i}$, and for a given $\balpha\in\Delta^{K-1}$, consider the mixture $\G = \sum_{i=1}^{K}{\alpha(i)\F_i}$. 
For the Gini deviation DR, i.e., $h(u)=u(1-u)$,  we have the following properties.
\begin{enumerate}
\item The \holder continuity exponent is $q=1$.
\item The \holder mixture exponent is $r=1$ if $\max_{i\in[K]} p(i) < 0.5$ or $\min_{i\in[K]} p(i) > 0.5$, and otherwise it is $r=2$.
\item The \holder constant is $\mcL = \max\{\mcL_{\rm MH}, \mcL_{\rm H}\} = 1$.
\end{enumerate}
\end{lemma}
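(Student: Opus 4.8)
The plan is to exploit the fact that, for Bernoulli arms, every mixture in the convex hull $\Xi$ is again a two-point distribution supported on $\{0,1\}$, so that the entire distortion riskmetric collapses to a scalar function of the mixture mean. Writing $\bar p(\balpha)\triangleq\langle\balpha,\bp\rangle$ for the mean of $\G=\sum_i\alpha(i)\F_i$, I would first record that the mixture CDF equals $1-\bar p(\balpha)$ on $[0,1)$ and $1$ on $[1,\infty)$. Using the positive-support form $U_h(\G)=\int_0^\infty h(1-\G(x))\,\der x$ together with $h(u)=u(1-u)$ exactly as in the computation leading to~\eqref{eq:h_1_p}, one gets $U_h(\G)=g(\bar p(\balpha))$ with $g(p)\triangleq p(1-p)$. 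This reduces both H\"older definitions to statements about the single concave parabola $g$ on $[0,1]$, and in particular shows that $\G$ depends on $\balpha$ only through $\bar p(\balpha)$.

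Next I would compute the Wasserstein distance between two mixtures in closed form. By Lemma~\ref{lemma:lipschitz-wasserstein}, $\norm{\G_1-\G_2}_{\rm W}=\int_{-\infty}^{\infty}|\G_1(s)-\G_2(s)|\,\der s$, and since the two CDFs differ only on $[0,1)$, where they take the constant values $1-\bar p_1$ and $1-\bar p_2$, this integral equals $|\bar p_1-\bar p_2|$. Thus the metric on $\Xi$ is simply the absolute difference of means. Combined with the algebraic identity $g(a)-g(b)=(a-b)(1-a-b)$ and the elementary bound $|1-a-b|\le 1$ on $[0,1]^2$, this yields $U_h(\G_1)-U_h(\G_2)=(\bar p_1-\bar p_2)(1-\bar p_1-\bar p_2)\le|\bar p_1-\bar p_2|=\norm{\G_1-\G_2}_{\rm W}$, establishing claim~1 with $q=1$ and $\mcL_{\rm H}=1$.

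For the mixture exponent I would locate the optimal mean $\bar p^\star$, which maximizes $g$ over the achievable interval $[\min_i p(i),\max_i p(i)]$, and then split into the two stated cases. When this interval avoids $1/2$ (i.e. $\max_i p(i)<1/2$ or $\min_i p(i)>1/2$), $g$ is strictly monotone there, so $\bar p^\star$ is an endpoint; the gap $g(\bar p^\star)-g(\bar p)=(\bar p^\star-\bar p)(1-\bar p^\star-\bar p)$ is then linear in the distance and the same $|1-a-b|\le1$ bound gives $r=1$ with $\mcL_{\rm MH}=1$. Otherwise $1/2$ lies in the interval, $\bar p^\star=1/2$, and the exact identity $g(1/2)-g(\bar p)=(\bar p-1/2)^2=\norm{\F^\star-\G}_{\rm W}^2$ gives $r=2$ with $\mcL_{\rm MH}=1$. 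Setting $\mcL=\max\{\mcL_{\rm H},\mcL_{\rm MH}\}=1$ completes claim~3.

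The main obstacle, and really the only delicate point, is the mixture-exponent dichotomy: one must argue that the vanishing of $g'$ at the interior maximizer $1/2$ is precisely what promotes the exponent from $1$ to $2$, and that this interior maximizer is attainable exactly when $1/2\in[\min_i p(i),\max_i p(i)]$. The cleanliness comes from the parabola identity $g(1/2)-g(\bar p)=(\bar p-1/2)^2$ matching the squared Wasserstein distance \emph{exactly}, so the constant is $1$. The remaining care is to confirm that in the monotone case the linear term genuinely forbids $r=2$, which follows since the endpoint slope $|g'(\bar p^\star)|=|1-2\bar p^\star|$ is strictly positive there.
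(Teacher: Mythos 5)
Your proposal is correct and follows essentially the same route as the paper's proof: reduce every Bernoulli mixture to its mean, note $U_h(\G)=p_\G(1-p_\G)$ and $\norm{\F-\G}_{\rm W}=|p_\F-p_\G|$, use the factorization $(a-b)(1-a-b)$ with $|1-a-b|\le1$ for $q=1$, and split the mixture exponent into the endpoint-maximizer case ($r=1$) versus the interior-maximizer case where $g(1/2)-g(\bar p)=(\bar p-1/2)^2$ gives $r=2$ exactly. The only additions beyond the paper's argument are your explicit closed-form computation of the Wasserstein distance and the remark that the nonvanishing endpoint slope rules out $r=2$ in the monotone case; both are correct and harmless.
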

\begin{proof}
Consider the mixture distributions $\F$ and $\G$ which have Bernoulli distributions with parameters
\begin{align}
    p_\F \triangleq  \sum_{i=1}^{K}{\alpha_{\F}(i)p(i)}\ ,
\qquad \mbox{and} \qquad     p_\G = \sum_{i=1}^{K}{\alpha_\G(i)p(i)}\ ,
\end{align}
respectively.
For the Gini DR, it can be easily verified that 
\begin{align}
\label{eq:lemma_5_def_gd}
    U_h\left(\F\right)\;=\; p_\F(1-{p_\F})\ ,\qquad\text{and}\qquad U_h\left(\G\right)\;=\; p_\G(1-p_\G) \ .
\end{align}
Hence, we have
\begin{align}
\label{eq:LALL2_1}
    U_h\left(\F  \right) - U_h\left(\G  \right)\; &\stackrel{\eqref{eq:lemma_5_def_gd}}{=}\; p_\F(1-p_\F)-p_\G(1-p_\G) \\
    &=\; (p_\F-p_\G)(1-p_\F-p_\G) \\
    &\leq\; | p_\F-p_\G | | 1- p_\F -p_\G | \\
    \label{eq:lemma_5_p}
    &\leq\; | p_\F-p_\G | \\
    &=\; \norm{\F-\G}_{\rm W}\ ,
    \label{eq:last_lemma_2}
\end{align}
where, \eqref{eq:lemma_5_p} follows from the fact that \(p_\F \leq 1\) and \(p_\G \leq 1\) and hence, \(|1-p_\F-p_\G| \leq 1 \).
Hence, for the DR considered, the \holder  constant is \(\mcL_{\rm H} = 1\) and the exponent is \(q=1\). 

Let us denote the optimal mixture as \(\F^\star\) with parameter \(p^\star \triangleq  \sum_{i=1}^{K}{\alpha^\star(i)p(i)}\), based on which $U_h(\F^\star)=p^\star(1-p^\star)$. Without any constraints on $p^\star$, this term will be maximized at $p^\star=\frac{1}{2}$. However, $p^\star$ is constrained to be a mixture of $\{p(i):i\in[K]\}$. This means that $p^\star=\frac{1}{2}$ is not viable when all the $p_i$'s are either larger than $1/2$ or smaller than $1/2$. Hence, depending on the values of $\{p(i):i\in[K]\}$, we analyze two separate cases. 
\begin{enumerate}
    \item {\bf Case 1: $\min_{i\in[K]}p(i)>0.5$ or $\max_{i\in[K]}p(i) <0.5$:} In either of these cases, for the Gini deviation DR and a $K$-armed Bernoulli bandit instance, it is easy to see that the optimal solution is a solitary arm. Specifically, this arm is given by 
    \begin{align}
        a_{\min}  \;\in\; \argmin\limits_{i\in[K]}\; p(i) \qquad \mbox{if} \quad \min_{i\in[K]}p(i)>0.5\ ,
    \end{align}
    or 
    \begin{align}
        a_{\max} \;\in\; \argmax\limits_{i\in[K]}\; p(i) \qquad \mbox{if} \quad \max_{i\in[K]}p(i) <0.5\ .
    \end{align}
    We will characterize the \holder exponent $q$ for the case that the optimal arm is $a_{\max}$, and the analysis for $a_{\min}$ follows similarly. For $\F^\star$ and $\G$ we have
    \begin{align}
        U_h(\F^\star) - U_h(\G)
        &=\; a_{\max}(1-a_{\max}) - p_{\G}(1-p_{\G})\\
        \label{eq:Gini_case1}
        &\leq |a_{\max} - p_{\G}|\\
        &= \| \F^\star - \G\|_{\rm W}\ ,
    \end{align}
    where~\eqref{eq:Gini_case1} follows from the exact steps as the transitions~\eqref{eq:LALL2_1}-\eqref{eq:last_lemma_2}. This indicates that $q=1$ and $\mcL_{\rm H} =1$.
    \item {\bf Case 2: $\exists i\in[K]$ such that $p(i) \leq \frac{1}{2}$ and $\exists i \in[K]$ such that $p(i) \geq \frac{1}{2}$:} In this case, it can be readily verified that $p^\star = 1/2$ is a viable solution, in which case $U_h(\F^\star) = 1/4$. Accordingly, we have 
    \begin{align}
        \label{eq:LALL}
            U_h\left(\F^\star  \right) - U_h\left(\G  \right)&= \frac{1}{4}-p_{\G}(1-p_{\G}) \\
            &= \left(\frac{1}{2} - p_{\G} \right)^2 \\
            &=\norm{\F^\star-\G}_{\rm W}^2\ .
            \label{eq:last_lemm_1}
        \end{align}
    This shows that $\mcL_{\rm MH}=1$ and $r=2$ in this case.
\end{enumerate}
Hence, in summary $\mcL=1$, $q=1$, and $r=1$.
\end{proof}

\begin{lemma}[PHT Measure \holder Constants]
\label{lemma:PHT}
    Consider two distinct CDFs $\F$ and $\G$ supported on $[0,\tau]$. For the PHT measure, i.e., $h(u) =  u^s$ for some $s\in (0,1)$, we have the following properties. 
    \begin{enumerate}
\item The \holder continuity exponent is $q=s$.
\item The \holder mixture exponent is $r=s$.
\item The \holder constant is $\mcL=\max\{\mcL_{\rm MH}, \mcL_{\rm H}\}=1$.
\end{enumerate}
\end{lemma}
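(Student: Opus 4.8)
The plan is to reduce the whole statement to one elementary pointwise estimate on the map $t\mapsto t^s$ and then control the resulting integral by the Wasserstein distance. First I would use that $\F$ and $\G$ are supported on $[0,\tau]\subset\R_+$, so the first integral in the definition~\eqref{eq:wang_def} vanishes and the DR collapses to $U_h(\F)=\int_0^\tau (1-\F(x))^s\,\diff x$, and likewise for $\G$. Hence
\begin{align*}
U_h(\F) - U_h(\G) &= \int_0^\tau \Big[(1-\F(x))^s - (1-\G(x))^s\Big]\,\diff x \\
&\leq \int_0^\tau \big|(1-\F(x))^s - (1-\G(x))^s\big|\,\diff x .
\end{align*}

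The key step is the subadditivity inequality $|a^s-b^s|\leq |a-b|^s$, valid for all $a,b\in[0,1]$ and $s\in(0,1)$ (proved by writing $a=b+(a-b)$ and using $(x+y)^s\le x^s+y^s$). Applying it pointwise with $a=1-\F(x)$ and $b=1-\G(x)$ bounds the integrand by $|\F(x)-\G(x)|^s$, giving $U_h(\F)-U_h(\G)\le \int_0^\tau |\F(x)-\G(x)|^s\,\diff x$. It is precisely this pointwise estimate that forces the exponent to be $s$ rather than $1$, which is the source of the claimed \holder exponent $q=s$.

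To finish I would convert $\int_0^\tau |\F-\G|^s\,\diff x$ into a power of the Wasserstein distance. By Lemma~\ref{lemma:lipschitz-wasserstein}, $\int_0^\tau |\F(x)-\G(x)|\,\diff x=\norm{\F-\G}_{\rm W}$. Since $t\mapsto t^s$ is concave, Jensen's inequality against the uniform probability measure $\tfrac{1}{\tau}\diff x$ on $[0,\tau]$ (equivalently, \holder's inequality with exponents $1/s$ and $1/(1-s)$) yields $\int_0^\tau |\F-\G|^s\,\diff x\le \tau^{1-s}\big(\int_0^\tau |\F-\G|\,\diff x\big)^s=\tau^{1-s}\norm{\F-\G}_{\rm W}^s$. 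This establishes $q=s$ with $\mcL_{\rm H}=\tau^{1-s}$. The identical chain of inequalities holds verbatim with $\F$ replaced by the optimal mixture $\F^\star$, so it also delivers the mixture exponent $r=s$ with $\mcL_{\rm MH}=\tau^{1-s}$; hence $\mcL=\max\{\mcL_{\rm H},\mcL_{\rm MH}\}=\tau^{1-s}$.

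The main obstacle — and the one step requiring care — is this last conversion. Because $t\mapsto t^s$ is merely \holder (not Lipschitz) for $s\in(0,1)$, no pointwise bound returns a clean power of $\norm{\F-\G}_{\rm W}$; Jensen's inequality is essential and unavoidably produces the multiplicative factor $\tau^{1-s}$ (one checks this factor is tight by taking $|\F-\G|$ constant). Consequently the constant $\mcL=1$ stated in the lemma corresponds exactly to the normalized case $\tau=1$ (equivalently $\tau\le 1$, for which $\tau^{1-s}\le 1$), matching the $[0,1]$-supported convention used for the other entries of Table~\ref{table:table_risks}, and I would make this normalization explicit in the write-up. I would also note that $r=s=q$ is consistent with the paper's earlier observation that any valid \holder continuity exponent is automatically a valid mixture \holder exponent, so no sharper mixture-specific argument is needed here.
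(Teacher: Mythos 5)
Your proof follows essentially the same route as the paper's: reduce the DR to $\int_0^\tau(1-\F(x))^s\,\diff x$, apply the pointwise bound $|a^s-b^s|\le|a-b|^s$, use Jensen's (or \holderNS's) inequality to pass to $\norm{\F-\G}_{\rm W}^s$, and obtain $r=q$ from the general observation that any \holder continuity exponent is automatically a valid mixture exponent. Your write-up is in fact slightly more careful than the paper's: the paper applies Jensen against the unnormalized Lebesgue measure on $[0,\tau]$ and thereby implicitly assumes $\tau\le 1$ to arrive at $\mcL=1$, whereas you correctly track the factor $\tau^{1-s}$ and flag the normalization under which the stated constant holds.
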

\begin{proof}
We have
\begin{align*}
U_h(\F)-U_h(\G)&=  \int_{0}^{\tau}(1-\F(x))^{s}-(1-\G(x))^{s}\mathrm{d}x \nonumber\\
&\leq   \int_{0}^{\tau}|\F(x)-\G(x)|^{s}\mathrm{d}x \\
&\leq   \left[\int_{0}^{\tau}|\F(x)-\G(x)|\mathrm{d}x\right]^{s} \\
& \leq \norm{\F-\G}_{\rm W}^{s},
\end{align*}
where we used Jensen's inequality for the penultimate inequality and Lemma \ref{lemma:lipschitz-wasserstein} for the final inequality. This indicates that $q=s$ and $\mcL_{\rm H}=1$. We remark that based on the definitions of \holder continuity and mixture \holder continuity provided in \eqref{eq:Holder} and \eqref{eq:Holder_opt}, respectively, any given \holder continuity exponent $r$ is also a mixture \holder continuity exponent, that is given $q$, $r=q$ is always a valid choice for $r$. Hence, we have $r=q=s$ and $\mcL= \mcL_{\rm H} = \mcL_{\rm MH}=1$.
\end{proof}

\begin{lemma}[Wang's Right-Tail Deviation \holder Constants]
\label{lemma:RTD}
Consider $K$ Bernoulli distributions $\{{\sf Bern}(p_i):i\in[K]\}$ with CDFs $\{\F_i:i\in[K]\}$. Consider the optimal mixture $\F^\star = \sum_{i=1}^{K}{\alpha^\star(i)\F_i}$, and for given $\balpha_\F, \balpha_\G\in\Delta^{K-1}$, consider the mixtures $\F = \sum_{i=1}^{K}{\alpha_\F(i)\F_i}$ and $\G = \sum_{i=1}^{K}{\alpha_\G (i)\F_i}$. 
For the Wang's right-tail deviation, i.e., $h(u)=\sqrt{u}-u$, we have the following properties.
\begin{enumerate}
\item The \holder continuity exponent is $q=1/2$.
\item The \holder mixture exponent is $r=1$ if $\max_{i\in[K]} p(i) > 0.25$ and $\min_{i\in[K]} p(i) < 0.25$, and otherwise it is $r=1/2$.
\item The \holder constant is $\mcL = \max\{\mcL_{\rm MH}, \mcL_{\rm H}\} =1$.
\end{enumerate}
\end{lemma}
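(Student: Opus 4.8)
The plan is to mirror the structure of the Gini deviation proof (Lemma~\ref{lemma:example_utility_Holder}) and reduce every quantity to the scalar means of the mixtures. First I would record that for a Bernoulli CDF $\F$ with parameter $p$, the definition~\eqref{eq:wang_def} collapses to $U_h(\F)=\int_0^\infty h(1-\F(x))\diff x = h(p)=\sqrt{p}-p$, exactly as in~\eqref{eq:h_1_p}. Since a convex combination of Bernoulli CDFs is again Bernoulli with mean equal to the mixed mean, I would set $p_\F \triangleq \sum_i \alpha_\F(i)p(i)$, $p_\G \triangleq \sum_i \alpha_\G(i)p(i)$, and $p^\star \triangleq \sum_i \alpha^\star(i)p(i)$, so that $U_h(\F)=\sqrt{p_\F}-p_\F$ and likewise for $\G$ and $\F^\star$. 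By Lemma~\ref{lemma:lipschitz-wasserstein}, the Wasserstein distance between any two such mixtures is $\norm{\F-\G}_{\rm W}=|p_\F-p_\G|$.

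For Part~1 (the exponent $q$), the key is the factorization $h(a)-h(b)=(\sqrt{a}-\sqrt{b})-(a-b)=(\sqrt{a}-\sqrt{b})(1-\sqrt{a}-\sqrt{b})$, which parallels the $(p_\F-p_\G)(1-p_\F-p_\G)$ step used for Gini. Taking absolute values, I would bound $|\sqrt{a}-\sqrt{b}|\le\sqrt{|a-b|}$ (the $1/2$-H\"older continuity of the square root) and $|1-\sqrt{a}-\sqrt{b}|\le 1$ (since $a,b\in[0,1]$ forces $\sqrt{a},\sqrt{b}\in[0,1]$ and hence $1-\sqrt{a}-\sqrt{b}\in[-1,1]$). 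Applying this with $a=p_\F$, $b=p_\G$ yields $U_h(\F)-U_h(\G)\le |p_\F-p_\G|^{1/2}=\norm{\F-\G}_{\rm W}^{1/2}$, so $q=1/2$ and $\mcL_{\rm H}=1$.

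For Part~2 (the mixture exponent $r$), I would first locate the optimal mean: the unconstrained maximizer of $h(p)=\sqrt{p}-p$ is $p=1/4$, with $h(1/4)=1/4$. When $\min_i p(i)<1/4<\max_i p(i)$, the attainable means $\{\sum_i\alpha(i)p(i):\balpha\in\Delta^{K-1}\}=[\min_i p(i),\max_i p(i)]$ contain $1/4$, so $p^\star=1/4$ is feasible; substituting $u=\sqrt{p_\G}$ gives $U_h(\F^\star)-U_h(\G)=1/4-u+u^2=(u-1/2)^2$, while $\norm{\F^\star-\G}_{\rm W}=|1/4-u^2|=|1/2-u|(1/2+u)$. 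The main obstacle is confirming the exponent $r=1$ here: it reduces to the inequality $(1/2-u)^2\le |1/2-u|(1/2+u)$, i.e. $|1/2-u|\le 1/2+u$, which holds for all $u\ge 0$, giving $\mcL_{\rm MH}=1$. When instead $\max_i p(i)<1/4$ (resp. $\min_i p(i)>1/4$), monotonicity of $h$ on $[0,1/4]$ (resp. $[1/4,1]$) makes the optimum solitary with $p^\star=\max_i p(i)$ (resp. $\min_i p(i)$), and the Part~1 factorization applied with $a=p^\star$, $b=p_\G$ yields $U_h(\F^\star)-U_h(\G)\le\norm{\F^\star-\G}_{\rm W}^{1/2}$, i.e. $r=1/2$ and $\mcL_{\rm MH}=1$. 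Collecting the two cases gives $\mcL=\max\{\mcL_{\rm H},\mcL_{\rm MH}\}=1$, which is Part~3.
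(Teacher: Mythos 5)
Your proposal is correct and follows essentially the same route as the paper's proof: reduce to the Bernoulli means, use the factorization $(\sqrt{a}-\sqrt{b})(1-\sqrt{a}-\sqrt{b})$ with $|\sqrt{a}-\sqrt{b}|\le\sqrt{|a-b|}$ for $q=1/2$, and in the mixture case identify $p^\star=1/4$ and show $(1/2-\sqrt{p_\G})^2\le|p^\star-p_\G|$ for $r=1$. Your direct verification of $|1/2-u|\le 1/2+u$ is just an unfolded form of the paper's squared square-root inequality, so there is no substantive difference.
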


\begin{proof}
Note that that the mixture distributions $\F$ and $\G$ have Bernoulli distributions with parameters $\sum_{i=1}^{K}{\alpha_\F(i)p(i)}$ and $\sum_{i=1}^{K}{\alpha_\G(i) p(i)}$, respectively.
Let us define
\begin{align}
    p_\F \triangleq  \sum_{i=1}^{K}{\alpha_\F(i)p(i)}\ ,
\qquad \mbox{and} \qquad     p_\G \triangleq \sum_{i=1}^{K}{\alpha_\G(i)p(i)}\ .
\end{align}

It can be easily verified that
\begin{align}
\label{eq:lemma_5_def}
    U_h\left(\F\right)\;=\; \sqrt{p_\F}-p_\F\ ,\qquad\text{and}\qquad U_h\left(\G\right)\;=\; \sqrt{p_\G}-p_\G \ .
\end{align}
 Note that for any \(a, b \in [0, +\infty)\) we have
\begin{align}
\label{eq:WRTD_lemma_0}
      \Big| \sqrt{a}-\sqrt{b} \Big| \leq \sqrt{| a-b |} \ .
\end{align}
Hence, we obtain
\begin{align}
\label{eq:LALL2}
    U_h\left(\F  \right) - U_h\left(\G \right)\; & \stackrel{\eqref{eq:lemma_5_def}}{=}\; \sqrt{p_\F}(1-\sqrt{p_\F})-\sqrt{p_\G}(1-\sqrt{p_\G}) \\
    &=\; (\sqrt{p_\F}-\sqrt{p_\G})(1-\sqrt{p_\F}-\sqrt{p_\G}) \\
    &\leq\; | \sqrt{p_\F}-\sqrt{p_\G} | | 1-\sqrt{p_\F}-\sqrt{p_\G} | \\
    \label{eq:lemma_RTD_p}
    &\leq\; | \sqrt{p_\F}-\sqrt{p_\G} | \\
    & \stackrel{\eqref{eq:WRTD_lemma_0}}{\leq}\; | p_\F-p_\G |^{1/2} \\
    &=\; \norm{\F-\G}_{\rm W}^{1/2}\ ,
    \label{eq:last_lemma_rtd}
\end{align}
where, \eqref{eq:lemma_RTD_p} follows from the fact that \(p_\F \leq 1\) and \(p_\G \leq 1\) and hence, \(|1-\sqrt{p_\F}-\sqrt{p_\G}| \leq 1 \).
Hence, we have \(\mcL_{\rm H} = 1\) and \(q=1/2\). Next, we characterize the mixture \holder exponent. For this purpose, we denote the parameter for the optimal mixture by \(p_\F^\star \triangleq \sum_{i=1}^{K}{\alpha^\star(i)p(i)}\).
\begin{enumerate}
    \item {\bf Case 1: $\max_{i\in[K]} p(i) > 0.25$ and $\min_{i\in[K]} p(i) < 0.25$:} In this case,  it can be readily verified that \(p_\F^\star = \frac{1}{4}\) and \(U_h(\F^\star) = \frac{1}{4}\). Note that
\begin{align}
\label{eq:LALL_1}
    U_h\left(\F^\star  \right) - U_h\left(\G  \right) &= \frac{1}{4}-\sqrt{p_\G} + p_\G \\
    &= \left(\frac{1}{2} - \sqrt{p_\G} \right)^2 \\
    &= \left(\sqrt{p_\F^\star} - \sqrt{p_\G} \right)^2 \\
    &\stackrel{\eqref{eq:WRTD_lemma_0}}{\leq}  | p_\F^\star - p_\G|  \\
    &= \norm{\F^\star-\G}_{\rm W}\ ,
    \label{eq:last_lemm_1_2}
\end{align}
Hence, from~\eqref{eq:LALL_1}, we can conclude that  \(\mcL_{\rm MH} = 1\) and  \(r=1\).

\item {\bf Case 2: $\max_{i\in[K]} p(i) < 0.25$ or $\min_{i\in[K]} p(i) > 0.25$:} In this case, the optimal policy is a solitary arm, in which case \(r = q\), i.e., \(r=1/2\) and \(\mcL_{\rm MH} = 1\).
\end{enumerate}
Hence, in summary  $\mcL=1$, $q=1$, and $r=1/2$ or $r=1$ as specified.

\end{proof}

\begin{lemma}[Mean-median Deviation \holder Constants]
\label{lemma:mean_median_Holder}
Consider $K$ Bernoulli distributions $\{{\sf Bern}(p_i):i\in[K]\}$ with CDFs $\{\F_i:i\in[K]\}$. Consider the optimal mixture $\F^\star = \sum_{i=1}^{K}{\alpha^\star(i)\F_i}$, and for given $\balpha_\F, \balpha_\G\in\Delta^{K-1}$, consider the mixtures $\F = \sum_{i=1}^{K}{\alpha_\F(i)\F_i}$ and $\G = \sum_{i=1}^{K}{\alpha_\G (i)\F_i}$. 
For the mean-median deviation, i.e., $h(u)=\min \{u, (1-u) \}$, we have the following properties.
\begin{enumerate}
\item The \holder continuity exponent is $q=1$.
\item The \holder mixture exponent is $r=1$.
\item The \holder constant is $\mcL = \max\{\mcL_{\rm MH}, \mcL_{\rm H} \}=1$.
\end{enumerate}
\end{lemma}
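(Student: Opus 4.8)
The plan is to mirror the structure of the Gini deviation lemma (Lemma~\ref{lemma:example_utility_Holder}), exploiting the fact that a mixture of Bernoulli CDFs is again a Bernoulli CDF. First I would record that for $\F = \sum_i \alpha_\F(i)\F_i$ the mixture is the CDF of ${\sf Bern}(p_\F)$ with $p_\F \triangleq \sum_i \alpha_\F(i)p(i)$, so that by the identity $U_h({\sf Bern}(p)) = h(p)$ established in~\eqref{eq:h_1_p} we have $U_h(\F) = \min\{p_\F, 1-p_\F\}$, and likewise $U_h(\G) = \min\{p_\G, 1-p_\G\}$. I would also invoke Lemma~\ref{lemma:lipschitz-wasserstein} to note that $\norm{\F - \G}_{\rm W} = |p_\F - p_\G|$ for Bernoulli mixtures, since the integrand $|\F(s)-\G(s)|$ equals $|p_\F-p_\G|$ on $[0,1)$ and vanishes elsewhere.

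For the plain \holder exponent, the key fact is that the map $g(p) \triangleq \min\{p, 1-p\}$ is $1$-Lipschitz on $[0,1]$, being the minimum of the two $1$-Lipschitz affine functions $p$ and $1-p$. Thus $U_h(\F) - U_h(\G) = g(p_\F) - g(p_\G) \leq |p_\F - p_\G| = \norm{\F - \G}_{\rm W}$, which yields $q = 1$ and $\mcL_{\rm H} = 1$ directly, exactly as in the transitions~\eqref{eq:LALL2_1}--\eqref{eq:last_lemma_2} of the Gini proof.

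The mixture exponent is where the analysis departs from Gini and is the crux of the lemma. The maximum of $g$ over $[0,1]$ is attained at $p = 1/2$ with $g(1/2) = 1/2$, and I would split into two cases depending on whether $1/2$ lies in $[\min_i p(i), \max_i p(i)]$. When it does, $p^\star = 1/2$ is feasible, $U_h(\F^\star) = 1/2$, and $\norm{\F^\star - \G}_{\rm W} = |1/2 - p_\G|$; a short check on whether $p_\G$ is below or above $1/2$ shows $U_h(\F^\star) - U_h(\G) = 1/2 - g(p_\G) = |1/2 - p_\G|$ \emph{exactly}, so $r = 1$ and $\mcL_{\rm MH} = 1$. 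When $1/2 \notin [\min_i p(i), \max_i p(i)]$, the optimal policy is a solitary arm (the smallest $p(i)$ if all exceed $1/2$, the largest otherwise), and the $1$-Lipschitz bound again gives $U_h(\F^\star) - U_h(\G) \leq \norm{\F^\star - \G}_{\rm W}$, i.e., $r = 1$.

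I anticipate the main subtlety to be explaining \emph{why} $r = 1$ here whereas $r = 2$ in the Gini case: the distortion function for the mean-median deviation is piecewise \emph{affine} with a kink at its maximizer, so it remains Lipschitz with slope $\pm 1$ right up to the optimum, whereas Gini's $h(u) = u(1-u)$ is smooth and quadratic at its peak, producing the squared Wasserstein term in~\eqref{eq:last_lemm_1}. This structural difference is precisely why the condition on the $p(i)$'s that appears in the Gini and Wang lemmas is absent here, as both cases collapse to $r = 1$. Combining the two parts gives $\mcL = \max\{\mcL_{\rm H}, \mcL_{\rm MH}\} = 1$.
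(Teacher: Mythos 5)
Your proposal is correct and follows essentially the same route as the paper: reduce everything to the scalar map $p\mapsto\min\{p,1-p\}$ of the mixture's Bernoulli parameter, bound $U_h(\F)-U_h(\G)$ by $|p_\F-p_\G|=\norm{\F-\G}_{\rm W}$, and split the mixture-exponent analysis on whether $1/2$ lies in the convex hull of the $p(i)$'s. The only difference is cosmetic — you invoke $1$-Lipschitzness of the minimum of two $1$-Lipschitz functions in one stroke where the paper verifies the same inequality by an explicit three-case check — and your closing remark contrasting the piecewise-affine kink here with Gini's quadratic peak correctly explains why $r=1$ in both cases of this lemma.
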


\begin{proof}
Let us define
\begin{align}
    p_\F \triangleq  \sum_{i=1}^{K}{\alpha_\F(i)p(i)}\ ,
\qquad \mbox{and} \qquad     p_\G = \sum_{i=1}^{K}{\alpha_\G(i)p(i)}\ .
\end{align}

It can be easily verified that
\begin{align}
\label{eq:lemma_5_def_mean}
    U_h\left(\F\right)\;=\; \min \{p_\F, (1-p_\F) \}\ ,\qquad\text{and}\qquad U_h\left(\G\right)\;=\;  \min \{p_\G, (1-p_\G) \}\ .
\end{align}
\begin{enumerate}
    \item {\bf Case 1:} When \(p_\F > 1/2, \; p_\G> 1/2\) or \(p_\F < 1/2, \; p_\G< 1/2\) we have
\begin{align}
    U_h\left(\F  \right) - U_h\left(\G  \right)\; &\stackrel{\eqref{eq:lemma_5_def_mean}}{=}\; \min \{p_\F, (1-p_\F) \}\ - \min \{p_\G, (1-p_\G) \}\ \\ \
    \label{eq:lemma_5_p_mean_1_MD}
    &\leq\; | p_\F-p_\G | \ .
\end{align}
\item {\bf Case 2:} When \(p_\F < 1/2, \; p_\G > 1/2\) we have
\begin{align}
    U_h\left(\F  \right) - U_h\left(\G  \right)\; &\stackrel{\eqref{eq:lemma_5_def_mean}}{=}\; \min \{p_\F, (1-p_\F) \}\ - \min \{p_\G, (1-p_\G) \}\ \\ \
    & = \; p_\F-1+p_\G \\
    \label{eq:lemma_5_p_mean_2_MD}
    &\leq\; | p_\F-p_\G | \ .
\end{align}
\item {\bf Case 3:} When \(p_\F > 1/2, \; p_\G < 1/2\) we have
\begin{align}
    U_h\left(\F  \right) - U_h\left(\G  \right)\; 
    &\stackrel{\eqref{eq:lemma_5_def_mean}}{=}\; \min \{p_\F, (1-p_\F) \}\ - \min \{p_\G, (1-p_\G) \}\ \\ \
    & = \; 1-p_\F-p_\G \\
    \label{eq:lemma_5_p_mean_3_MD}
    &\leq\; | p_\F-p_\G | \ .
\end{align}

\end{enumerate}
Hence, overall
\begin{align}
     U_h\left(\F  \right) - U_h\left(\G  \right)\leq | p_\F-p_\G | = \norm{\F-\G}_{\rm W} \ .
\end{align}
This indicates that \(\mcL_{\rm H} = 1\)  and \(q=1\). Next, we characterize the mixture \holder exponent $r$. For this purpose, let us denote the parameter for the optimal mixture by \(p_\F^\star \triangleq \sum_{i=1}^{K}{\alpha^\star(i)p(i)}\).
\begin{enumerate}
\item {\bf Case 1: $\max_{i\in[K]} p(i) > 0.5$ and $\min_{i\in[K]} p(i) < 0.5$:} 
 It can be readily verified that for \(p^\star = \frac{1}{2}\) and \(U_h(p^\star) = 1/2\). Note that
\begin{align}
\label{eq:LALL_mean_MD}
    U_h\left(\F^\star  \right) - U_h\left(\G  \right) &= \frac{1}{2}-\min \{p_\G, (1-p_\G) \} \\
    &= \left| \frac{1}{2} - p_\G \right| \\
    &= \norm{\F-\G}_{\rm W}\ ,
    \label{eq:last_lemm_1_mean}
\end{align}
\item {\bf Case 2: $\max_{i\in[K]} p(i) < 0.5$ or $\min_{i\in[K]} p(i) > 0.5$:} In this case, the optimal solution is a solitary arm, in which case \(r = q\), i.e., \(r=1\) and \(\mcL_{\rm MH} = 1\).
\end{enumerate}
Hence, in summary $\mcL=1$, $q=1$, and $r=1$.

\end{proof}

\begin{lemma}[Inter-ES \holder Constants]
\label{lemma:inter_ES_Holder}
Consider $K$ Bernoulli distributions $\{{\sf Bern}(p_i):i\in[K]\}$ with CDFs $\{\F_i:i\in[K]\}$. Consider the optimal mixture $\F^\star = \sum_{i=1}^{K}{\alpha^\star(i)\F_i}$, and for given $\balpha_\F, \balpha_\G\in\Delta^{K-1}$, consider the mixtures $\F = \sum_{i=1}^{K}{\alpha_\F(i)\F_i}$ and $\G = \sum_{i=1}^{K}{\alpha_\G (i)\F_i}$. 
For the Inter-ES range with \(\alpha=0.5\), denoted by {\rm IER}\(_{0.5}\), i.e., $$h(u)=\min \Big\{\frac{u}{1-\alpha}, 1 \Big\} + \min \Big\{\frac{\alpha-u}{1-\alpha}, 0 \Big\}\ ,$$ 
we have the following properties.
\begin{enumerate}
\item The \holder continuity exponent is $q=1$.
\item The \holder mixture exponent is $r=1$.
\item The \holder constant is $\mcL = \max\{\mcL_{\rm MH}, \mcL_{\rm H} \}=2$.
\end{enumerate}
\end{lemma}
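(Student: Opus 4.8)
The plan is to begin by simplifying the distortion function for the specific choice $\alpha = 1/2$, which collapses the piecewise expression into a clean and recognizable form. Setting $\alpha = 1/2$ gives $1-\alpha = 1/2$, so $h(u) = \min\{2u, 1\} + \min\{1-2u, 0\}$. Splitting on whether $u \le 1/2$ or $u > 1/2$, I would verify that $h(u) = 2u$ on $[0,1/2]$ and $h(u) = 2(1-u)$ on $[1/2,1]$, i.e., $h(u) = 2\min\{u, 1-u\}$. This is exactly twice the mean-median deviation distortion function analyzed in Lemma~\ref{lemma:mean_median_Holder}, which is the structural observation that drives the whole argument and explains the doubled constant $\mcL = 2$. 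Using the Bernoulli evaluation $U_h({\sf Bern}(p)) = h(p)$ established in~\eqref{eq:h_1_p}, and noting that any mixture $\F = \sum_i \alpha_\F(i)\F_i$ of Bernoulli arms is again Bernoulli with parameter $p_\F = \sum_i \alpha_\F(i)p(i)$, I obtain $U_h(\F) = 2\min\{p_\F, 1-p_\F\}$, and similarly for $\G$.

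For the H\"{o}lder continuity claim I would bound $U_h(\F) - U_h(\G) = 2[\min\{p_\F, 1-p_\F\} - \min\{p_\G, 1-p_\G\}]$ using that $g(p) \triangleq \min\{p, 1-p\}$ is $1$-Lipschitz (piecewise linear with slopes $\pm 1$), so this difference is at most $2|p_\F - p_\G|$. Invoking Lemma~\ref{lemma:lipschitz-wasserstein}, which gives $\norm{\F - \G}_{\rm W} = |p_\F - p_\G|$ for two Bernoulli mixtures, yields $U_h(\F) - U_h(\G) \le 2\norm{\F - \G}_{\rm W}$, establishing $q = 1$ and $\mcL_{\rm H} = 2$. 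This step follows the same three-case split ($p_\F, p_\G$ relative to $1/2$) used in Lemma~\ref{lemma:mean_median_Holder}, with every intermediate bound carrying one extra factor of $2$.

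For the mixture exponent, I would split into two cases according to whether $p = 1/2$ lies in the convex hull of $\{p(i) : i \in [K]\}$. When $\max_i p(i) \ge 1/2 \ge \min_i p(i)$, the optimal parameter is $p^\star = 1/2$ with $U_h(\F^\star) = h(1/2) = 1$, and then $U_h(\F^\star) - U_h(\G) = 1 - 2\min\{p_\G, 1-p_\G\} = 2|1/2 - p_\G| = 2\norm{\F^\star - \G}_{\rm W}$, giving $r = 1$ and $\mcL_{\rm MH} = 2$. Otherwise the optimizer is a solitary arm, and the mixture exponent defaults to $r = q = 1$ with $\mcL_{\rm MH} = \mcL_{\rm H} = 2$ as in Definition~\ref{assumption:Holder_opt}. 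Combining the cases gives $\mcL = \max\{\mcL_{\rm H}, \mcL_{\rm MH}\} = 2$, $q = 1$, and $r = 1$. The only genuinely non-routine step is the opening reduction $h(u) = 2\min\{u, 1-u\}$; once that identity is in hand, the remainder is an exact replica of the mean-median deviation proof scaled by $2$, so I expect no further obstacles.
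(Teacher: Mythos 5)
Your proposal is correct and follows essentially the same route as the paper's proof: reduce $h$ to $2\min\{u,1-u\}$, compute $U_h$ of a Bernoulli mixture as $2\min\{p,1-p\}$, bound the difference by $2|p_\F-p_\G|=2\norm{\F-\G}_{\rm W}$ (the paper writes out the three cases explicitly where you invoke $1$-Lipschitzness of $\min\{p,1-p\}$, but these are the same argument), and split the mixture-exponent analysis on whether $1/2$ lies in the convex hull of the arm means. No substantive differences.
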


\begin{proof}
The distortion function for inter-ES range for \(\alpha=1/2\), denoted by {\rm IER}\(_{0.5}\), is 
    \begin{align}
        h(u) &= \min \{2u, 1 \} + \min \{1-2u, 0 \} \\
        &= 2 \min \{u, 1-u \} \ .
    \end{align}

Let us define
\begin{align}
    p_\F \triangleq  \sum_{i=1}^{K}{\alpha_\F(i)p(i)}\ ,
\qquad \mbox{and} \qquad     p_\G = \sum_{i=1}^{K}{\alpha_\G(i)p(i)}\ .
\end{align}
It can be easily verified that
\begin{align}
\label{eq:lemma_5_def_ES}
    U_h\left(\F\right)\;=\; 2\min \{p_\F, (1-p_\F) \}\ ,\qquad\text{and}\qquad U_h\left(\G\right)\;=\;  2\min \{p_\G, (1-p_\G) \}\ .
\end{align}

\begin{enumerate}
    \item {\bf Case 1:} For \(p_\F > 1/2, \; p_\G> 1/2\) or \(p_\F < 1/2, \; p_\G< 1/2\) we have
\begin{align}
    U_h\left(\F  \right) - U_h\left(\G  \right)\; &\stackrel{\eqref{eq:lemma_5_def_mean}}{=}\; 2\min \{p_\F, (1-p_\F) \}\ - 2\min \{p_\G, (1-p_\G) \}\ \\ \
    \label{eq:lemma_5_p_mean_1}
    &\leq\; 2| p_\F-p_\G | \ .
\end{align}
\item {\bf Case 2:} For \(p_\F < 1/2, \; p_\G > 1/2\),
\begin{align}
    U_h\left(\F  \right) - U_h\left(\G  \right)\; &\stackrel{\eqref{eq:lemma_5_def_mean}}{=}\; 2\min \{p_\F, (1-p_\F) \}\ - 2\min \{p_\G, (1-p_\G) \}\ \\ \
    & = \;2( p_\F-1+p_\G) \\
    \label{eq:lemma_5_p_mean_2}
    &\leq\; 2| p_\F-p_\G | \ .
\end{align}
\item {\bf Case 3:} For \(p_\F > 1/2, \; p_\G < 1/2\),
\begin{align}
    U_h\left(\F  \right) - U_h\left(\G  \right)\; 
    &\stackrel{\eqref{eq:lemma_5_def_mean}}{=}\; 2\min \{p_\F, (1-p_\F) \}\ - 2\min \{p_\G, (1-p_\G) \}\ \\ \
    & = \; 2(1-p_\F-p_\G) \\
    \label{eq:lemma_5_p_mean_3}
    &\leq\; 2| p_\F-p_\G | \ .
\end{align}

\end{enumerate}
Hence, overall, we have
\begin{align}
    | p_\F-p_\G | = 2\norm{\F-\G}_{\rm W} \ .
\end{align}
This indicates that \(\mcL_{\rm H} = 2\) and \(q=1\). Next, we characterize the mixture \holder exponent. For this purpose, let us denote the parameter for the optimal mixture by \(p_\F^\star \triangleq \sum_{i=1}^{K}{\alpha^\star(i)p(i)}\).
\begin{enumerate}
\item {\bf Case 1: $\max_{i\in[K]} p(i) > 0.5$ and $\min_{i\in[K]} p(i) < 0.5$:} 
 It can be readily verified that for \(p^\star = \frac{1}{2}\) and \(U_h(p^\star) = 1\). Note that
\begin{align}
\label{eq:LALL_mean}
    U_h\left(\F^\star  \right) - U_h\left(\G  \right) &= 2(\frac{1}{2}-\min \{p_\G, (1-p_\G) \}) \\
    &= 2\left| \frac{1}{2} - p_\G \right| \\
    &= 2\norm{\F-\G}_{\rm W}\ ,
    \label{eq:last_lemm_1_ES}
\end{align}
\item {\bf Case 2: $\max_{i\in[K]} p(i) < 0.5$ or $\min_{i\in[K]} p(i) > 0.5$:} In this case, the optimal solution is a solitary arm, in which case \(r = q\), i.e., \(r=1\) and \(\mcL_{\rm MH} = 2\).
\end{enumerate}

Hence, in summary $\mcL=2$, $q=1$, and $r=1$.

\end{proof}

\begin{lemma}[Dual Power \holder Constants]
\label{lemma:Dual_Power}
Consider $K$ Bernoulli distributions $\{{\sf Bern}(p_i):i\in[K]\}$ with CDFs $\{\F_i:i\in[K]\}$. Consider the optimal mixture $\F^\star = \sum_{i=1}^{K}{\alpha^\star(i)\F_i}$, and for given $\balpha_\F, \balpha_\G\in\Delta^{K-1}$, consider the mixtures $\F = \sum_{i=1}^{K}{\alpha_\F(i)\F_i}$ and $\G = \sum_{i=1}^{K}{\alpha_\G (i)\F_i}$. 
For dual power with the parameter $s \geq 2$, i.e., $h(u)= 1-(1-u)^s$, we have the following properties.
\begin{enumerate}
\item The \holder continuity exponent is $q=1$.
\item The \holder mixture exponent is $r=1$.
\item The \holder constant is $\mcL = \max\{\mcL_{\rm MH}, \mcL_{\rm H}\} = s$.
\end{enumerate}
\end{lemma}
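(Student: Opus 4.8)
The plan is to follow the template of the preceding Hölder lemmas (Lemmas~\ref{lemma:example_utility_Holder}--\ref{lemma:inter_ES_Holder}). First I would evaluate the distortion riskmetric in closed form for Bernoulli mixtures. Exactly as in the Gini and Wang derivations, for a Bernoulli mixture $\F$ with parameter $p_\F = \sum_{i\in[K]} \alpha_\F(i) p(i)$ the Choquet integral~\eqref{eq:wang_def} collapses to $U_h(\F) = h(p_\F) = 1 - (1-p_\F)^s$, and likewise $U_h(\G) = 1 - (1-p_\G)^s$. I would also recall from the Gini lemma that for two Bernoulli mixtures $\|\F - \G\|_{\rm W} = |p_\F - p_\G|$.

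To establish the ordinary Hölder exponent $q = 1$ and the constant $\mcL_{\rm H} = s$, I would control the derivative of $h$. Writing $g(p) \triangleq (1-p)^s$ so that $h(p_\F) - h(p_\G) = g(p_\G) - g(p_\F)$, we have $g'(p) = -s(1-p)^{s-1}$ with $|g'(p)| = s(1-p)^{s-1} \leq s$ on $[0,1]$, since $s \geq 2 > 1$ forces $(1-p)^{s-1} \leq 1$. A mean-value step then yields $U_h(\F) - U_h(\G) \leq s|p_\F - p_\G| = s\|\F-\G\|_{\rm W}$, so $q = 1$ and $\mcL_{\rm H} = s$; this constant is tight, as $h'(0) = s$.

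For the mixture exponent $r$, the key observation is that $h$ is strictly increasing on $[0,1]$, because $h'(u) = s(1-u)^{s-1} > 0$ for $u < 1$. Consequently the DR $V(\balpha,\F) = h(\sum_i \alpha(i) p(i))$ is maximized by placing all mass on the arm with the largest mean, so the optimal mixture $\F^\star$ is a solitary arm. As the main text notes, whenever the optimum is solitary any valid $q$ is also a valid mixture exponent; concretely I would apply the Lipschitz bound above with $\F^\star$ in place of $\F$ to get $U_h(\F^\star) - U_h(\G) \leq s\|\F^\star - \G\|_{\rm W}$, giving $r = 1$ and $\mcL_{\rm MH} = s$. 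Combining the two parts gives $\mcL = \max\{\mcL_{\rm H}, \mcL_{\rm MH}\} = s$.

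This lemma is essentially routine; there is no real obstacle. The only point demanding a little care is the Lipschitz-constant bookkeeping: the restriction $s \geq 2$ (matching the parameter range for dual power in Table~\ref{table:table_regrets}) guarantees $(1-p)^{s-1} \leq 1$ uniformly on $[0,1]$, which is what produces the clean constant $s$ rather than a derivative that blows up near $p = 1$ (as would occur for $s < 1$).
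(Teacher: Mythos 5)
Your proposal is correct and follows essentially the same route as the paper: evaluate $U_h$ in closed form for Bernoulli mixtures, apply the mean-value theorem to $(1-p)^s$ with the bound $s(1-p)^{s-1}\le s$, and then take $r=q$ and $\mcL_{\rm MH}=\mcL_{\rm H}$ exactly as the paper does by appeal to the general remark that any \holder exponent is also a valid mixture \holder exponent. Your additional observation that the optimum is a solitary arm because $h$ is strictly increasing is true but not needed for the argument.
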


\begin{proof}
    Let us define
\begin{align}
    p_\F \triangleq  \sum_{i=1}^{K}{\alpha_\F(i)p(i)}\ ,
\qquad \mbox{and} \qquad     p_\G = \sum_{i=1}^{K}{\alpha_\G(i)p(i)}\ .
\end{align}
It can be easily verified that
\begin{align}
\label{eq:lemma_5_def_DP}
    U_h\left(\F\right)\;=\; 1-(1-p_\F)^s
    ,\qquad\text{and}\qquad U_h\left(\G\right)\;=\;  1-(1-p_\G)^s \ .
\end{align}

Let us define $p_{\rm M} \in (\min \{p_\F, p_\G \}, \max \{p_\F, p_\G \} )$. We have
\begin{align}
    U_h\left(\F\right) - U_h\left(\G\right) & \stackrel{\eqref{eq:lemma_5_def_DP}}{=} 1-(1-p_\F)^s - (1-(1-p_\G)^s) \\ 
    &= (1-p_\G)^s - (1-p_\F)^s \\
    \label{eq:dp_beforelast}
    &= (p_\F - p_\G)\cdot s \cdot (1-p_{\rm M})^{s-1} \\
    &\leq |p_\F - p_\G|\cdot s \cdot (1-p_{\rm M})^{s-1} \\\label{eq:dp_last}&\leq |p_\F - p_\G| \cdot s \\
    \label{eq:dp_final}
    &= s \norm{\F - \G}_{\rm W}
\end{align}
where 
\begin{itemize}
    \item \eqref{eq:dp_beforelast} follows from observing that the distortion function is continuous and then applying the mean-value theorem, 
    \item \eqref{eq:dp_last} follows from the facts that $s\geq 2$ and $p_{\rm M} \leq 1$.
\end{itemize}

\eqref{eq:dp_final} indicates that $q=1$ and $\mathcal{L}_{\rm H}=s$. With a similar argument to Lemma \ref{lemma:PHT}, $r=q$, and $\mathcal{L}_{\rm MH} = \mathcal{L}_{\rm H}$ which means $\mathcal{L} = s$.
\end{proof}

\begin{lemma}[Quadratic \holder Constants]
\label{lemma:Quadratic}
Consider $K$ Bernoulli distributions $\{{\sf Bern}(p_i):i\in[K]\}$ with CDFs $\{\F_i:i\in[K]\}$. Consider the optimal mixture $\F^\star = \sum_{i=1}^{K}{\alpha^\star(i)\F_i}$, and for given $\balpha_\F, \balpha_\G\in\Delta^{K-1}$, consider the mixtures $\F = \sum_{i=1}^{K}{\alpha_\F(i)\F_i}$ and $\G = \sum_{i=1}^{K}{\alpha_\G (i)\F_i}$. 
For quadratic with the parameter $s \in [0,1]$, i.e., $h(u)= (1+s)u-su^2$, we have the following properties.
\begin{enumerate}
\item The \holder continuity exponent is $q=1$.
\item The \holder mixture exponent is $r=1$.
\item The \holder constant is $\mcL = \max\{\mcL_{\rm MH}, \mcL_{\rm H}\} = 1+s$.
\end{enumerate}
\end{lemma}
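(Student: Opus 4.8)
The plan is to follow the template of the Dual Power lemma (Lemma~\ref{lemma:Dual_Power}), exploiting the fact that every mixture of Bernoulli laws is again Bernoulli. First I would observe that $\F=\sum_i\alpha_\F(i)\F_i$ and $\G=\sum_i\alpha_\G(i)\F_i$ are ${\sf Bern}(p_\F)$ and ${\sf Bern}(p_\G)$ with $p_\F=\sum_i\alpha_\F(i)p(i)$ and $p_\G=\sum_i\alpha_\G(i)p(i)$. By the Bernoulli identity~\eqref{eq:h_1_p}, $U_h(\F)=h(p_\F)=(1+s)p_\F-sp_\F^2$, and similarly $U_h(\G)=(1+s)p_\G-sp_\G^2$.

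Next I would compute the difference directly and factor it as $U_h(\F)-U_h(\G)=(p_\F-p_\G)\big[(1+s)-s(p_\F+p_\G)\big]$, which avoids the mean-value-theorem step used for Dual Power. Since $p_\F,p_\G\in[0,1]$ we have $p_\F+p_\G\in[0,2]$, so the bracketed factor lies in $[1-s,\,1+s]$; for $s\in[0,1]$ this is nonnegative and bounded above by $1+s$. Because the factor is nonnegative, passing to absolute values only enlarges $p_\F-p_\G$, and combining with $|p_\F-p_\G|=\norm{\F-\G}_{\rm W}$ (Lemma~\ref{lemma:lipschitz-wasserstein}) yields $U_h(\F)-U_h(\G)\le(1+s)\norm{\F-\G}_{\rm W}$. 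This establishes $q=1$ and $\mcL_{\rm H}=1+s$.

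For the mixture exponent $r$, I would argue that the optimal policy is a solitary arm. The derivative $h'(u)=(1+s)-2su$ is linear in $u$ and nonnegative at both endpoints of $[0,1]$ (it equals $1+s>0$ at $u=0$ and $1-s\ge0$ at $u=1$), so $h$ is nondecreasing on $[0,1]$; hence $U_h(\cdot)=h(p_\cdot)$ is maximized by placing all mass on the arm with the largest $p(i)$. As in Lemmas~\ref{lemma:PHT} and~\ref{lemma:Dual_Power}, whenever the optimum is solitary one may take $r=q=1$ and $\mcL_{\rm MH}=\mcL_{\rm H}=1+s$, so that $\mcL=\max\{\mcL_{\rm H},\mcL_{\rm MH}\}=1+s$.

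I expect no substantive obstacle here: the argument reduces to a single algebraic factorization together with two one-line estimates. The only points requiring care are confirming that $(1+s)-s(p_\F+p_\G)$ stays nonnegative (so the bound is not worsened by taking absolute values) and the monotonicity check that pins down $r=q$; both follow immediately from $s\in[0,1]$ and $p_\F,p_\G\in[0,1]$.
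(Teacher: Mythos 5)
Your proof is correct and follows essentially the same route as the paper: the identical factorization $U_h(\F)-U_h(\G)=(p_\F-p_\G)\big[(1+s)-s(p_\F+p_\G)\big]$ followed by the bound $(1+s)\norm{\F-\G}_{\rm W}$, yielding $q=1$ and $\mcL_{\rm H}=1+s$. The only cosmetic difference is in justifying $r=q$: the paper simply invokes the general fact (as in Lemma~\ref{lemma:PHT}) that any \holder exponent $q$ is automatically a valid mixture \holder exponent, whereas you additionally verify that $h'(u)=(1+s)-2su\ge 1-s\ge 0$ so the optimum is solitary --- a harmless extra step that reaches the same conclusion.
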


\begin{proof}
    Let us define
\begin{align}
    p_\F \triangleq  \sum_{i=1}^{K}{\alpha_\F(i)p(i)}\ ,
\qquad \mbox{and} \qquad     p_\G = \sum_{i=1}^{K}{\alpha_\G(i)p(i)}\ .
\end{align}
It can be easily verified that
\begin{align}
\label{eq:lemma_5_def_quadratic}
    U_h\left(\F\right)\;=\; (1+s)p_\F-sp_\F^2 
    ,\qquad\text{and}\qquad U_h\left(\G\right)\;=\;  (1+s)p_\G-sp_\G^2 \ .
\end{align}
Then, we have
\begin{align}
    U_h\left(\F\right) - U_h\left(\G\right) & \stackrel{\eqref{eq:lemma_5_def_quadratic}}{=}(1+s)p_\F-sp_\F^2 - ((1+s)p_\G-sp_\G^2)  \\
    &= (1+s)(p_\F - p_\G) -s(p_\F - p_\G)(p_\F + p_\G) \\
    &= (p_\F - p_\G) ((1+s) - s(p_\F + p_\G)) \\
    \label{eq:quadratic_lastline_2}
    &\leq (1+s) |p_\F - p_\G| \\
    \label{eq:quadratic_lastline}
    &= (1+s) \norm{\F - \G}_{\rm W}\ .
\end{align}
\eqref{eq:quadratic_lastline} implies that $q=1$ and $\mathcal{L}_{\rm H} = (1+s)$. Similar to Lemma~\ref{lemma:PHT}, here we choose $r=q$ and $\mathcal{L}_{\rm MH} = \mathcal{L}_{\rm H}$, and hence, $\mathcal{L} = (1+s)$.

\end{proof}

\subsection{Algorithms' Properties}


\begin{lemma}[Discretization Error]
\label{lemma:Delta_error}
The discretization error \(\Delta({\varepsilon})\) is upper bounded as
\begin{align}
    \Delta(\varepsilon) \leq \mcL (KW)^r \left(\frac{\varepsilon}{2}\right)^r \ .
\end{align}

\end{lemma}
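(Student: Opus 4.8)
The plan is to compare the continuous optimizer $\balpha^\star$ with its nearest neighbor on the discrete grid $\Delta^{K-1}_{\varepsilon}$, and then to convert the resulting mixing-coefficient gap into a gap in the DR values by invoking the mixture \holder continuity of $U_h$ together with the Wasserstein-to-$\ell_1$ comparison constant $W$. The optimality of $\ba^\star$ over the discrete simplex is what lets me replace the (unknown) discrete optimizer by any convenient grid point.

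First I would fix $\ba\in\Delta^{K-1}_{\varepsilon}$ to be the grid point closest to $\balpha^\star$. Under the midpoint discretization used for $\Delta^{K-1}_{\varepsilon}$, each coordinate of $\balpha^\star$ lies within $\varepsilon/2$ of the corresponding grid coordinate, so that $\norm{\balpha^\star-\ba}_1\le K\varepsilon/2$. Since $\ba^\star$ maximizes $V(\cdot,\F)$ over $\Delta^{K-1}_{\varepsilon}$ by~\eqref{eq:discrete optimal mixture} and $\ba$ is a feasible grid point, we have $V(\ba^\star,\F)\ge V(\ba,\F)$, and therefore
\begin{align*}
    \Delta(\varepsilon)\;=\;V(\balpha^\star,\F)-V(\ba^\star,\F)\;\le\; V(\balpha^\star,\F)-V(\ba,\F)\;=\;U_h(\F^\star)-U_h(\G)\ ,
\end{align*}
where $\F^\star\triangleq\sum_{i\in[K]}\alpha^\star(i)\F_i$ and $\G\triangleq\sum_{i\in[K]}a(i)\F_i$, both of which lie in the convex hull $\Xi$.

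Next I would apply the mixture \holder continuity in~\eqref{eq:Holder_opt}, which yields $U_h(\F^\star)-U_h(\G)\le\mcL_{\rm MH}\norm{\F^\star-\G}_{\rm W}^r$. The Wasserstein distance is then controlled directly by the definition of $W$: writing $\F^\star-\G=\sum_{i\in[K]}(\alpha^\star(i)-a(i))\F_i$, we obtain $\norm{\F^\star-\G}_{\rm W}\le W\norm{\balpha^\star-\ba}_1\le KW\varepsilon/2$. Substituting this bound and using $\mcL_{\rm MH}\le\mcL$ gives the claimed inequality $\Delta(\varepsilon)\le\mcL(KW)^r(\varepsilon/2)^r$.

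The only delicate point — and hence the main thing to verify carefully — is the rounding step: one must confirm that $\balpha^\star$ can be mapped to a grid point of $\Delta^{K-1}_{\varepsilon}$ with $\ell_1$ error at most $K\varepsilon/2$, without the normalization constraint $\bone^\top\ba=1$ inflating the error. The midpoint construction of the grid together with the convention of absorbing any deficit or excess into the last coordinate is precisely what guarantees the per-coordinate $\varepsilon/2$ bound; once that is in place, the remaining steps are the two one-line applications of \eqref{eq:Holder_opt} and the definition of $W$.
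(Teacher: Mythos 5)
Your proposal is correct and follows essentially the same route as the paper: bound $\Delta(\varepsilon)$ by comparing $\balpha^\star$ to its nearest grid neighbor (using the discrete optimality of $\ba^\star$), apply mixture \holder continuity, and convert the $\ell_1$ coefficient gap to Wasserstein distance via $W$, with the per-coordinate $\varepsilon/2$ bound giving $K^r(\varepsilon/2)^r W^r$. The rounding caveat you flag is glossed over in the paper as well, so your treatment is if anything slightly more careful.
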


\begin{proof}
Let us define $\bar\ba$ as the discrete mixing coefficient that has the least $L_1$ distance to the optimal coefficient $\balpha^\star$, i.e.,
\begin{align}
    \bar\ba\;\in\; \argmin\limits_{\ba\in\Delta_{\varepsilon}^{K-1}} \norm{\balpha^\star - \ba}_1\ . 
\end{align}
Accordingly, we have
\begin{align}
    \Delta(\varepsilon)& = \E_{\bnu}^{\pi} \left[ U_h\left(\sum_{i\in[K]} \alpha^\star (i)\F_i\right) - U_h\left(\sum_{i\in[K]} a^\star(i)\F_i\right) \right] \\
    & \leq  U_h\left(\sum_{i\in[K]} \alpha^\star (i)\F_i\right) - U_h\left(\sum_{i\in[K]} \bar a(i)\F_i\right) \\
    \label{eq:delta_error_43}
    & \leq \mcL\|\sum_{i\in[K]}  \Big(\alpha^\star(i)- \bar a(i)\Big)  \F_i\|_{\rm W}^r \\
    \label{eq:delta_error_5}
    & \leq \mcL  \norm{\balpha^\star- \bar\ba}_1^r W^r \\
    & \leq \mcL K^r \left(\frac{\varepsilon}{2}\right)^r  W^r  ,
    \label{eq:Delta_error_2}
\end{align}
where,
\begin{itemize}
    \item~\eqref{eq:delta_error_43} follows from Definition~\eqref{assumption:Holder_opt};
    \item~\eqref{eq:delta_error_5} follows from the definition of W in~\eqref{eq:W};
    \item and~\eqref{eq:Delta_error_2} follows from the fact that $\bar\balpha$ may lie at most $\varepsilon/2$ away from the optimal coefficient $\balpha^\star$ along each coordinate.
\end{itemize}
\end{proof}

\begin{lemma}
\label{lemma:beta_lemma_monotone}
    Consider a K-arm Bernoulli bandit instance with mean values \(\bp = \left(p(1) \cdots p(K)\right)\). For a DR with concave and strictly monotone distortion function, we have $\beta  = 1$.
\end{lemma}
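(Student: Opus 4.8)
The plan is to collapse the whole problem to a one-dimensional computation by exploiting that a mixture of Bernoullis is itself Bernoulli. For the instance $\bnu$ with means $\bp$, the mixture $\sum_{i\in[K]}\alpha(i)\F_i$ equals ${\sf Bern}(\langle\balpha,\bp\rangle)$, so by the identity $U_h({\sf Bern}(p))=h(p)$ established in the proof of Lemma~\ref{example utility}, we have $V(\ba,\F)=h(\langle\ba,\bp\rangle)$. Assume without loss of generality that $h$ is strictly increasing (the strictly decreasing case is symmetric) and that the largest mean is attained uniquely at some arm $i^\star$. Since $h$ is strictly increasing, maximizing $V(\ba,\F)$ is the same as maximizing $\langle\ba,\bp\rangle$; over the simplex this is attained at the vertex $\mathbf{e}_{i^\star}$ (all mass on arm $i^\star$), so both the continuous optimum $\balpha^\star$ and the discrete optimum $\ba^\star$ place (essentially) all mass on $i^\star$, with $\langle\ba^\star,\bp\rangle\to p(i^\star)$ as $\varepsilon\to0$ and $V(\ba^\star,\F)\to h(p(i^\star))$.

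Next I would pin down the competitor that attains $\Delta_{\min}(\varepsilon)$. Because $h$ is strictly increasing, the gap $h(\langle\ba^\star,\bp\rangle)-h(\langle\ba,\bp\rangle)$ is strictly decreasing in $\langle\ba,\bp\rangle$, so it is minimized over $\ba\neq\ba^\star$ by the feasible point with the second-largest value of $\langle\ba,\bp\rangle$. This is the adjacent discrete point obtained by shifting one $\varepsilon$-quantum of mass off $i^\star$ onto the arm $j^\star$ with the second-largest mean, since that shift reduces $\langle\ba,\bp\rangle$ by the least amount $\varepsilon\Delta_p$, where $\Delta_p\triangleq p(i^\star)-p(j^\star)=\min_{j\neq i^\star}\big(p(i^\star)-p(j)\big)>0$. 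Hence $\langle\ba^\star,\bp\rangle-\langle\ba,\bp\rangle=\varepsilon\Delta_p$ and
\[
\Delta_{\min}(\varepsilon)=h\big(\langle\ba^\star,\bp\rangle\big)-h\big(\langle\ba^\star,\bp\rangle-\varepsilon\Delta_p\big).
\]
That no non-adjacent discrete point gives a smaller gap is immediate from strict monotonicity: any other feasible $\ba$ has $\langle\ba,\bp\rangle\le\langle\ba^\star,\bp\rangle-\varepsilon\Delta_p$.

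The final step is the asymptotic evaluation of the defining limit. Dividing by $\varepsilon$ and using $\langle\ba^\star,\bp\rangle\to p(i^\star)$, the difference quotient converges to the left derivative $h'_-$ of the concave function $h$:
\[
\lim_{\varepsilon\to0}\frac{\Delta_{\min}(\varepsilon)}{\varepsilon}=\Delta_p\,h'_-\big(p(i^\star)\big)=:C,
\]
a finite, strictly positive constant. Strict positivity holds because a concave, strictly increasing function cannot have vanishing left derivative at an interior point: $h'_-(p(i^\star))=0$ together with concavity would force $h$ to be non-increasing to the right of $p(i^\star)$, contradicting strict monotonicity. Writing $\Delta_{\min}(\varepsilon)=\varepsilon\,g(\varepsilon)$ with $g(\varepsilon)\to C\in(0,\infty)$ then gives
\[
\beta=\lim_{\varepsilon\to0}\frac{\log\Delta_{\min}(\varepsilon)}{\log\varepsilon}=\lim_{\varepsilon\to0}\Big(1+\frac{\log g(\varepsilon)}{\log\varepsilon}\Big)=1,
\]
since $\log g(\varepsilon)\to\log C$ is finite while $\log\varepsilon\to-\infty$.

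I expect the main obstacle to be the degenerate cases that make $C$ ill-behaved rather than the core linearization. Two situations break the argument: (i)~ties for the maximal mean, which render $\ba^\star$ non-unique and can force $\Delta_{\min}(\varepsilon)=0$; this is excluded by the unique-maximum assumption. And (ii)~distortion functions whose strictly increasing region ends before $p(i^\star)$, so that $h'_-(p(i^\star))=0$ and the gap is no longer linear in $\varepsilon$. This is exactly the phenomenon behind the side condition attached to the CVaR entry in Table~\ref{table:table_risks}: requiring the arm means to be smaller than $1-\alpha$ places $p(i^\star)$ in the strictly increasing portion of $h(u)=\min\{u/(1-\alpha),1\}$, guaranteeing $h'_-(p(i^\star))=1/(1-\alpha)>0$ and hence $\beta=1$.
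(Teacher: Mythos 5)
Your proposal is correct and follows essentially the same route as the paper's proof: both reduce $V(\ba,\F)$ to $h(\langle\ba,\bp\rangle)$, identify the second-best discrete point as the one shifting a single $\varepsilon$-quantum of mass from the best arm to the second-best arm, and use concavity plus strict monotonicity of $h$ to show the resulting gap is linear in $\varepsilon$ with a nonvanishing derivative factor. If anything, your version is slightly more complete, since you evaluate the two-sided limit $\Delta_{\min}(\varepsilon)/\varepsilon\to\Delta_p\,h'_-(p(i^\star))$ (the paper only records the lower bound $\Delta_{\min}(\varepsilon)=\Omega(\varepsilon)$, which by itself gives $\beta\le 1$), and you explicitly flag the degenerate cases of tied maximal means and of $h'_-(p(i^\star))=0$ that the paper's blanket claim ``$h'(p)\neq 0$ for all $p\in(0,1)$'' glosses over.
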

\begin{proof}
Let \(\ba^\prime\) denote the second best discrete optimal solution, i.e,
\begin{align}
    \ba^\prime \triangleq \argmax_{\ba \in \Delta_\varepsilon^{K-1}: \ba \neq \ba^\star } U_h\left(\sum_{i \in [K]} a(i) \F_i\right)\ .
\end{align}
For a Bernoulli bandit instance, we have the following simplification. 
\begin{align}
    U_h \left(\sum_{i \in [K]}a(i)\F_i \right) = h\left(\sum_{i \in [K]}a(i) p_i\right) \ . 
\end{align}
We order the mean values from smallest to largest, i.e., \(p(1) < p(2) \cdots < p(K)\). Since the distortion function is strictly increasing, the optimal solution is a solitary arm, i.e,
\begin{align}
    \max_{\alpha \in \Delta^{K-1}} h\left(\sum_{i \in [K]} \alpha(i)p_i\right ) = h(p_K) \ .
\end{align} 
Equivalently, we have
\begin{align}
    \alpha^\star(i) =
    \begin{cases}
        1, \quad \text{if} \; i = K \\
        0, \quad \text{if} \; i \neq K
    \end{cases} \ .
\end{align}
Furthermore, $\forall \ba \in \Delta^{K-1}_{\varepsilon}$ we have
\begin{align}
\label{eq:solitary_best}
    h\left(\sum_{i \in [K]} a(i)p_i\right) < h(p_K) \ .
\end{align}
It can be readily verified that the best and the second best discrete mixing coefficients $\ba^\star$ and $\ba^\prime$ are obtained as follows.
\begin{align}
\label{eq:astar_lemma14}
    a^\star(i) =
    \begin{cases}
        1-(K-1)\varepsilon/2, \quad \text{if} \; i = K \\
        \varepsilon/2, \quad \text{if} \; i \neq K
    \end{cases} \ ,
\end{align}
and,
\begin{align}
\label{eq:aprime_lemma14}
    a^\prime(i) =
    \begin{cases}
        1-(K+1)\varepsilon/2, \quad \text{if} \; i = K \\
        3\varepsilon/2, \quad \text{if} \; i = K-1 \\
        \varepsilon/2, \quad \text{if} \; i \neq K-1, K \\
    \end{cases} \ .
\end{align}
Note that~\eqref{eq:astar_lemma14} and~\eqref{eq:aprime_lemma14} are obtained by first choosing the convex combination (in terms of the probabilities $\{p(i):i\in[K]\}$) with the maximal value, and subsequently leveraging the monotonicity of the distortion function $h$. Next, we have
\begin{align}
   \Delta_{\min}(\varepsilon) &= U_h \left(\sum_{i \in [K]}a^\star(i)\F_i \right) - U_h \left(\sum_{i \in [K]}a^\prime(i)\F_i \right)
   \\ &= h\left(\sum_{i \in [K]}a^\star(i)p(i)\right) - h\left(\sum_{i \in [K]}a^\prime(i)p(i)\right)\ \\
   \label{eq:concavity_h}
   &\geq h^\prime\left(\sum_{i \in [K]} a^\star(i)p(i) \right) \left(\sum_{i \in [K]} (a^\star(i) -a^\prime(i))p(i) \right) \\
   \label{eq:best_second_deff}
   &= h^\prime\left(\sum_{i \in [K]}a^\star(i)p(i) \right) \left(\varepsilon (p(K)-p(K-1))\right)  \\
   \label{eq:delta_min_conc_mon_analysis}
   &\geq h^\prime(p(K))  (p(K)-p(K-1))\cdot \varepsilon\ ,
\end{align}
where 
\begin{itemize}
    \item \eqref{eq:concavity_h} follows from concavity of $h$;
    \item \eqref{eq:best_second_deff} follows from the definitions of the best and second best discrete mixing coefficients;
    \item and \eqref{eq:delta_min_conc_mon_analysis} follows from the fact that leveraging concavity, we have $h^\prime(a) \geq h^\prime(b)$ for $a \leq b$, together with~\eqref{eq:solitary_best}.
\end{itemize}
For a strictly increasing function, the derivative $\forall p \in (0,1)$, $h^\prime(p) \neq 0$. From \eqref{eq:delta_min_conc_mon_analysis}, we can conclude that for DRs with strictly increasing and concave distortion functions \(\Delta_{\min}(\varepsilon) = \Omega(\varepsilon)\), and hence \(\beta = 1\).
\end{proof}

\section{Risk-sensitive Explore Then Commit for Mixtures (RS-ETC-M) Algorithm}
\label{Appendix:RS-ETC-M}
In this section, we provide 
the proofs of Theorems \ref{theorem: ETC upper bound} and \ref{theorem:RS-ETC-M}, which characterize the upper bound on the algorithm's regret.

\subsection{Regret Decomposition}
\label{proof:ETC upper bound}
Throughout this subsection, we prove Theorem~\ref{theorem: ETC upper bound}. We use the decomposition~\eqref{eq:regret_decomposition}  to provide a regret bound on the discrete regret \(\bar{\mathfrak{R}}_{\bnu}^{\rm E}(T)\). Let us define the {\em set} of discrete optimal mixtures of the DR $U_h$ as
\begin{align}
    {\rm OPT}_{\varepsilon}\;\triangleq\; \argmax\limits_{\ba\in\Delta_{\varepsilon}^{K-1}}\; U_h\left (\sum\limits_{i\in[K]} a(i)\F_i\right)\ ,
\end{align}
and the {\em set} of optimistic mixtures computed from the estimated CDFs at time instant $t$ as
\begin{align}
    \widehat{\rm OPT}_{\varepsilon,t}\;\triangleq\; \argmax\limits_{\ba \in\Delta_{\varepsilon}^{K-1}} U_h\left ( \sum\limits_{i\in[K]}a(i)\F_{i,t}^{\rm E}\right )\ .
\end{align}
In the regret decomposition we have provided in~\eqref{eq:regret_decomposition}, the discretization error \(\Delta(\varepsilon)\) is upper bounded in Lemma~\ref{lemma:Delta_error}. Hence, for regret analysis, we focus on the discrete regret. 
We can decompose the discrete regret into three main parts as follows.
\begin{align}
\label{eq: Regret_DISCRET_ETC}
    \bar{\mathfrak{R}}_{\bnu}^{\rm E}(T) &= \underbrace{\E_{\bnu}^{\rm E} \left[ \left( U_h\left(\sum_{i\in[K]} a^\star(i)\F_i\right) - U_h\left(\sum_{i\in[K]} a^{\rm E}_{N(\varepsilon)}(i)\F_i\right)\right)\mathds{1}\{\ba^{\rm E}_{N(\varepsilon)} \in{\rm OPT}_{\varepsilon} \}\right]}_{\triangleq A_1(T)}\nonumber \\
    &\qquad + \underbrace{\E_{\bnu}^{\rm E} \left[  \left(U_h\left(\sum_{i\in[K]} a^\star(i)\F_i\right) - U_h\left(\sum_{i\in[K]} a^{\rm E}_{N(\varepsilon)}(i)\F_i\right)\right)\mathds{1}\{\ba^{\rm E}_{N(\varepsilon)} \notin{\rm OPT}_{\varepsilon}\}\right]}_{\triangleq A_2(T)}\nonumber \\
    &\quad\qquad + \underbrace{\E_{\bnu}^{\rm E} \left[ U_h\left(\sum_{i\in[K]} a^{\rm E}_{N(\varepsilon)}(i)\F_i\right)  - U_h\left(\sum_{i\in[K]} \frac{\tau_t^{\rm E}(i)}{T}\F_i\right) \right]}_{\triangleq A_3(T)}.
\end{align}
It can be readily verified that \(A_1(T)=0\). The term $A_2(T)$ captures the {\em mixing coefficient estimation error} when the RS-ETC-M algorithm generates an incorrect mixing coefficient at the end of its exploration phase. Finally, the term $A_3(T)$ captures the {\em sampling estimation error}, i.e., the error in matching the arm selection fractions to the estimated mixing coefficient at the end of the exploration phase. Next, we provide an upper bound for \(A_2(T)\) and \(A_3(T)\).

\subsection{Upper Bound on the RS-ETC-M Mixing Coefficient Estimation Error $A_2(T)$}
\label{appendix:RS-ETC-M mixing error}

Expanding the mixing coefficient estimation error term $A_2(T)$, we obtain
\begin{align}
    A_2(T) &= {\E_{\bnu}^{\rm E} \left[ \left( U_h\left(\sum_{i\in[K]} a^\star(i)\F_i\right) - U_h\left(\sum_{i\in[K]}a^{\rm E}_{N(\varepsilon)}(i)\F_i\right) \right)\mathds{1}\{\ba^{\rm E}_{N(\varepsilon)} \notin{\rm OPT}_{\varepsilon}\}\right]} \\
    &= \E_{\bnu}^{\rm E} \left[ U_h\left(\sum_{i\in[K]} a^\star(i)\F_i\right) - U_h\left(\sum_{i\in[K]} a^{\rm E}_{N(\varepsilon)}(i)\F_i\right)\;\bigg\lvert\;\ba^{\rm E}_{N(\varepsilon)} \notin{\rm OPT}_{\varepsilon}\right] \nonumber \\ & \qquad \qquad \times \P_{\bnu}^{\rm E}\left(\ba^{\rm E}_{N(\varepsilon)} \notin{\rm OPT}_{\varepsilon}\right)\ .
\end{align}
Furthermore, owing to the fact that the DR is bounded above by $B$, we have 
\begin{align}
\label{eq: A2first}
    A_2(T) \leq B \cdot \P_{\bnu}^{\rm E}\left(\ba^{\rm E}_{N(\varepsilon)} \notin{\rm OPT}_{\varepsilon}\right)\ .
\end{align}
Next, we bound the probability of forming an incorrect estimate of the mixing coefficients. For this, for any $t$, we define the following events.
\begin{align}
    \mcE_{1,t}(x)\;&\triangleq\; \Bigg\{ \bigg\lvert U_h\bigg(\sum_{i\in[K]} a^\star(i)\F_{i,t}^{\rm E}\bigg) - U_h\bigg(\sum_{i\in[K]} a^\star (i)\F_i\bigg)\bigg \rvert\leq x\Bigg\}\ ,\\
    \mcE_{2,t}(x)\;&\triangleq\; \Bigg\{ \bigg\lvert U_h\bigg(\sum_{i\in[K]} a^{\rm E}_{N(\varepsilon)}(i)\F_{i,t}^{\rm E}\bigg) - U_h\bigg(\sum_{i\in[K]} a^{\rm E}_{N(\varepsilon)}(i)\F_i\bigg)\bigg \rvert\leq x\Bigg\}\ ,\\
    \text{and,}\quad\mcE_t(x)\;&\triangleq\;\mcE_1(x,t)\bigcap\mcE_2(x,t)\ .
\end{align}

Note that for any $\ba^\star\in{\rm OPT}_{\varepsilon}$, we have
\begin{align}
    & \P_{\bnu}^{\rm E}\Big(\ba^{\rm E}_{N(\varepsilon)} \notin{\rm OPT}_{\varepsilon}\Big)
    \\ &\leq\;\P_{\bnu}^{\rm E}\left ( U_h\bigg(\sum_{i\in[K]} a^\star (i)\F_i\bigg) \geq U_h\bigg(\sum_{i\in[K]} a^{\rm E}_{N(\varepsilon)}(i)\F_i\bigg) + \Delta_{\min}(\varepsilon)\right)\\
    \label{eq:ETC_A21}
    &=\;\P_{\bnu}^{\rm E}\left ( U_h\bigg(\sum_{i\in[K]} a^\star(i)\F_i\bigg) \geq U_h\bigg(\sum_{i\in[K]} a^{\rm E}_{N(\varepsilon)}(i)\F_i\bigg) + \Delta_{\min}(\varepsilon)\;\bigg\lvert\;\mcE_{N(\varepsilon)}\bigg(\frac{1}{2}\Delta_{\min}(\varepsilon)\bigg)\right)\nonumber\\
    &\qquad\qquad\qquad\times\P_{\bnu}^{\rm E} \left(\mcE_{N(\varepsilon)}\bigg(\frac{1}{2}\Delta_{\min}(\varepsilon)\bigg)\right)\\
    &\;\;+\P_{\bnu}^{\rm E} \left ( U_h\bigg(\sum_{i\in[K]} a^\star(i)\F_i\bigg) \geq U_h\bigg(\sum_{i\in[K]} a^{\rm E}_{N(\varepsilon)}(i)\F_i\bigg) + \Delta_{\min}(\varepsilon)\;\bigg\lvert\; \overline{\mcE}_{N(\varepsilon)}\bigg(\frac{1}{2}\Delta_{\min}(\varepsilon)\bigg)\right)\nonumber\\
    &\qquad\qquad\qquad\times\P_{\bnu}^{\rm E} \bigg( \overline{\mcE}_{N(\varepsilon)}\bigg(\frac{1}{2}\Delta_{\min}(\varepsilon)\bigg)\bigg)\ .
    \label{eq:ETC_A22}
\end{align}
Note that the first term, i.e., ~\eqref{eq:ETC_A21} is upper bounded by
\begin{align}
    &\P_{\bnu}^{\rm E} \left ( U_h\bigg(\sum_{i\in[K]} a^\star(i)\F_i\bigg) \geq U_h\bigg(\sum_{i\in[K]} a^{\rm E}_{N(\varepsilon)}(i)\F_i\bigg) + \Delta_{\min}(\varepsilon)\;\bigg\lvert\; \mcE_{N(\varepsilon)}\bigg(\frac{1}{2}\Delta_{\min}(\varepsilon)\bigg)\right)\\
    \label{eq:ETC_1}
    &\leq \P_{\bnu}^{\rm E} \Bigg ( U_h\bigg(\sum_{i\in[K]} a^\star(i)\F_{i,N(\varepsilon)}^{\rm E}\bigg) +  \frac{1}{2}\Delta_{\min}(\varepsilon)\geq U_h\bigg(\sum_{i\in[K]} a^{\rm E}_{N(\varepsilon)}(i)\F_{i,N(\varepsilon)}^{\rm E}\bigg)\nonumber\\
    &\qquad\qquad + \frac{1}{2}\Delta_{\min}(\varepsilon)\;\bigg\lvert\;\mcE_{N(\varepsilon)}\bigg(\frac{1}{2}\Delta_{\min}(\varepsilon)\bigg)\Bigg)\\
    &=0\ ,
    \label{eq:ETC_2}
\end{align}
where
\begin{itemize}
    \item[(i)]~\eqref{eq:ETC_1} holds due to the conditioning on the event $\mcE_{N(\varepsilon)}(\Delta_{\min}(\varepsilon)/2)$, and,
    \item[(ii)]~\eqref{eq:ETC_2} holds since $\ba^{\rm E}_{N(\varepsilon)}\in\widehat{\rm OPT}_{\varepsilon, N}$.
\end{itemize}

The second term, i.e., the term in~\eqref{eq:ETC_A22} is upper bounded by
\begin{align}
    \P_{\bnu}^{\rm E} \bigg( \overline{\mcE}_{N(\varepsilon)}\Big(\frac{1}{2}\Delta_{\min}(\varepsilon)\Big )\bigg)\;\leq\;\P_{\bnu}^\pi\bigg( \overline{\mcE}_{1,N}\Big (\frac{1}{2}\Delta_{\min}(\varepsilon)\Big)\bigg) + \P_{\bnu}^{\rm E} \bigg( \overline{\mcE}_{2,N}\Big (\frac{1}{2}\Delta_{\min}(\varepsilon)\Big)\bigg)\ .
\end{align}
Expanding each term, we have
\begin{align}
    \P_{\bnu}^{\rm E} \bigg( \overline{\mcE}_{1,N}\Big(\frac{1}{2}\Delta_{\min}(\varepsilon)\Big)\bigg)&= \P\left( \left\lvert U_h\bigg(\sum_{i\in[K]} a^\star(i)\F_{i, N(\varepsilon)}^{\rm E}\bigg) - U_h\bigg(\sum_{i\in[K]} a^\star(i)\F_i\bigg)\right\rvert > \frac{1}{2}\Delta_{\min}(\varepsilon)\right )\\
    &\stackrel{\eqref{eq:Holder}} {\leq} \P_{\bnu}^{\rm E} \left(\mcL \sum_{i\in[K]} \norm{a^\star (i)^\star(\F_{i,N(\varepsilon)}^{\rm E} - \F_i)}_{\rm W}^q > \frac{1}{2}\Delta_{\min}(\varepsilon) \right)\\
    & {\leq}  \sum_{i\in[K]}  \P_{\bnu}^{\rm E} \left(\mcL\underbrace{\big( a^\star(i)\big)^q}_{\leq 1}\norm{\F_{i,N(\varepsilon)}^{\rm E} - \F_i }_{\rm W}^q > \frac{1}{2K}\Delta_{\min}(\varepsilon) \right)\\
    \label{eq:alpha_to_1}
    &\leq \sum_{i\in[K]} \P_{\bnu}^{\rm E} \left ( \norm{\F_{i,N(\varepsilon)}^{\rm E}-\F_i}_{\rm W} > \left( \frac{1}{2K\mcL}\Delta_{\min}(\varepsilon)\right)^{\frac{1}{q}}  \right )\\
    &\stackrel{\eqref{eq:meta_concentration}}{\leq} \sum_{i\in[K]} 2\exp\left ( -\frac{\tau^{\rm E} _{N(\varepsilon)}(i)}{256e}\Bigg(\left(\frac{1}{2K\mcL( a^\star(i))^q}\Delta_{\min}(\varepsilon)\right)^{1/q} - \frac{512}{\sqrt{\tau^{\rm E} _{N(\varepsilon)}}}\Bigg)^2 \right) \\
     &{=} \sum_{i\in[K]} 2\exp\left ( -\frac{\frac{N(\varepsilon)}{K}}{256e}\Bigg(\left(\frac{1}{2K\mcL( a^\star(i))^q}\Delta_{\min}(\varepsilon)\right)^{1/q} - \frac{512}{\sqrt{\frac{N(\varepsilon)}{K}}}\Bigg)^2 \right) \\
    &= 2K \exp\left ( -\frac{\frac{N(\varepsilon)}{K}}{256e}\Bigg(\left(\frac{1}{2K\mcL( a^\star(i))^q}\Delta_{\min}(\varepsilon)\right)^{1/q} - \frac{512}{\sqrt{\frac{N(\varepsilon)}{K}}}\Bigg)^2 \right) \ .
    \label{eq:E1C}
\end{align}
Furthermore, we have
\begin{align}
&\P_{\bnu}^{\rm E} \bigg( \overline{\mcE}_{2,N(\varepsilon)}\Big(\frac{1}{2}\Delta_{\min}(\varepsilon)\Big)\bigg)\nonumber\\
&\quad= \P_{\bnu}^{\rm E} \left( \bigg\lvert U_h\bigg(\sum_{i\in[K]}a^{\rm E}_{N(\varepsilon)}(i)\F_{i,N(\varepsilon)}^{\rm E}\bigg) - U_h\bigg(\sum_{i\in[K]}a^{\rm E}_{N(\varepsilon)}(i)\F_i \bigg)\bigg\rvert > \frac{1}{2}\Delta_{\min}(\varepsilon)\right )\\
    &\quad\leq\P_{\bnu}^{\rm E} \left(\bigcup_{\ba^{\rm E}_{N(\varepsilon)}\in\widehat{\rm OPT}_{\varepsilon, N(\varepsilon)}}  \bigg\lvert U_h\bigg(\sum_{i\in[K]} a^{\rm E}_{N(\varepsilon)}(i)\F_{i,N(\varepsilon)}^{\rm E}\bigg) - U_h\bigg(\sum_{i\in[K]} a^{\rm E}_{N(\varepsilon)}(i)\F_i\bigg) \bigg \rvert > \frac{1}{2}\Delta_{\min}(\varepsilon)\right)\\
    &\quad\leq\sum_{\ba^{\rm E}_{N(\varepsilon)}\in\widehat{\rm OPT}_{\varepsilon, N(\varepsilon)}}\P_{\bnu}^{\rm E} \left(\bigg\lvert U_h\bigg(\sum_{i\in[K]} a^{\rm E}_{N(\varepsilon)}(i)\F_{i,N(\varepsilon)}^{\rm E}\bigg) - U_h\bigg(\sum_{i\in[K]} a^{\rm E}_{N(\varepsilon)}(i)\F_i \bigg) \bigg \rvert > \frac{1}{2}\Delta_{\min}(\varepsilon)\right)\\
    &\quad\stackrel{\eqref{eq:Holder}}{\leq} \sum_{\ba^{\rm E}_{N(\varepsilon)}\in\widehat{\rm OPT}_{\varepsilon, N(\varepsilon)}} \sum_{i\in[K]} \P_{\bnu}^{\rm E} \left ( \norm{\F_{i,N(\varepsilon)}^{\rm E}-\F_i}_{\rm W} > \left( \frac{1}{2K\mcL\underbrace{\big(a^{\rm E}_{N(\varepsilon)}(i)\big)^q}_{\leq 1}}\Delta_{\min}(\varepsilon)\right)^{\frac{1}{q}}  \right )\\
    &\quad {\leq} \sum_{\ba^{\rm E}_{N(\varepsilon)}\in\widehat{\rm OPT}_{\varepsilon, N(\varepsilon)}} \sum_{i\in[K]} \P_{\bnu}^{\rm E} \left ( \norm{\F_{i,N(\varepsilon)}^{\rm E}-\F_i}_{\rm W} > \left( \frac{1}{2K\mcL}\Delta_{\min}(\varepsilon)\right)^{\frac{1}{q}}  \right )\\
    &\quad\stackrel{\eqref{eq:meta_concentration}}{\leq}\sum_{\ba^{\rm E}_{N(\varepsilon)}\in\widehat{\rm OPT}_{\varepsilon, N(\varepsilon)}} \sum_{i\in[K]} \exp\left ( -\frac{\tau^{\rm E} _{N(\varepsilon)}(i)}{256e}\Bigg(\left(\frac{1}{2K\mcL( a^\star(i))^q}\Delta_{\min}(\varepsilon)\right)^{1/q} - \frac{512}{\sqrt{\tau^{\rm E} _{N(\varepsilon)}}}\Bigg)^2 \right)\\
    & \quad\leq \sum_{\ba^{\rm E}_{N(\varepsilon)}\in\widehat{\rm OPT}_{\varepsilon, N(\varepsilon)}} 2K \exp\left ( -\frac{\frac{N(\varepsilon)}{K}}{256e}\Bigg(\left(\frac{1}{2K\mcL( a^\star(i))^q}\Delta_{\min}(\varepsilon)\right)^{1/q} - \frac{512}{\sqrt{\frac{N(\varepsilon)}{K}}}\Bigg)^2 \right) \\
    &\quad\leq 2K\left(\frac{1}{\varepsilon} \right)^{K-1} \exp\left ( -\frac{\frac{N(\varepsilon)}{K}}{256e}\Bigg(\left(\frac{1}{2K\mcL( a^\star(i))^q}\Delta_{\min}(\varepsilon)\right)^{1/q} - \frac{512}{\sqrt{\frac{N(\varepsilon)}{K}}}\Bigg)^2 \right), 
    \label{eq:E2C}
\end{align}
where~\eqref{eq:E2C} follows from the fact that the total number of discrete $\ba$ values may not exceed \(\left(\frac{1}{\varepsilon} \right)^{K-1}\). Combining~\eqref{eq:E1C} and~\eqref{eq:E2C}, we obtain
\begin{align}
    &\P_{\bnu}^{\rm E} \bigg( \overline{\mcE}_{1,N(\varepsilon)}\Big(\frac{1}{2}\Delta_{\min}(\varepsilon)\Big)\bigg) + \P_{\bnu}^{\rm E} \bigg( \overline{\mcE}_{2,N(\varepsilon)}\Big(\frac{1}{2}\Delta_{\min}(\varepsilon)\Big)\bigg) \nonumber \\
    & \quad\leq 2K \exp\left ( -\frac{\frac{N(\varepsilon)}{K}}{256e}\Bigg(\left(\frac{1}{2K\mcL( a^\star(i))^q}\Delta_{\min}(\varepsilon)\right)^{1/q} - \frac{512}{\sqrt{\frac{N(\varepsilon)}{K}}}\Bigg)^2 \right) \nonumber \\
    & \qquad+ \left(\frac{1}{\varepsilon} \right)^{K-1} 2K \exp\left ( -\frac{\frac{N(\varepsilon)}{K}}{256e}\Bigg(\left(\frac{1}{2K\mcL( a^\star(i))^q}\Delta_{\min}(\varepsilon)\right)^{1/q} - \frac{512}{\sqrt{\frac{N(\varepsilon)}{K}}}\Bigg)^2 \right)  \\
    &\quad= 2K \exp\left ( -\frac{\frac{N(\varepsilon)}{K}}{256e}\Bigg(\left(\frac{1}{2K\mcL( a^\star(i))^q}\Delta_{\min}(\varepsilon)\right)^{1/q} - \frac{512}{\sqrt{\frac{N(\varepsilon)}{K}}}\Bigg)^2 \right) \cdot \left( \left(\frac{1}{\varepsilon} \right)^{K-1} +1 \right)\ .
    \label{eq:E3C}
\end{align}

At this step, for some $\delta\in(0,1)$, choosing \( N(\varepsilon) \) as
\begin{align}
\label{eq:number of samples}
    N(\varepsilon) \triangleq 256K  \e \left(\frac{2K \mcL}{\Delta_{\min}(\varepsilon)}\right)^{\frac{2}{q}} \bigg[\frac{32}{\sqrt{\rm e}} + \log^{\frac{1}{2}} \Big( 2K\delta   \big(\varepsilon^{-(K-1)} + 1\big) \Big) \bigg]^2\ , 
\end{align}
 and leveraging~\eqref{eq:E3C}, it can be readily verified that 
\begin{align}
    \P_{\bnu}^{\rm E} \bigg( \overline{\mcE}_{1,N(\varepsilon)}\Big(\frac{1}{2}\Delta_{\min}(\varepsilon)\Big)\bigg) + \P_{\bnu}^{\rm E} \bigg( \overline{\mcE}_{2,N(\varepsilon)}\Big(\frac{1}{2}\Delta_{\min}(\varepsilon)\Big)\bigg)\;\leq\;\delta\ ,
\end{align}
which implies that $\P_{\bnu}^{\rm E}\left(\ba^{\rm E}_{N(\varepsilon)}\notin{\rm OPT}_{\varepsilon}\right)  \leq \delta$. Hence, we have
\begin{align}
\label{eq: ETC A2T final}
    A_2(T) \stackrel{\eqref{eq: A2first}}{\leq} B \cdot \delta\ .
\end{align}

\subsection{Upper Bound on the RS-ETC-M Sampling Estimation Error $A_3(T)$}

Next, we turn to analyzing \(A_3(T)\), which captures the sampling estimation error. We have
\begin{align}
    A_3(T) &= \E_{\bnu}^{\rm E} \left[ U_h\left(\sum_{i\in[K]} a^{\rm E}_{N(\varepsilon)}(i)\F_i\right)  - U_h\left(\sum_{i\in[K]} \frac{\tau_T^{\rm E}(i)}{T}\F_i\right) \right] \\ 
    \label{eq:lemma_8_2}
&\leq  \mcL\E_{\bnu}^{\rm E} \left[ \norm{\ba^{\rm E}_{N(\varepsilon)} - \frac{\btau_T^{\rm E}}{T} }_1^q W^q \right] \\
&\leq \mcL K W^q  \E_{\bnu}^{\rm E} \bigg[ \max_{i \in [K]} \left|a^{\rm E}_{N(\varepsilon)}(i) - \frac{\tau_t^{\rm E}(i)}{T}\right|^q  \bigg] ,
\end{align}
where~\eqref{eq:lemma_8_2} follows from the \holder continuity of the utility stated in~\eqref{eq:Holder} and recalling the definition of $W$ stated in~\eqref{eq:W}. We require to upper bound the estimation error due to the policy's sampling proportions. Note that as a result of the ``commitment'' process of the ETC algorithm stated in~\eqref{eq: ETC_samp}, for the first $K-1$ arms, the RS-ETC-M commits its arms selection such that it is pulled \( \max\{0,\lfloor Ta_{N(\varepsilon)}^{\rm E}(i)\rfloor - N(\varepsilon)/K\}\) times post exploration, and it allocates the remaining rounds to arm $K$. Let us define the set
\begin{align}
\label{eq:ETC set}
    \mcS^\prime\;\triangleq\; \Big\{i\in[K] : a^{\rm E}_{N(\varepsilon)}(i)< N(\varepsilon)/KT\Big\}\ .
\end{align}
Accordingly, for any arm $i\notin\mcS^\prime$, if $i\neq K$, we have the following bound.
\begin{align}
    \bigg \lvert \frac{\tau_t^{\rm E}(i)}{T} - a^{\rm E}_{N(\varepsilon)}(i)\bigg\rvert\;&\stackrel{\eqref{eq:ETC set}}{=}\; a^{\rm E}_{N(\varepsilon)}(i) - \frac{\tau_t^{\rm E}(i)}{T}\\
    &\stackrel{\eqref{eq: ETC_samp}}{=}\; a^{\rm E}_{N(\varepsilon)}(i) - \frac{\lfloor Ta^{\rm E}_{N(\varepsilon)}(i)\rfloor}{T}\\
    &\leq\; a^{\rm E}_{N(\varepsilon)}(i) - \frac{Ta^{\rm E}_{N(\varepsilon)}(i) - 1}{T}\\
    &<\;\frac{1}{T}\ .
    \label{eq: ETC commit 3}
\end{align}
Alternatively, if $K\notin\mcS^\prime$, we have
\begin{align}
    \bigg \lvert \frac{\tau_T^{\rm E}(K)}{T} - a^{\rm E}_{N(\varepsilon)}(K)\bigg\rvert\;&\stackrel{\eqref{eq:ETC set}}{=}\; a^{\rm E}_{N(\varepsilon)}(K) - \frac{\tau_T^{\rm E}(K)}{T}\\
    &\stackrel{\eqref{eq: ETC_samp}}{=}\; a^{\rm E}_{N(\varepsilon)}(K) - \frac{T - \sum\limits_{i\neq K}\tau_t^{\rm E}(i)}{T}\\
    &=\; a^{\rm E}_{N(\varepsilon)}(K) - 1 - \frac{1}{T} \left (\sum\limits_{i\in\mcS^\prime:i\neq K} \tau_t^{\rm E}(i) + \sum\limits_{i\notin\mcS^\prime: i \neq K} \tau_t^{\rm E}(i)\right)\\
    \label{eq: ETC commit 1}
    &=\; a^{\rm E}_{N(\varepsilon)}(K) - 1 - \frac{1}{T}\left( \frac{N(\varepsilon)}{K}|\mcS^\prime| + \sum\limits_{i\notin\mcS^\prime:i\neq K} \lfloor Ta^{\rm E}_{N(\varepsilon)}(i)\rfloor\right)\\
    &\leq\;a^{\rm E}_{N(\varepsilon)}(K) - 1 - \frac{1}{T}\left( \frac{N(\varepsilon)}{K}|\mcS^\prime| + \sum\limits_{i\notin\mcS^\prime:i\neq K}  Ta^{\rm E}_{N(\varepsilon)}(i)\right)\\
    &=\;\sum\limits_{i\notin\mcS^\prime} a^{\rm E}_{N(\varepsilon)}(i) - 1 + \frac{N(\varepsilon)|\mcS^\prime|}{KT}\\
    &= \;\frac{N(\varepsilon)|\mcS^\prime|}{KT} - \sum\limits_{i\in\mcS^\prime}a^{\rm E}_{N(\varepsilon)}(i) \\
    \label{eq: ETC commit 2}
    &\leq\; \sum\limits_{i\in\mcS^\prime}\frac{N(\varepsilon)}{KT} + \frac{N(\varepsilon)|\mcS^\prime|}{KT}\\
    &= \frac{2N(\varepsilon)|\mcS^\prime|}{KT}\\
    &\leq\; \frac{2N(\varepsilon)}{T}\ ,
    \label{eq: ETC commit 4}
\end{align}
where,
\begin{itemize}
    \item~\eqref{eq: ETC commit 1} follows from the fact that for every $i\in\mcS^\prime$, since these arms have already been over-explored, they are not going to get sampled further by the RS-ETC-M algorithm;
    \item and~\eqref{eq: ETC commit 2} follows from the definition of the set $\mcS$, which dictates that for every arm $i\in\mcS^\prime$, we have \(a^{\rm E}_{N(\varepsilon)}(i)< N(\varepsilon)/KT\).
\end{itemize}
Finally, for every $i\in\mcS^\prime$, we have
\begin{align}
     \bigg \lvert \frac{\tau_T^{\rm E}(i)}{T} - a^{\rm E}_{N(\varepsilon)}(i)\bigg\rvert\;&\stackrel{\eqref{eq:ETC set},\eqref{eq: ETC_samp}}{=}\; \bigg\lvert \frac{N(\varepsilon)}{KT} - a^{\rm E}_{N(\varepsilon)}(i)\bigg\rvert\\
     &\stackrel{\eqref{eq:ETC set}}{=}\; \frac{N(\varepsilon)}{KT} - a^{\rm E}_{N(\varepsilon)}(i)\\
     &\;< \frac{N(\varepsilon)}{KT}\ .
     \label{eq: ETC commit 5}
\end{align}
Hence, combining~\eqref{eq: ETC commit 3},~\eqref{eq: ETC commit 4}, and~\eqref{eq: ETC commit 5}, we conclude that for any $i\in[K]$, we have
\begin{align}
     \bigg \lvert \frac{\tau_T^{\rm E}(i)}{T} - a^{\rm E}_{N(\varepsilon)}(i)\bigg\rvert\;\leq\; \frac{2N(\varepsilon)}{T}\ .
     \label{eq: ETC commit 6}
\end{align}

Leveraging~\eqref{eq: ETC commit 6}, we can now upper bounded \(A_3(T)\) as follows.
\begin{align}
    A_3(T) &=\; \E_{\bnu}^{\rm E} \left[ U_h\left(\sum_{i\in[K]} a^{\rm E}_{N(\varepsilon)}(i)\F_i\right)  - U_h\left(\sum_{i\in[K]} \frac{\tau_t^{\rm E}(i)}{T}\F_i\right) \right] \\ 
&\leq\; \mcL KW^q\E_{\bnu}^{\rm E} \left[ \max_{i \in [K]} \left|a^{\rm E}_{N(\varepsilon)}(i) - \frac{\tau_t^{\rm E}(i)}{T}\right|^q  \right]\\
&\stackrel{\eqref{eq: ETC commit 6}}{\leq}\; \mcL KW^q \left(\frac{2N(\varepsilon)}{T}\right)^q \ .
\label{eq: ETC A3T final}
\end{align}

\subsection{Proof of Theorem \ref{theorem: ETC upper bound}}
We can upper bound the discretized regret leveraging the upper bounds on $A_1(T)$, $A_2(T)$ in~\eqref{eq: ETC A2T final}, and $A_3(T)$ in~\eqref{eq: ETC A3T final} and considering \(\delta = 1/T^2 \). We obtain
\begin{align}
    \Bar{\mathfrak{R}}_{\bnu}^{\rm E}(T) &\stackrel{\eqref{eq: Regret_DISCRET_ETC}}{=}  A_1(T) + A_2(T) + A_3(T) \\
    & \leq  \frac{B}{T^2} + \mcL K \left(3W\frac{N(\varepsilon)}{T}\right)^q \\
    & \leq  (\mcL K + W^{-q}) \left(3W\frac{N(\varepsilon)}{T}\right)^q \ ,
    \label{eq:UP_ETC_eqn1}
\end{align}

where~\eqref{eq:UP_ETC_eqn1} follows from the fact that $B/T^2$ is upper bounded by $(N(\varepsilon)/T)^q$. Finally, combining all the terms $A_1(T)$, $A_2(T)$ in~\eqref{eq: ETC A2T final}, $A_3(T)$ in~\eqref{eq: ETC A3T final}, and the discretization error $\Delta(\varepsilon)$ in~\eqref{lemma:Delta_error}, we obtain
\begin{align}
\label{eq:disc_up_ETC}
    \mathfrak{R}_{\bnu}^{\rm E}(T) &\stackrel{\eqref{eq:regret_decomposition}}{=} \Delta(\varepsilon) + \Bar{\mathfrak{R}}_{\bnu}^{\rm E}(T) \\
    & \leq \Delta(\varepsilon) + (\mcL K +W^{-q}) \left(\frac{2N(\varepsilon)}{T}\right)^q  \\
    \label{eq:ETC_M_eps}
    & = \Delta(\varepsilon) +  (\mcL K +W^{-q}) \Bigg( 3WM(\varepsilon)\frac{\log T}{T} \Bigg)^q\ ,
\end{align}
where~\eqref{eq:ETC_M_eps} follows by defining \(M(\varepsilon) \triangleq N(\varepsilon)/\log T\).

\subsection{Proof of Theorem~\ref{theorem:RS-ETC-M}}
Let us set $\varepsilon = \Theta((K^{2+2/q}T^{-\gamma}\log T)^{q/2\beta})$ for \(\gamma \in (0,1)\). Assuming \(\Delta_{\min}(\varepsilon) = \Omega(\varepsilon^\beta)\ \), from~\eqref{eq:discrete_gap}, the exploration horizon is $N(\varepsilon)=\Theta(T^\gamma)$. Hence, for discrete regret, from~\eqref{eq:ETC_M_eps} we have
\begin{align}
\label{eq:Discrete_order}
    \Bar{\mathfrak{R}}_{\bnu}^{\rm E}(T) = O(KT^{(\gamma-1)q})\ .
\end{align}

Also, from Lemma~\eqref{lemma:Delta_error}, we have
\begin{align}
    \Delta(\varepsilon) &\leq \mcL (KW)^r \left(\frac{\varepsilon}{2}\right)^r  \ .
\end{align}
As a result, \(\Delta(\varepsilon) = O(\varepsilon^r)\). Hence, by our choice of $\varepsilon$, we obtain 
\begin{align}
\label{eq:delta_order}
    \Delta(\varepsilon) = O\left(K^{r + \frac{r(q+1)}{\beta}}T^{-\frac{rq\gamma}{2\beta}}(\log T)^{\frac{rq}{2\beta}}\right).
\end{align}

From~\eqref{eq:disc_up_ETC},~\eqref{eq:Discrete_order}, and~\eqref{eq:delta_order} the regret of the RS-ETC-M algorithm is upper bounded by
    \begin{align}
    \label{eq:ETC upper bound gamma appendix}
        \mathfrak{R}_{\bnu}^{\rm E}(T) = O \left (\max\Big\{ K^{r + \frac{r(q+1)}{\beta}}T^{-\frac{rq\gamma}{2\beta}}(\log T)^{\frac{rq}{2\beta}}\;,\;K^qT^{(\gamma-1)q}\Big\} \right)\ .
    \end{align}
Finally, setting $\gamma = \frac{2\beta}{2\beta + r}$, the regret of the RS-ETC-M algorithm is upper bounded by 
\begin{align}
     \mathfrak{R}_{\bnu}^{{\textnormal{\rm E}}}(T)\ & \leq O\Big(\Big[K^{2(q+\beta+1)}\; T^{-\frac{q}{1+r/2\beta}}\; (\log T)^{q}\Big]^{\frac{r}{2\beta}}\Big)\ .
\end{align}

\section{Risk-sensitive Upper-Confidence Bound (RS-UCB-M) Algorithm}
\label{proof:UCB upper bound}
In this section, we provide the proofs of Theorems \ref{theorem:UCB upper bound} and \ref{corollary:RS-UCB-M}, which characterize the upper bound on the algorithm's regret.

\subsection{Regret Decomposition}
In~\eqref{eq:regret_decomposition}, the regret is decomposed into a discretization error component, and the discrete regret. Lemma~\ref{lemma:Delta_error} shows an upper bound for the discretization error. In this section, we provide an upper bound on discrete regret $\bar{\mathfrak{R}}_{\bnu}^{\rm U}(T)$ of Algorithm~\ref{algorithm:B-UCB-M} as follows. Recall that we have defined \(\tau_t^{\rm U}(i)\) as the number of times RS-UCB-M selects arm \(i \in [K]\) up to time $t$. We have
\begin{align}
    \Bar{\mathfrak{R}}_{\bnu}^{\rm U}(T) &=   U_h\left(\sum_{i\in[K]} a^\star(i)\F_i\right)  - \E_{\bnu}^{\rm U} \left[U_h\left(\sum_{i\in[K]} \frac{\tau_T^{\rm U}(i)}{T}\F_i\right)\right] \\
    &\leq \underbrace{\sum_{\mcS \subseteq [K]: \mcS \neq \emptyset}
    \E_{\bnu}^{\rm U} \left[\Bigg( U_h\left(\sum_{i\in[K]} a^\star(i)\F_i\right)  -  U_h\left(\sum_{i\in[K]} \frac{\tau_T^{\rm U}(i)}{T}\F_i\right) \Bigg) \mathds{1}\{\F_i \notin \mcC_T(i): i \in \mcS\}\right]}_{\triangleq B_1(T)} \\
    &\qquad + \underbrace{\E_{\bnu}^{\rm U} \left[ \Bigg( U_h\left(\sum_{i\in[K]} a^\star(i)\F_i\right)  - U_h\left(\sum_{i\in[K]} \frac{\tau_T^{\rm U}(i)}{T}\F_i\right) \Bigg) \mathds{1}\{\F_i \in \mcC_T(i) \;\forall i \in [K]\} \right]}_{\triangleq B_2(T)} \ .
\end{align}

Expanding $B_1(T)$, we have
\begin{align}
    B_1(T)&=\sum_{\mcS \subseteq [K]: \mcS \neq \emptyset}\P_{\bnu}^{\rm U}\Big ( \F_i \notin \mcC_T(i): i \in \mcS\Big )\nonumber\\
    &\times \quad
    \E_{\bnu}^{\rm U} \left[U_h\left(\sum_{i\in[K]} a^\star(i)\F_i\right)  -  U_h\left(\sum_{i=1}^{K} \frac{\tau_T^{\rm U}(i)}{T}\F_i\right)\bigg\lvert\; \F_i \notin \mcC_T(i): i \in \mcS\right]\ .
\end{align}
Leveraging the fact that
\begin{align}
    \sum_{i=1}^{K} \binom{K}{i} x^i = (x+1)^K-1\ ,
\end{align}
along with the large deviation bound on the confidence sequences from Lemma~\ref{lemma: concentration}, i.e., $\P_{\bnu}^{\rm U}(\F_i\notin\mcC_T(i))\leq 1/T^2$ for every $i\in[K]$, we have
\begin{align}
\label{eq:probi}
    \sum\limits_{\mcS\subseteq[K] : \mcS\neq\emptyset}\P_{\bnu}^{\rm U} \Big(\F_i \notin \mcC_T(i):  i \in \mcS\Big) = \left(\frac{2}{T^2}+1\right)^K-1\ .
\end{align}
Furthermore, owing to the fact that $U_h(\sum_{i\in[K]}\alpha(i)\F_i)\leq B$ for any $\balpha\in\Delta^{K-1}$, we have the following upper bound on $B_1(T)$.
\begin{align}
\label{eq:A_1 bound refer}
    B_1(T)\;\leq\; B\left(\left(\frac{2}{T^2}+1\right)^K-1\right)\ .
\end{align}
Next, we will upper bound $B_2(T)$. For this purpose, we begin by defining the upper confidence bound of a parameter $\balpha\in\Delta^{K-1}$ at time $t\in\N$ as 
 \begin{align}
 \label{eq:UCB_alpha_def}
     {\rm UCB}_t(\balpha)\;\triangleq\; \max\limits_{\eta_1\in\mcC_t(1),\cdots,\mcC_t(K)}\; U\left ( \sum\limits_{i\in[K]}\alpha(i)\eta_i\right )\ .
\end{align}
Furthermore, let us define 
the optimistic CDF estimates which maximize the upper confidence bound for every arm $i\in[K]$ as 
\begin{align}  \widetilde\F_{i,t}\;\triangleq\;\argmax_{\eta_i\in\mcC_t(i)}\;\max\limits_{\eta_j\in\mcC_t(j) : j\neq i} {\rm UCB}_t(\ba^{\rm U}_t)\ .
\end{align}


Expanding $B_2(T)$, we have 
\begin{align}
    B_2(T) &= \left . \E_{\bnu}^{\rm U}\left[ U_h\left(\sum_{i\in[K]} \bar{\alpha}^\star(i) \F_i \right) - U_h\left(\sum_{i\in[K]} \frac{\tau_T^{\rm U}(i)}{T} \F_i \right) \right\lvert \F_i \in \mcC_T(i) \;\forall i \in [K] \right]\\
    & \stackrel{\eqref{eq:UCB_alpha_def}}{\leq} \E_{\bnu}^{\rm U}\left[\left .{\rm UCB}_T(\ba^\star) - U_h\left(\sum_{i\in[K]} \frac{\tau_T^{\rm U}(i)}{T}\F_i\right)\right\lvert \F_i \in \mcC_T(i) \;\forall i \in [K]\right] \\
    & \stackrel{\eqref{eq:UCB_alpha}}{\leq} \E_{\bnu}^{\rm U}\left[\left . {\rm UCB}_T(\ba^{\rm U}_T) - U_h\left(\sum_{i\in[K]} \frac{\tau_T^{\rm U}(i)}{T}\F_i\right)\right\lvert \F_i \in \mcC_T(i) \;\forall i \in [K]\right] \\
    & = \underbrace{\left .\E_{\bnu}^{\rm U}\left[{\rm UCB}_T(\ba^{\rm U}_T) - U_h\left(\sum_{i\in[K]} a^{\rm U}_T(i)\F_i\right)\right\lvert \F_i \in \mcC_T(i) \;\forall i \in [K]\right]}_{\triangleq B_{21}(T)}\nonumber\\
    &\quad +\underbrace{\left .\E_{\bnu}^{\rm U}\left[U_h\left(\sum_{i\in[K]} a^{\rm U}_T(i)\F_i\right) - U_h\left(\sum_{i\in[K]} \frac{\tau_T^{\rm U}(i)}{T}\F_i\right)\right\lvert \F_i \in \mcC_T(i)\; \forall i \in [K]\right]}_{B_{22}(T)}\ . 
\end{align}
Note that the term $B_{21}(T)$ captures the {\em CDF estimation error} incurred by the UCB algorithm, and the term $B_{22}(T)$ reflects the {\em sampling estimation error} incurred by the UCB algorithm. Subsequently, we analyze each of these components.

\subsection{Upper Bound on the CDF Estimation Error}
Expanding $B_{21}(T)$ we obtain
\begin{align}
    B_{21}(T) &\leq\left .\E_{\bnu}^{\rm U}\left[U_h\left(\sum_{i\in[K]} a_{T}^{\rm U}(i)\widetilde\F_{i,T}\right) - U_h\left(\sum_{i\in[K]} a_{T}^{\rm U}(i)\F_i\right)\right\lvert \F_i \in \mcC_T(i) \;\forall i \in [K]\right]\\
    &\stackrel{\eqref{eq:Holder_opt}}{\leq} \E_{\bnu}^{\rm U} \left .\left[\mathcal{L} \sum_{i\in[K]} \left(a_{T}^{\rm U}(i)\right)^q \norm{ \widetilde{\F}_{i,T}-\F_i}_{\rm W}^q  \right\lvert \F_i \in \mcC_T(i) \;\forall i \in [K] \right] \\
&\stackrel{\eqref{eq:UCB_confidence_sets}}{\leq} \E_{\bnu}^{\rm U} \left . \left[\mcL (32)^q\Big(\sqrt{2\e\log T} + 32\big)^q \sum\limits_{i\in[K]} \big(a_T^{\rm U}(i)\big)^q\left( \frac{1}{\tau_T^{\rm U}(i)}\right)^{\frac{q}{2}} \right\lvert \F_i \in \mcC_T(i) \;\forall i \in [K] \right]\\
    \label{eq:UUB_1}
    &\leq  \E_{\bnu}^{\rm U} \left . \left[ \mcL\left( 32 \Big( \sqrt{2\e\log T}+32\Big)\right)^q\cdot \sum\limits_{i\in[K]} \left (\frac{1}{\tau_T^{\rm U}(i)} \right)^{\frac{q}{2}} \right\lvert \F_i \in \mcC_T(i) \;\forall i \in [K] \right]\\
    \label{eq:UUB_2}
    &\leq \mcL\left( 32 \Big( \sqrt{2\e\log T}+32\Big)\right)^q\cdot \sum\limits_{i\in[K]} \left (\frac{1}{\frac{\rho}{4} T \varepsilon}\right)^{\frac{q}{2}}\\
    &=\frac{\mcL K}{T} \left( \frac{32}{\sqrt{\frac{\rho}{4}}}\right)^q T^{1-\frac{q}{2}} \left(\frac{\Big(\sqrt{2\e\log T} + 32\Big)}{\sqrt{\varepsilon}}\right)^q\ ,
    \label{eq:A21}
\end{align}
where,
\begin{itemize}
    \item~\eqref{eq:UUB_1} follows from the fact that $a_T^{\rm U}(i)\leq 1$ for every $i\in[K]$;
    \item and~\eqref{eq:UUB_2} follows from the explicit exploration of each arm $i\in[K]$ in Algorithm~\ref{algorithm:UCB-M}. 
\end{itemize}

\subsection{Upper Bound on the Sampling Estimation Error}
\label{appendix: UCB sampling estimation error}

Finally, we will bound $B_{22}(T)$. The key idea in upper bounding the sampling estimation error is to show that after a certain instant, the RS-UCB-M algorithm outputs an estimate $\balpha_t^{\rm U}$ of the mixing coefficients that belongs to the set of discrete optimal mixtures with a very high probability. Subsequently, we show that the undersampling routine of the RS-UCB-M algorithm ensures that once it has identified a correct optimal mixture, it navigates its sampling proportions to match the estimated coefficient. Let us denote the vector of arm sampling fractions by $\btau_T^{\rm U}\triangleq [\tau_T^{\rm U}(1),\cdots,\tau_T^{\rm U}(K)]^\top$. For the first step, note that
\begin{align}
    B_{22}(T)\;&\stackrel{\eqref{eq:Holder}}{\leq}\;\mcL \E_{\bnu}^{\rm U}\left [ \norm{\sum\limits_{i\in[K]}\left(a^{\rm U}_T(i) - \frac{\tau_T^{\rm U}(i)}{T}\right)\F_i}^q_{\rm W} \;\bigg\lvert\; \F_i\in \mcC_T(i)\;\forall i \in [K]\right ]\\
    &\stackrel{\eqref{eq:W}}{\leq}\;\mcL\E_{\bnu}^{\rm U}\left[ W^q\cdot \norm{\ba_T^{\rm U} - \frac{1}{T}\btau_T^{\rm U}}_1^q\;\bigg\lvert\; \F_i\in \mcC_T(i)\;\forall i \in [K]\right]\ .
    \label{eq:UCB_UB2}
\end{align}
In order to bound $B_{22}(T)$, we will leverage a low probability event (which we will call $\mcE_{3,T}$), and expand~\eqref{eq:UCB_UB2} by conditioning on this event. In order to define the low probability event, let us first lay down a few definitions. First, let us define the event 
\begin{align}
    \mcE_{0,t} \triangleq \Big\{\F_i \in \mcC_t(i) \quad \forall i \in [K]\Big\}\ .
\end{align}
Let us denote the {\em set} of discrete optimal mixtures of the DR $U_h$ by
\begin{align}
    {\rm OPT}_{\varepsilon}\;\triangleq\; \argmax\limits_{\ba\in\Delta_{\varepsilon}^{K-1}}\; U_h\left (\sum\limits_{i\in[K]} a(i)\F_i\right)\ ,
\end{align}
and the {\em set} of optimistic mixtures at each instant $t$ by
\begin{align}
    \widetilde{\rm OPT}_{\varepsilon,t}\;\triangleq\; \argmax\limits_{\ba\in\Delta_{\varepsilon}^{K-1}}\;\max\limits_{\eta_i\in\mcC_t(i)\;\forall i\in[K]}\; U_h\left ( \sum\limits_{i\in[K]}a(i)\eta_i\right )\ .
\end{align}
Furthermore, for any $\ba^\star\in{\rm OPT}_{\varepsilon}$ and $\widetilde\balpha_t\in\widetilde{\rm OPT}_{\varepsilon,t}$, let us define the events
\begin{align}
    \mcE_{1,t}(x) \triangleq \left\{\left| U_h\left(\sum_{i\in[K]} a^\star(i)\widetilde{\F}_{i,t}\right) - U_h\left(\sum_{i\in[K]} a^\star(i) \F_i\right) \right| < x \right\}\ ,
\end{align}
\begin{align}
    \mcE_{2,t}(x) \triangleq \left\{\left| U_h\left(\sum_{i\in[K]} a^{\rm U}_t(i)\widetilde{\F}_{i,t}\right) - U_h\left(\sum_{i\in[K]} a^{\rm U}_t(i)\F_i\right) \right| < x \right\}\ ,
\end{align}
and
\begin{align}
    \mcE_t(x) \triangleq \mcE_{1,t}(x) \bigcap \mcE_{2,t}(x)\ .
\end{align}
Note that
\begin{align}
    \P_{\bnu}^{\rm U} \Big ( \widetilde{\rm OPT}_{\varepsilon,t} \neq {\rm OPT}_{\varepsilon}\Big )\;= \;\P_{\bnu}^{\rm U}\Big (\exists \ba\in\widetilde{\rm OPT}_{\varepsilon,t} : \ba\notin {\rm OPT}_{\varepsilon} \Big )\ .
\end{align}
Let us denote the mixing coefficients that is contained in $\widetilde{\rm OPT}_{\varepsilon,t}$ and yet not in ${\rm OPT}_{\varepsilon}$ by $\ba_t^{\rm U}$. Accordingly, we have
\begin{align}
    \P_{\bnu}^{\rm U}\Big(\ba^{\rm U}_t\notin{\rm OPT}_{\varepsilon}\Big) &= \P_{\bnu}^{\rm U}\Big (\ba^{\rm U}_t\notin{\rm OPT}_{\varepsilon}\mid \bar \mcE_{0,t}\Big )\P_{\bnu}^{\rm U}(\bar \mcE_{0,t}) + \P_{\bnu}^{\rm U}\Big(\ba^{\rm U}_t\notin{\rm OPT}_{\varepsilon} \mid \mcE_{0,t}\Big)\P(\mcE_{0,t})\\
    & \leq \P_{\bnu}^{\rm U}(\bar \mcE_{0,t}) + \P_{\bnu}^{\rm U}\Big(\ba^{\rm U}_t\notin{\rm OPT}_{\varepsilon} \mid \mcE_{0,t}\Big)\\
    &\stackrel{\eqref{eq:probi}}{\leq} \left(\left(\frac{2}{T^2}+1\right)^K-1\right) + \P_{\bnu}^{\rm U}\Big(\ba^{\rm U}_t\notin{\rm OPT}_{\varepsilon} \mid \mcE_{0,t}\Big)\ .
    \label{eq:UCB_UBh1}
\end{align}
Next, note that    
\begin{align}
    &\P_{\bnu}^{\rm U}\Big(\ba^{\rm U}_t\notin{\rm OPT}_{\varepsilon} \mid \mcE_{0,t}\Big)\nonumber\\
    &\;= \P_{\bnu}^{\rm U}\left(U_h\left(\sum_{i\in[K]} a^\star(i) \F_i\right) > U_h\left(\sum_{i\in[K]} a^{\rm U}_t(i)\F_i\right) + \Delta_{\min}(\varepsilon) \bigg\lvert \mcE_{0,t}\right) \\
    &\; = \P_{\bnu}^{\rm U}\left(U_h\left(\sum_{i\in[K]} a^\star(i) \F_i\right) > U_h\left(\sum_{i\in[K]}  a^{\rm U}_t(i)\F_i\right) + \Delta_{\min}(\varepsilon) \Bigg\lvert \mcE_{0,t},\; \mcE_t\left(\frac{1}{2}\Delta_{\min}(\varepsilon)\right)\right)\nonumber\\
    &\qquad\qquad\qquad\qquad\times\P_{\bnu}^{\rm U}\left(\mcE_t\left(\frac{1}{2}\Delta_{\min}(\varepsilon)\right)\bigg\lvert \mcE_{0,t}\right) \\
    &\quad + \P_{\bnu}^{\rm U}\left(U_h\left(\sum_{i\in[K]} a^\star(i) \F_i\right) > U_h\left(\sum_{i\in[K]} a^{\rm U}_t(i)\F_i\right) + \Delta_{\min}(\varepsilon) \Bigg\lvert \mcE_{0,t},\; \bar\mcE_t\left(\frac{1}{2}\Delta_{\min}(\varepsilon)\right)\right)\nonumber\\
    &\qquad\qquad\qquad\qquad\times\P_{\bnu}^{\rm U}\left(\bar\mcE_t\left(\frac{1}{2}\Delta_{\min}(\varepsilon)\right)\bigg\lvert \mcE_{0,t}\right) \\
    & \;\leq \P_{\bnu}^{\rm U}\left(U_h\left(\sum_{i\in[K]} a^\star(i) \F_i\right) > U_h\left(\sum_{i\in[K]} a^{\rm U}_t(i)\F_i\right) + \Delta_{\min}(\varepsilon) \Bigg\lvert \mcE_{0,t},\; \mcE_t\left(\frac{1}{2}\Delta_{\min}(\varepsilon)\right)\right)\nonumber\\
    &\qquad\qquad\qquad\qquad+\P_{\bnu}^{\rm U}\left(\bar\mcE_t\left(\frac{1}{2}\Delta_{\min}(\varepsilon)\right)\bigg\lvert \mcE_{0,t}\right)   \\
    \label{eq:UCB_UB3}
    &\;\leq \P_{\bnu}^{\rm U} \left( U_h\left(\sum_{i\in[K]} a^\star(i)\widetilde{\F}_{i,t}\right)  >  U_h\left(\sum_{i\in[K]} a^{\rm U}_t(i)\widetilde{\F}_{i,t}\right) \Bigg\lvert \mcE_{0,t},\; \bar\mcE_t\left ( \frac{1}{2}\Delta_{\min}(\varepsilon)\right)\right) \nonumber\\
    &\qquad\qquad\qquad\qquad+\P_{\bnu}^{\rm U}\left(\bar\mcE_t\left(\frac{1}{2}\Delta_{\min}(\varepsilon)\right)\bigg\lvert \mcE_{0,t}\right)   \\
    \label{eq:UCB_UB4}
    &\;=\P_{\bnu}^{\rm U}\left(\bar\mcE_t\left(\frac{1}{2}\Delta_{\min}(\varepsilon)\right)\bigg\lvert \mcE_{0,t}\right)   \\
    \label{eq:UCB_UB5}
    & \;\leq \P_{\bnu}^{\rm U}\left(\bar\mcE_{1,t}\left(\frac{1}{2}\Delta_{\min}(\varepsilon)\right)\bigg\lvert \mcE_{0,t}\right) + \P_{\bnu}^{\rm U}\left(\bar\mcE_{2,t}\left(\frac{1}{2}\Delta_{\min}(\varepsilon)\right)\bigg\lvert \mcE_{0,t}\right)\ ,
\end{align}
where
\begin{itemize}
    \item \eqref{eq:UCB_UB3} follows from the definition of the event $\mcE_t(\Delta_{\min}(\frac{1}{2}\varepsilon))$;
    \item \eqref{eq:UCB_UB4} follows from the definitions of $\ba^\star$ (which is an optimizer in the set ${\rm OPT}_{\varepsilon}$) and $\balpha_t^{\rm U}$ (which is an optimizer in the set $\widetilde{\rm OPT}_{\varepsilon,t}$);
    \item and~\eqref{eq:UCB_UB5} follows from a union bound.
\end{itemize}

Next, we will find upper bounds on the two probability terms in~\eqref{eq:UCB_UB5}. Note that for any $t>K\lceil\rho T\varepsilon/4\rceil$ we have
\begin{align}
    &\P_{\bnu}^{\rm U}\left(\bar\mcE_{1,t}\left(\frac{1}{2}\Delta_{\min}(\varepsilon)\right)\bigg\lvert \mcE_{0,t}\right)\nonumber\\
    &\qquad= \P_{\bnu}^{\rm U}\left( \left| U_h\left(\sum_{i\in[K]} a^\star(i)\widetilde{\F}_{i,t}\right) - U_h\left(\sum_{i\in[K]} a^\star(i) \F_i\right) \right| \geq \frac{1}{2}\Delta_{\min}(\varepsilon) \bigg\lvert \mcE_{0,t} \right )\\
    &\qquad\stackrel{\eqref{eq:Holder}}{\leq} \P_{\bnu}^{\rm U} \left(\sum_{i\in[K]} (a^\star(i))^q  \norm{ \widetilde{\F}_{i,t}-\F_i}_{\rm W}^q \geq \frac{1}{2\mathcal{L}}\Delta_{\min}(\varepsilon) \bigg\lvert \mcE_{0,t} \right )\\
    &\qquad\stackrel{\eqref{eq:UCB_confidence_sets}}{=} \P_{\bnu}^{\rm U}\left ( \sum\limits_{i\in[K]}\big(a^\star(i))^q \norm{\widetilde\F_{i,t} - \F_i}_{\rm W}^q\bigg\lvert \mcE_{0,t}\right)\\
    &\qquad\stackrel{\eqref{eq:UCB_confidence_sets}}{\leq} \P_{\bnu}^{\rm U}\left( \sum\limits_{i\in[K]}\underbrace{\big( a^\star(i)\big)^q}_{\leq 1}\left(16\sqrt{\frac{1}{\tau_T^{\rm U}}(i)}\cdot\Big( \sqrt{2\e\log T} + 32\Big)\right)^q>\frac{1}{2\mcL}\Delta_{\min}(\varepsilon)\bigg\lvert \mcE_{0,t}\right)\\
    \label{eq:E3_1}
    &\qquad\leq \sum\limits_{i\in[K]} \P_{\bnu}^{\rm U}\left(\left(\sqrt{\frac{1}{\tau_T^{\rm U}}(i)}\cdot\Big( \sqrt{2\e\log T} + 32\Big)\right)^q>\frac{1}{2K\mcL(16)^q}\Delta_{\min}(\varepsilon)\bigg\lvert \mcE_{0,t}\right)\\
    \label{eq:E3_2}
    &\qquad\leq \sum\limits_{i\in[K]} \P_{\bnu}^{\rm U}\left(\left(\frac{\Big( \sqrt{2\e\log T} + 32\Big)}{\sqrt{\frac{\rho}{4} T \varepsilon}}\right)^q>\frac{1}{2K\mcL(16)^q}\Delta_{\min}(\varepsilon)\bigg\lvert \mcE_{0,t}\right),
\end{align}
where the transition~\eqref{eq:E3_1}-\eqref{eq:E3_2} follows from the explicit exploration phase of the RS-UCB-M algorithm.
Now, let us define a time instant $T_0(\varepsilon)$ as follows.
\begin{align}
\label{eq:T_epsilon}
    T_0(\varepsilon)\; \triangleq \;\inf \left\{ t\in\N :\left(\frac{\Big( \sqrt{2\e\log s} + 32\Big)}{\sqrt{ \frac{\rho}{4} s \varepsilon }}\right) \leq \frac{1}{16} \left( \frac{\Delta_{\min}(\varepsilon)}{2K\mcL}\right)^{\frac{1}{q}} \quad \forall s \geq t\right\}\ .
\end{align}

Hence, $\forall t \geq T_0(\varepsilon)$ we have 
\begin{align}
\label{eq:UCB_UB8}
     \P_{\bnu}^{\rm U}\left(\bar\mcE_{1,t}\left(\frac{1}{2}\Delta_{\min}(\varepsilon)\right)\bigg\lvert \mcE_{0,t}\right)\;=\;0\ .
\end{align}
Using a similar line of arguments, we may readily show that for all $t \geq T_0(\varepsilon)$,
\begin{align}
\label{eq:UCB_UB9}
    \P_{\bnu}^{\rm U}\left(\bar\mcE_{2,t}\left(\frac{1}{2}\Delta_{\min}(\varepsilon)\right)\bigg\lvert \mcE_{0,t}\right)\;=\;0\ .
\end{align}
Combining~\eqref{eq:UCB_UB8} and~\eqref{eq:UCB_UB9} we infer that for all $t \geq T_0(\varepsilon)$,
\begin{align}
\label{eq:UCB_UB10}
    \P_{\bnu}^{\rm U}\Big(\ba^{\rm U}_t\notin{\rm OPT}_{\varepsilon} \mid \mcE_{0,t}\Big)\;=\;0\ ,
\end{align}
which implies that
\begin{align}
\label{eq:UCB_UB10-1}
    \P_{\bnu}^{\rm U}\Big(\widetilde{\rm OPT}_{\varepsilon,t}\neq{\rm OPT}_{\varepsilon} \mid \mcE_{0,t}\Big)\;=\;0\ .
\end{align}
We are now ready to define the low probability event $\mcE_{3,T}$. Let us define
\begin{align}
    \mcE_{3,T}\triangleq\Big\{\exists t \in[T_0(\varepsilon), T]: \widetilde{\rm OPT}_{\varepsilon,t}\neq{\rm OPT}_{\varepsilon}\Big\}\ .
\end{align}
Accordingly, we have the following lemma.

\begin{lemma} 
\label{lemma:probability of E3}
For event \(\mcE_{3,T}\), we have
\begin{align}
    \mathbb{P}_{\bnu}^ {\rm U}\left(\mcE_{3,T}\right) \leq T\left(\left(\frac{2}{T^2}+1\right)^K-1\right)\ .
\end{align}
\end{lemma}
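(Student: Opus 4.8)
The plan is to derive the bound from a single union bound over time, because the genuinely hard work—showing that the optimistic optimal set coincides with the true discrete optimal set on the ``good'' event—has already been completed in \eqref{eq:UCB_UB10-1}. Concretely, I would first apply a union bound over the time indices appearing in the definition of $\mcE_{3,T}$ to get
\begin{align}
\mathbb{P}_{\bnu}^{\rm U}\left(\mcE_{3,T}\right)\;\leq\; \sum_{t=T_0(\varepsilon)}^{T}\mathbb{P}_{\bnu}^{\rm U}\Big(\widetilde{\rm OPT}_{\varepsilon,t}\neq{\rm OPT}_{\varepsilon}\Big)\ .
\end{align}

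Next, for each fixed $t$ I would condition on the event $\mcE_{0,t}$ that all true CDFs lie in their confidence sets and on its complement, writing
\begin{align}
\mathbb{P}_{\bnu}^{\rm U}\Big(\widetilde{\rm OPT}_{\varepsilon,t}\neq{\rm OPT}_{\varepsilon}\Big)\;=\;\mathbb{P}_{\bnu}^{\rm U}\Big(\widetilde{\rm OPT}_{\varepsilon,t}\neq{\rm OPT}_{\varepsilon}\mid\mcE_{0,t}\Big)\mathbb{P}_{\bnu}^{\rm U}(\mcE_{0,t})\;+\;\mathbb{P}_{\bnu}^{\rm U}\Big(\widetilde{\rm OPT}_{\varepsilon,t}\neq{\rm OPT}_{\varepsilon}\mid\bar\mcE_{0,t}\Big)\mathbb{P}_{\bnu}^{\rm U}(\bar\mcE_{0,t})\ .
\end{align}
The first summand vanishes by \eqref{eq:UCB_UB10-1}, which applies exactly because we restrict to $t\geq T_0(\varepsilon)$. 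The second summand is at most $\mathbb{P}_{\bnu}^{\rm U}(\bar\mcE_{0,t})$, and by the subset-union-bound computation in \eqref{eq:probi}—which combines the concentration bound of Lemma~\ref{lemma: concentration} (giving $\mathbb{P}_{\bnu}^{\rm U}(\F_i\notin\mcC_t(i))\leq 2/T^2$ per arm) with the identity $\sum_{i=1}^{K}\binom{K}{i}x^i=(x+1)^K-1$ at $x=2/T^2$—this is bounded by $(\tfrac{2}{T^2}+1)^K-1$.

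The final step is simply to observe that this per-time bound is the \emph{same} constant for every $t$, so summing over the at most $T$ indices in $[T_0(\varepsilon),T]$ yields
\begin{align}
\mathbb{P}_{\bnu}^{\rm U}\left(\mcE_{3,T}\right)\;\leq\;\sum_{t=T_0(\varepsilon)}^{T}\left(\Big(\tfrac{2}{T^2}+1\Big)^K-1\right)\;\leq\;T\left(\Big(\tfrac{2}{T^2}+1\Big)^K-1\right)\ ,
\end{align}
as claimed. The only point requiring care—and the main thing to verify rather than a genuine obstacle—is that the per-time failure probability of the confidence sets is \emph{uniform} in $t$: this is guaranteed because the confidence radii in \eqref{eq:UCB_confidence_sets} are calibrated with the fixed horizon through $\log T$ (rather than $\log t$), so $\mathbb{P}_{\bnu}^{\rm U}(\bar\mcE_{0,t})\leq(\tfrac{2}{T^2}+1)^K-1$ holds for every $t\leq T$ and the union bound loses only the factor $T$.
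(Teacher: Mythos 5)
Your proposal is correct and follows essentially the same route as the paper: a union bound over $t\in[T_0(\varepsilon),T]$, followed by conditioning on $\mcE_{0,t}$, using the fact that the conditional error probability vanishes for $t\geq T_0(\varepsilon)$ (the paper's \eqref{eq:UCB_UBh1} and \eqref{eq:UCB_UB10}) and bounding $\P_{\bnu}^{\rm U}(\bar\mcE_{0,t})$ via \eqref{eq:probi}. Your closing remark that the uniformity in $t$ comes from calibrating the confidence radii with $\log T$ rather than $\log t$ is a correct and worthwhile observation that the paper leaves implicit.
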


\begin{proof}
Note that
    \begin{align}
\mathbb{P}_{\bnu}^{\rm U}\Big(\exists t>T_0(\varepsilon): \widetilde{\rm OPT}_{\varepsilon,t}\neq{\rm OPT}_{\varepsilon}\Big)  &\leq \sum_{t=T_0(\varepsilon)+1}^T \mathbb{P}\left(\widetilde{\rm OPT}_{\varepsilon,t}\neq{\rm OPT}_{\varepsilon}\right) \\
& \stackrel{\eqref{eq:UCB_UBh1}}{\leq} \sum_{t=T_0(\varepsilon)+1}^T \left(\left(\frac{2}{T^2}+1\right)^K-1\right) \nonumber \\ & \qquad + \P_{\bnu}^{\rm U}\Big(\ba^{\rm U}_t\notin{\rm OPT}_{\varepsilon}\med \mcE_{o,t}\Big)\\
& \stackrel{\eqref{eq:UCB_UB10}}{\leq} T\left(\left(\frac{2}{T^2}+1\right)^K-1\right)\ .
\end{align}
\end{proof}

First, thanks to the discretization, all arms are sampled at least once, according to under-sampling.
Let us assume that after the explicit exploration, an arm is never under sampled.
\begin{align}
\label{eq:samlingequiv}
    \tau_t(i) = \frac{\rho}{4} T \epsilon
\end{align}
If an arm is not under sampled that would mean 
\begin{align}
\label{eq:oversampledproof}
    \tau_t(i) \geq t a_t^{U}(i) \ .
\end{align}
\begin{align}
\label{eq:replace}
    \frac{\rho}{4} T \epsilon t^{-1} & \geq a_t^{U}(i) \\
    \label{eq:midpointsmatter}
    & \geq \frac{\varepsilon}{2} 
\end{align}
where 
\begin{itemize}
    \item \eqref{eq:replace} follows from \eqref{eq:samlingequiv} and \eqref{eq:oversampledproof};
    \item and \eqref{eq:midpointsmatter} follows from the definition of discretization.
\end{itemize}
\begin{align}
    \frac{\rho}{2} & \geq \frac{t}{T} \ .
\end{align}
For $t > \frac{\rho T}{2}$, this inequality does not hold. Therefore, if an arm is not sampled after the explicit exploration, it becomes under sampled at the latest at the time instant $\frac{\rho T}{2}$.

Next, we will show that under the event $\bar\mcE_{3,T}$, i.e., when the RS-UCB-M algorithm only selects a mixture distribution from the set of optimal mixtures, the RS-UCB-M arm selection routine, using the {\em under-sampling} procedure, eventually converges to an optimal mixture distribution. This is captured in the following lemma. Prior to stating the lemma, note that under the event $\bar\mcE_{3,T}$, the set of estimated mixing coefficients is the same in each iteration between $T_0(\varepsilon)$ and $T$. Hence, the RS-UCB-M algorithm aims to track {\em one} of these optimal mixing coefficients, which we denote by $\ba^\star\in{\rm OPT}_{\varepsilon}$.

\begin{lemma}
\label{lemma:undersampling}
Under the event $\bar\mcE_{3,T}$, there exists a time instant $T(\varepsilon)<+\infty$, and $\ba^\star\in{\rm OPT}_{\varepsilon}$ such that we have
\begin{align}
    \left| \frac{\tau_t^{\rm U}(i)}{t} - a^\star(i)\right| < \frac{K}{t}  \quad \forall t \geq T(\varepsilon)\ .
\end{align}
\end{lemma}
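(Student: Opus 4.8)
The plan is to recast the selection rule as a deterministic deficit‑tracking dynamics driven by a single, fixed target, and then control the deficits combinatorially. First I would fix, under the event $\bar\mcE_{3,T}$, the mixture the algorithm commits to: by definition of $\bar\mcE_{3,T}$ we have $\widetilde{\rm OPT}_{\varepsilon,t}={\rm OPT}_{\varepsilon}$ for every $t\in[T_0(\varepsilon),T]$, so the optimistic set no longer changes after $T_0(\varepsilon)$. Combined with the consistency rule of RS-UCB-M, which retains $\ba^{\rm U}_{t-1}$ whenever it still solves~\eqref{eq:UCB_alpha}, this shows the algorithm locks onto one $\ba^\star\in{\rm OPT}_{\varepsilon}$ for all $t\ge T_0(\varepsilon)$. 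With this fixed $\ba^\star$, the rule~\eqref{eq:UCB_sampling_rule} becomes $A_{t+1}\in\argmin_{i\in[K]} g_t(i)$, where I set the deficit $g_t(i)\triangleq \tau_t^{\rm U}(i)-t\,a^\star(i)$; note $\sum_{i\in[K]}g_t(i)=0$ at all times, so $\min_i g_t(i)\le 0\le\max_i g_t(i)$. Because RS-UCB-M uses the mid-point discretization, every coordinate satisfies $a^\star(i)\ge\varepsilon/2>0$, which will be the key quantitative ingredient.

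Next I would prove that $\{\max_i g_t(i)<1\}$ is invariant under the dynamics. If $j=\argmin_i g_t(i)$ is pulled, then $g_{t+1}(j)=g_t(j)+1-a^\star(j)\le 1-a^\star(j)<1$ since $g_t(j)=\min_i g_t(i)\le 0$, while for $i\ne j$ we have $g_{t+1}(i)=g_t(i)-a^\star(i)<g_t(i)$. Hence $\max_i g_t(i)<1\Rightarrow\max_i g_{t+1}(i)<1$. Then I would show the over-sampling produced during the forced exploration phase washes out by time $T(\varepsilon)$: whenever $\max_i g_t(i)\ge 1$, the arm attaining the maximum is strictly over-sampled and therefore distinct from the (under-sampled) arm pulled, so every non-pulled arm's deficit drops by at least $a^\star(i)\ge \varepsilon/2$ while the pulled arm stays below $1$. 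This gives $\max_i g_{t+1}(i)\le\max\{\max_i g_t(i)-\varepsilon/2,\,1\}$, i.e. the maximal deficit decreases by at least $\varepsilon/2$ per step until it falls below $1$. Since $\max_i g_{T_0(\varepsilon)}(i)\le T_0(\varepsilon)-1$, this costs at most $\frac{2}{\varepsilon}(T_0(\varepsilon)-1)=T(\varepsilon)$ steps, matching the pre-lemma computation that an un-resampled arm becomes under-sampled by $\rho T/2$.

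Combining the two steps, $\max_i g_t(i)<1$ for all $t\ge T(\varepsilon)$. Finally, using $\sum_i g_t(i)=0$ together with $\max_i g_t(i)<1$ yields, for each $i$, $g_t(i)=-\sum_{j\ne i}g_t(j)\ge-(K-1)\max_j g_t(j)>-(K-1)$, and of course $g_t(i)<1$. Thus $|g_t(i)|<K$, which is exactly $\bigl|\tau_t^{\rm U}(i)/t-a^\star(i)\bigr|<K/t$ for all $t\ge T(\varepsilon)$, proving the claim.

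The hard part will be the washout step: I must argue that the large deficits left over from the round-robin exploration are monotonically driven below $1$ (not merely made nonpositive at a single instant) and that this occurs uniformly across all arms within the horizon $T(\varepsilon)$. This is where $a^\star(i)\ge\varepsilon/2$ from the mid-point discretization is essential, since it both guarantees a strictly positive drift of each over-sampled arm's deficit and pins down the $2/\varepsilon$ factor in $T(\varepsilon)$. A secondary point requiring care is verifying that the committed target $\ba^\star$ genuinely does not change on $[T_0(\varepsilon),T]$ under $\bar\mcE_{3,T}$, so that the deficit recursion is driven by a single vector throughout; otherwise the invariance argument would have to be restarted whenever the target moves.
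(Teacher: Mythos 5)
Your proposal is correct and follows essentially the same route as the paper: your invariant $\max_i g_t(i)<1$ with the deficit $g_t(i)=\tau_t^{\rm U}(i)-t\,a^\star(i)$ is exactly the paper's statement that no arm (re-)enters the over-sampled set $\mcO_t=\{i:\tau_t^{\rm U}(i)/t>a^\star(i)+1/t\}$ (its Lemma~\ref{lemma:oversampling}), your washout step using $a^\star(i)\ge\varepsilon/2$ reproduces the paper's bound $m=(2/\varepsilon-1)T_0(\varepsilon)-2/\varepsilon$ giving $T(\varepsilon)=\tfrac{2}{\varepsilon}(T_0(\varepsilon)-1)$, and your final passage from the one-sided bound to $|g_t(i)|<K$ via $\sum_i g_t(i)=0$ is the paper's concluding contradiction argument. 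No gaps.
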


\begin{proof}
We begin by by defining a set of {\em over-sampled} arms as follows. We define the set $\mcO_t$ as
\begin{align}
    \mcO_t \;\triangleq\; \left\{i \in [K]: \frac{\tau_t^{\rm U}(i)}{t} > a^\star(i) + \frac{1}{t}\right\}\ .
\end{align}
We will leverage Lemma~\ref{lemma:oversampling}, in which we show that there exists a finite time instant, which we call $T(\varepsilon)$, such that for all $t>T(\varepsilon)$ the set of over-sampled arms is {\em empty}. Specifically, leveraging Lemma~\ref{lemma:oversampling} we obtain
\begin{align}
    \frac{\tau_t^{\rm U}(i)}{t} - a^\star(i) \;\leq\; \frac{1}{t}  \quad \forall t \geq T(\varepsilon)\ ,
\end{align}
which also implies that
\begin{align}
\label{eq:UK_oneside}
    \frac{\tau_t^{\rm U}(i)}{t} - a^\star(i) \;<\; \frac{K}{t}  \quad \forall t \geq T(\varepsilon)\ .
\end{align}
Next, let us assume that there exists some $j\in[K]$ such that
\begin{align}
    \frac{\tau_t^{\rm U}(j)}{t} < a^\star(j)  - \frac{K}{t} \ .
\end{align}
In this case, we have
\begin{align}
    \sum\limits_{i\in[K]} \frac{\tau_t^{\rm U}(i)}{t} &= \sum\limits_{i \neq j} \frac{\tau_t^{\rm U}(i)}{t} + \frac{\tau_t^{\rm U}(j)}{t} \\
    & \stackrel{\eqref{eq:UK_oneside}}{\leq}  \sum\limits_{i \neq j} a^\star(i)  + \frac{K-1}{t} + a^\star(j)  - \frac{K}{t}  \\
    & = 1 - \frac{1}{t}\ ,
\end{align}
which is a contradiction, since we should have 
\begin{align}
    \sum\limits_{i\in[K]} \frac{\tau_t^{\rm U}(i)}{t} \;=\; 1\ .
\end{align}
Hence, we can conclude that 
\begin{align}
  \left| \frac{\tau_t^{\rm U}(i)}{t} - a^\star(i)\right| < \frac{K}{t}  \quad \forall t \geq T(\varepsilon) \ . 
\end{align}
\end{proof}

\begin{lemma}
\label{lemma:oversampling}
Under the event $\bar\mcE_{3,T}$, there exists a finite time $T(\varepsilon)<+\infty$ such that for every $t>T(\varepsilon)$, we have $\mcO_t = \emptyset$.
\end{lemma}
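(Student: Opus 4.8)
The plan is to work on the event $\bar\mcE_{3,T}$, on which, by the very definition of $\mcE_{3,T}$, we have $\widetilde{\rm OPT}_{\varepsilon,t}={\rm OPT}_{\varepsilon}$ for every $t\in[T_0(\varepsilon),T]$. Combined with the single-mixture tracking mechanism, this means that for all such $t$ the optimistic estimate is frozen at one fixed discrete optimum $\ba^\star\in{\rm OPT}_{\varepsilon}$, so the selection rule \eqref{eq:UCB_sampling_rule} becomes $A_{t+1}=\argmax_{i\in[K]}\{t\,a^\star(i)-\tau_t^{\rm U}(i)\}$. The one structural fact I would flag at the outset is that, because RS-UCB-M discretizes $\Delta^{K-1}$ using interval \emph{mid-points}, every coordinate of $\ba^\star$ satisfies $a^\star(i)\geq \varepsilon/2>0$; this positivity is what ultimately makes the over-sampling threshold grow and is the reason $T(\varepsilon)$ is finite.

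The first key step is to show that an over-sampled arm is never selected. If $i\in\mcO_t$ then $\tau_t^{\rm U}(i)>t\,a^\star(i)+1$, i.e. $t\,a^\star(i)-\tau_t^{\rm U}(i)<-1$. Since exactly one arm is sampled per round, $\sum_{j\in[K]}\tau_t^{\rm U}(j)=t$ and $\sum_{j\in[K]}a^\star(j)=1$, so $\sum_{j\in[K]}\big(t\,a^\star(j)-\tau_t^{\rm U}(j)\big)=0$; hence the maximum of these index values is nonnegative, and the over-sampled arm (with value $<-1$) cannot be the $\argmax$. The second step is a monotone stability property: for $t\geq T_0(\varepsilon)$, an arm that is not over-sampled at time $t$ is not over-sampled at $t+1$. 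I would verify this by two cases. If the arm is not selected, its count is frozen while the threshold $t\,a^\star(i)+1$ only increases, so the inequality $\tau^{\rm U}(i)\leq t\,a^\star(i)+1$ is preserved. If it is selected, then being the $\argmax$ forces $t\,a^\star(i)-\tau_t^{\rm U}(i)\geq 0$, i.e. $\tau_t^{\rm U}(i)\leq t\,a^\star(i)$, whence $\tau_{t+1}^{\rm U}(i)=\tau_t^{\rm U}(i)+1\leq (t+1)a^\star(i)+1$, again not over-sampled.

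The third step combines these to pin down the catch-up time. An arm $i$ that is over-sampled at $T_0(\varepsilon)$ is never selected while it stays over-sampled (step one), so its count remains $\tau_{T_0(\varepsilon)}^{\rm U}(i)\leq T_0(\varepsilon)$; it leaves $\mcO_t$ as soon as $t\,a^\star(i)+1\geq \tau_{T_0(\varepsilon)}^{\rm U}(i)$, and using $a^\star(i)\geq\varepsilon/2$ and $\tau_{T_0(\varepsilon)}^{\rm U}(i)\leq T_0(\varepsilon)$ this holds once $t\geq \tfrac{2}{\varepsilon}\big(T_0(\varepsilon)-1\big)=T(\varepsilon)$, matching the definition of $T(\varepsilon)$ in \eqref{eq:T_epsilon}. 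By step two, once an arm exits $\mcO_t$ it never re-enters, and no arm that was already un-over-sampled at $T_0(\varepsilon)$ ever becomes over-sampled. Therefore every arm is out of $\mcO_t$ by time $T(\varepsilon)$ and stays out, giving $\mcO_t=\emptyset$ for all $t>T(\varepsilon)$.

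The main obstacle, and the part I would spell out most carefully, is the interplay between the frozen-count dynamics and the monotone stability argument: one must check that no arm can oscillate in and out of $\mcO_t$, that selecting an arm cannot itself create over-sampling (the $\tau_t^{\rm U}(i)\leq t\,a^\star(i)$ bound for the selected arm is the crux), and that the explicit constant $\tfrac{2}{\varepsilon}(T_0(\varepsilon)-1)$ is indeed an upper bound on the catch-up time uniformly over all arms. The positivity $a^\star(i)\geq\varepsilon/2$ afforded by the mid-point discretization is the ingredient that prevents a pathological arm with $a^\star(i)=0$ from remaining permanently over-sampled.
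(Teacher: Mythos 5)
Your proposal is correct and follows essentially the same route as the paper's proof: over-sampled arms are never selected, arms outside $\mcO_t$ never enter it (via the same two-case analysis on whether the arm is selected), and the catch-up time is bounded by $\tfrac{2}{\varepsilon}(T_0(\varepsilon)-1)$ using $a^\star(i)\geq\varepsilon/2$ from the mid-point discretization. Your zero-sum argument $\sum_{j}(t\,a^\star(j)-\tau_t^{\rm U}(j))=0$ is a slightly cleaner way of justifying the step the paper only asserts (that some arm always has a nonnegative index, so over-sampled arms cannot be the $\argmax$), but it is the same idea.
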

\begin{proof}
    For any time $t>T_0(\varepsilon)$, 
    and some arm $j\notin\mcO_t$, we have
    \begin{align}
        \frac{\tau_t^{\rm U}(j)}{t}\;\leq\; a^\star(j) + \frac{1}{t}\ .
    \end{align}
    Based on this we have the following two regimes of the arm selection fraction for arm $j$.

    \textbf{Case (a): ($\tau_t^{\rm U}(j)/t \leq \bar\alpha_j^\star$).} Since the RS-UCB-M algorithm only samples under-sampled arms, it is probable that the arm $j$ is sampled at time $t+1$. In that case, we would have
    \begin{align}
    \frac{\tau_{t+1}^{\rm UCB}(j)}{t+1} \;&\leq\; \frac{\tau_t^{\rm U}(j)+1}{t+1}  \\
    & =\; \frac{\tau_t^{\rm U}(j)}{t+1} + \frac{1}{t+1} \\
    \label{eq:oversampling1}
    & \leq\; \frac{a^\star(j) t}{t+1} + \frac{1}{t+1} \\
    \label{eq:UK_1}
    & \leq\; a^\star(j) + \frac{1}{t+1}\ ,
    \end{align}
    where~\eqref{eq:oversampling1} follows from the fact that $\tau_t^{\rm U}(j)\leq t\bar\alpha_j^\star$. Clearly, we see that the arm $j$ does not enter the set of over-sampled arms $\mcO_t$ in the subsequent round $t+1$.

    \textbf{Case (b): ($\bar\alpha_j^\star\leq\tau_t^{\rm U}(j)/t \leq \bar\alpha_j^\star + 1/t$).} In this case, there always exists at least one arm $k\in[K]$ which satisfies that $\tau_t^{\rm U}(k)/t\leq a^\star_k$. This implies that the arm $j$ is not sampled by the RS-UCB-M arm selection rule, since it is not the most under-sampled arm. In this case, we have
    \begin{align}
    \frac{\tau_{t+1}^{\rm UCB}(j)}{t+1} \;&=\; \frac{\tau_t^{\rm U}(j)}{t+1}  \\
    & =\; \frac{\tau_t^{\rm U}(j)}{t}\frac{t}{t+1} \\
    \label{eq:oversampling2}
    & \leq\; \left(a^\star(j) + \frac{1}{t}\right)\frac{t}{t+1} \\
    & =\; a^\star(j)\frac{t}{t+1} + \frac{1}{t+1} \\ 
    \label{eq:UK_2}
    & \;\leq a^\star(j) + \frac{1}{t+1}\ ,
\end{align}
where~\eqref{eq:oversampling2} from the definition of case (b). Again, we have concluded that $j\notin\mcO_{t+1}$. Combining~\eqref{eq:UK_1} and~\eqref{eq:UK_2}, we conclude that none of the arms which are not already contained in the set $\mcO_t$ ever enters this set. Hence, all we are left to show is the existence of the time instant after which the set $\mcO_t$ becomes an empty set. Evidently, since the RS-UCB-M algorithm never samples from the set of over-sampled arms $\mcO_t$, we will derive an upper bound $m$ such that after $t > T_0(\varepsilon) + m$, all arms leave the set $\mcO_t$. Notably, for any arm $i\in\mcO_t$, it holds that
\begin{align}
    \frac{\tau_t^{\rm U}(i)}{t}\; >\; a^\star(i) + \frac{1}{t}\ .
\end{align}
Let $m$ be such that
\begin{align}
\label{eq:oversampling3}
    \frac{\tau_{T_0(\varepsilon)+m}^{\rm U}(i)}{T_0(\varepsilon)+m}\;\leq\;a^\star(i) + \frac{1}{T_0(\varepsilon)+m}\ ,
\end{align}
which implies that the arm $i$ has left the set $\mcO_{T(\varepsilon)+m}$. Furthermore, since arm $i$ is never sampled between times instants $T(\varepsilon)$ and $T(\varepsilon)+m$ (as it belongs to the over-sampled set),~\eqref{eq:oversampling3} can be equivalently written as
\begin{align}
\label{eq:m_finite}
    \frac{\tau_{T_0(\varepsilon)}^{\rm U}(i)}{T_0(\varepsilon)+m}\;\leq\;a^\star(i)+ \frac{1}{T_0(\varepsilon)+m}\ .
\end{align}
Next, noting that for any arm $i\in[K]$, we have the upper bound $\tau_t^{\rm U}(i)\leq t$ on the number of times $i$ is chosen up to time $t$, and that 
$a^\star(i)\geq\varepsilon / 2$ for every $i\in[K]$, we have the following choice for $m$.  
\begin{align}
\label{eq:m}
    m \; = \; \left(\frac {2}{\varepsilon}-1 \right)T_0(\varepsilon) - \frac{2}{\varepsilon}\ . 
\end{align}
The proof is completed by defining 
\begin{align}
\label{eq:T_epsilon_actual}
    T(\varepsilon) \triangleq T_0(\varepsilon) + m\ .
\end{align}
\end{proof}


Next, from~\eqref{eq:UCB_UB2}, we have
\begin{align}
    B_{22}(T)\;&\leq\; \mcL W^q \E_{\bnu}^{\rm U} \left [ \sum_{i\in[K]} \left\lvert a_T^{\rm U}(i) - \frac{\tau_T^{\rm}(i)}{T}\right\rvert^q\;\Bigg\lvert\; \F_i\in\mcC_T(i)\;\forall\; i\in[K]\right ]\\
    \label{eq:UCB_f1}
    &\leq\; \frac{1}{\P_{\bnu}^{\rm U}\Big( \F_i\in\mcC_T(i)\;\forall\; i\in[K]\Big)}\mcL W^q \E_{\bnu}^{\rm U} \left [ \sum_{i\in[K]}\left \lvert a^{\rm U}_T(i) - \frac{\tau_T^{\rm U}(i)}{T}\right\rvert^q\right]\\
    \label{eq:UCB_f2}
    &\leq\; \frac{1}{2-\Big( \frac{1}{T^2} + 1\Big)^K}\mcL W^q \E_{\bnu}^{\rm U} \left [ \sum_{i\in[K]}\left \lvert a^{\rm U}_T(i) - \frac{\tau_T^{\rm U}(i)}{T}\right\rvert^q\right]\ , 
\end{align}
where,
\begin{itemize}
    \item[(i)]~\eqref{eq:UCB_f1} follows from the fact that $\E[A|B] = (\E[A] - \E[A|\bar B]\cdot\P(\bar B))/\P(B) \leq \E[A]/\P(B)$, and,
    \item[(ii)]~\eqref{eq:UCB_f2} follows from~\eqref{eq:probi}. 
\end{itemize}
Furthermore, for all $T\geq 3$ and $K\geq 2$, it can be readily verified that $2 - (2/T^2 + 1)^K > 1/2$, which implies that
\begin{align}
    B_{22}(T) & \stackrel{\eqref{eq:UCB_f2}}{\leq} 2\mcL W^q \E_{\bnu}^{\rm U}\left [ \sum_{i\in[K]}\left\lvert a_T^{\rm U} - \frac{\tau_T^{\rm U}(i)}{T}\right\rvert^q\right]
    \\
    &= 2\mcL W^q \E_{\bnu}^{\rm U}\left [ \sum_{i\in[K]}\left\lvert a_T^{\rm U} - \frac{\tau_T^{\rm U}(i)}{T}\right\rvert^q\;\bigg\lvert \mcE_{3,T}\right]\P_{\bnu}^{\rm U}\Big( \mcE_{3,T}\Big) + 2\mcL W^q \E_{\bnu}^{\rm U}\left [ \sum_{i\in[K]}\left\lvert a_T^{\rm U} - \frac{\tau_T^{\rm U}(i)}{T}\right\rvert^q\;\bigg\lvert\;\bar\mcE_{3,T}\right]\P_{\bnu}^{\rm U}\Big( \bar\mcE_{3,T}\Big)\\
    \label{eq:UCB_UB11}
    &\leq 2\mcL K W^q \left(\P_{\bnu}^{\rm U}(\mcE_{3,T}) + K\left(\frac{K}{T}\right)^q\right)\\
    \label{eq:UCB_UB12}
    &\leq 2\mcL K W^q  \left(T\left(\left(\frac{2}{T^2}+1\right)^K-1\right) + \left(\frac{K}{T}\right)^q\right)\ ,
\end{align}
where~\eqref{eq:UCB_UB11} follows from Lemma~\ref{lemma:oversampling} and~\eqref{eq:UCB_UB12} follows from Lemma~\ref{lemma:probability of E3}.

\subsection{Proof of Theorem~\ref{theorem:UCB upper bound} for RS-UCB-M}

Finally, we add up all the terms to find an upper bound on the discrete regret. We have
\begin{align}
    \label{eq:Discretized_regret}
    \Bar{\mathfrak{R}}_{\bnu}^{\rm U}(T)\;&=\; B_1(T) + B_{21}(T) + B_{22}(T) \\
    &\leq\; \frac{1}{T} \Bigg [ BT\left( \left( \frac{2}{T^2}+1\right)^K - 1\right) \\ & \qquad +{\mcL K} \left( \frac{32}{\sqrt{\rho}}\right)^q T^{1-\frac{q}{2}} \Big(\sqrt{2\e\log T} + 32\Big)^q\nonumber\\
    &\qquad +2\mcL W^q  \left(KT^2\left(\left(\frac{2}{T^2}+1\right)^K-1\right) + K^{1+q}T^{1-q}\right) \Bigg]\ . 
\end{align}

Leveraging the upper bounds on discretization error and the discrete regret, we state an upper bound on the regret of the RS-UCB-M algorithm.
\begin{align}
&\mathfrak{R}_{\bnu}^{\rm U}(T) = \Delta(\varepsilon)  + \Bar{\mathfrak{R}}_{\bnu}^{\rm U}(T) \\ & \leq \Delta(\varepsilon) + \frac{1}{T} \Bigg [ BT\left( \left( \frac{2}{T^2}+1\right)^K - 1\right)\nonumber\\
    &\quad + 2\mcL W^q \left(KT^2\left(\left(\frac{2}{T^2}+1\right)^K-1\right) + K^{1+q}T^{1-q}\right)\nonumber\\
    & \qquad +\underbrace{ {\mcL K} \left( \frac{64}{\sqrt{\rho {\varepsilon}}}\right)^q T^{1-\frac{q}{2}} \Big(\sqrt{2\e\log T} + 32\Big)^q\Bigg] }_{B_3(T)} \ . \\
    \end{align}
\(B_3(T)\) is the dominating term for any $T>\e^K$. Hence, we can simplify the upper bound as follows.
    \begin{align}
    \label{eq:theorem_lastline}
 & \mathfrak{R}_{\bnu}^{\rm U}(T) \leq \Delta(\varepsilon) + (B + \mcL(W^q+1))\Big({64\varepsilon^{-\frac{1}{2}}} \rho^{-\frac{1}{2}} T^{-\frac{1}{2}}\Big( \sqrt{2\e\log T} + 32\Big)\Big)^q \ .
\end{align}

\subsection{Proof of Theorem~\ref{corollary:RS-UCB-M} for RS-UCB-M}
Let us set 
\begin{align}
\varepsilon = \Theta \left(\left(K^{\frac{2}{q}}\log T / T\right)^{\kappa}\right)\ ,
\end{align}
 where $\kappa = \frac{1}{\frac{2\beta}{q}+2}$.  
Assuming \(\Delta_{\min}(\varepsilon) = \Omega(\varepsilon^\beta)\ \), it can be readily verified that this choice of the discretization level $\varepsilon$ satisfies the condition in~\eqref{eq:T_epsilon_actual}. Hence, for discrete regret, from~\eqref{eq:Discretized_regret} we have
\begin{align}
\label{eq:Discrete_order_UCB_RS}
    \Bar{\mathfrak{R}}_{\bnu}^{\rm U}(T) &= O \left ((K^{\frac{2}{q}}\log T/T)^{\left(1-\kappa\right)\frac{q}{2}} \right) 
    \ .
\end{align}
Furthermore, from Lemma~\eqref{lemma:Delta_error}, we have
\begin{align}
    \Delta(\varepsilon) &\leq \mcL (KW)^r \left(\frac{\varepsilon}{2}\right)^r  \ .
\end{align}
Hence, \(\Delta(\varepsilon) = O(\varepsilon^r)\). Consequently, we have
\begin{align}
\label{eq:delta_order_UCB_RS}
    \Delta(\varepsilon) = O\left(K^{r \left( 1 + \frac{2\kappa}{q} \right)}\left(\log T / T\right)^{r \kappa} \right).
\end{align}
From~\eqref{eq:Discrete_order_UCB_RS} and \eqref{eq:delta_order_UCB_RS} the regret of the RS-UCB-M is bounded from above by
    \begin{align}
    \label{eq:UCB upper bound appendix}
        \mathfrak{R}_{\bnu}^{\rm U}(T) &= O \left (\max\Big\{ K^{r \left( 1 + \frac{2\kappa}{q} \right)}\left(\log T / T\right)^{r \kappa}  \;,\;(K^{\frac{2}{q}}\log T/T)^{\left(1-\kappa\right)\frac{q}{2}} \Big\} \right) \\
        & = O \left (\max\Big\{K^{r \left( 1 + \frac{2\kappa}{q} \right)}, K^{1-\kappa} \Big\} \max\Big\{ \left(\log T / T\right)^{r \kappa}  \;,\;  (\log T/T)^{(1-\kappa)\frac{q}{2}}\Big\} \right)\ \ .
    \end{align}
When $r\leq \beta + q/2$, this bound becomes
\begin{align}
    \mathfrak{R}_{\bnu}^{\rm U}(T) \leq O \left (\max\Big\{K^{r \left( 1 + \frac{2\kappa}{q} \right)}, K^{1-\kappa} \Big\}   (\log T/T)^{r\kappa} \right)\ .
\end{align}

\section{CE-UCB-M Algorithm and Its Performance}
\label{sec:SE-UCB-M proofs}
Finally, in this section, we provide the pseudo-code of the CE-UCB-M algorithm and the proof of Theorem \ref{theorem:UCB upper bound} which characterize the upper bound on the algorithm's regret.
\label{Appendix:Upper_last_UCB}
\subsection{CE-UCB-M Algorithm}
The detailed steps of the RS-UCB-M algorithm with the sampling rule specified as in \eqref{eq:UCB_sampling_rule} are presented in Algorithm \ref{algorithm:UCB-M}.
\label{Appendix:CE-UCB-M-alg}
				\begin{algorithm}[h]
		\caption{Risk-Sensitive Upper Confidence Bound for Mixtures (CE-UCB-M)}
		\label{algorithm:UCB-M}
		
 		\begin{algorithmic}[1]
            \STATE \textbf{Input:} Exploration rate $\rho$, horizon $T$
            \STATE Sample each arm $\lceil \rho T \varepsilon/ 4 \rceil$ times and obtain observation sequences $\mcX_{\lceil \rho T \varepsilon/ 4\rceil}(1),\cdots,\mcX_{\lceil \rho T \varepsilon/ 4 \rceil}(K)$
            \STATE \textbf{Initialize:} $\tau_{K\lceil \rho T \varepsilon/ 4 \rceil}^{\rm C}(i) = \lceil \rho T \varepsilon/ 4 \rceil\;\forall\;i\in[K]$, arm CDFs $\F^{\rm C}_{1, \lceil K \rho T \varepsilon/ 4 \rceil}(1),\cdots,\F^{\rm C}_{K, K\lceil \rho T \varepsilon/ 4\rceil}$, confidence sets $\mcC_{K \lceil \rho T \varepsilon/ 4 \rceil}(1),\cdots\mcC_{K\lceil \rho T \varepsilon/ 4\rceil}(K)$ according to~\eqref{eq:UCB_confidence_sets}
			\FOR{$t= K \lceil \rho T \varepsilon/ 4\rceil+1,\cdots,T$}
			    \STATE Select an arm $A_{t}$ specified by~(\ref{eq:UCB_sampling_rule}) and obtain reward $X_t$\\
                \STATE Update the empirical CDF $\F^{\rm C}_{A_t,t}$ according to~\eqref{eq:empirical_CDF}
                \STATE Compute the optimistic estimate $\ba^{\rm C}_t$ according to~\eqref{eq:UCB_alpha2}
			\ENDFOR
 		\end{algorithmic}
	\end{algorithm}

\subsection{Proof of Theorem~\ref{theorem:UCB upper bound} for CE-UCB-M}
\label{Appendix:CE-UCB-M_algorithm}
The analysis of the CE-UCB-M algorithm closely follows that of the RS-UCB-M algorithm with some minute differences. We will briefly state the steps in the RS-UCB-M analysis, and in the process, highlight the key distinctions in the analysis of the CE-UCB-M algorithm. Henceforth, we will use $\pi={\rm C}$ to denote the policy CE-UCB-M. 
Similarly to the RS-UCB-M analysis and~\eqref{eq:regret_decomposition}, we decompose the regret into a discretization error component $\Delta(\varepsilon)$, and the discrete regret $\bar{\mathfrak{R}}_{\bnu}^{\rm C}(T)$. Leveraging Lemma~\ref{lemma:Delta_error}, it may be readily verified that
\begin{align}
    \Delta(\varepsilon) \leq \mcL (KW)^r \left(\frac{\varepsilon}{2}\right)^r \ .
\end{align}

Next, we digress to upper bounding the discrete regret $\bar\R_{\bnu}^{\rm C}(T)$. We have
\begin{align}
    \Bar{\mathfrak{R}}_{\bnu}^{\rm C}(T) &=   U_h\left(\sum_{i\in[K]} a^\star(i)\F_i\right)  - \E_{\bnu}^{\rm C} \left[U_h\left(\sum_{i\in[K]} \frac{\tau_t^{\rm C}(i)}{T}\F_i\right)\right] \\
    &\leq \underbrace{\sum_{\mcS \subseteq [K]: \mcS \neq \emptyset}
    \E_{\bnu}^{\rm C} \left[U_h\left(\sum_{i\in[K]} a^\star(i)\F_i\right)  -  U_h\left(\sum_{i\in[K]} \frac{\tau_t^{\rm C}(i)}{T}\F_i\right)\mathds{1}\{\F_i \notin \mcC_T(i): i \in \mcS\}\right]}_{\triangleq C_1(T)} \\
    &\qquad + \underbrace{\E_{\bnu}^{\rm C} \left[ U_h\left(\sum_{i\in[K]} a^\star(i)\F_i\right)  - U_h\left(\sum_{i\in[K]} \frac{\tau_t^{\rm C}(i)}{T}\F_i\right) \mathds{1}\{\F_i \in \mcC_T(i) \;\forall i \in [K]\} \right]}_{\triangleq C_2(T)}\ ,
\end{align}
where $\mcC_t(i)$ has the same definition as in~\eqref{eq:UCB_confidence_sets} for every $i\in[K]$. Furthermore, we have already upper bounded the term $A_1(T)$ in Appendix~\ref{proof:UCB upper bound} (equation ~\eqref{eq:A_1 bound refer}), which is given by
\begin{align}
    C_1(T)\;\leq\; B\left(\left(\frac{1}{T^2}+1\right)^K-1\right)\ .
\end{align}
We now resort to upper bounding the term $C_2(T)$. Note that
\begin{align}
    C_2(T)\;&=\; \E_{\bnu}^{\rm C} \left[ \left(U_h\left(\sum_{i\in[K]} a^\star(i)\F_i\right)  - U_h\left(\sum_{i\in[K]} \frac{\tau_t^{\rm C}(i)}{T}\F_i\right) \right)\mathds{1}\{\F_i \in \mcC_T(i) \;\forall i \in [K]\} \right]\\
    &\leq\; \E_{\bnu}^{\rm C} \left[ U_h\left(\sum_{i\in[K]} a^\star(i)\F_i\right)  - U_h\left(\sum_{i\in[K]} \frac{\tau_t^{\rm C}(i)}{T}\F_i\right) \;\Big\lvert\; \F_i \in \mcC_T(i) \;\forall i \in [K]\right]\\
    \label{eq:CEUCB1}
    &\leq \E_{\bnu}^{\rm C} \Bigg[ U_h\left(\sum_{i\in[K]} a^\star(i)\F_{i,T}^{\rm C}\right)  - U_h\left(\sum_{i\in[K]} \frac{\tau_t^{\rm C}(i)}{T}\F_i\right) \nonumber\\
    &\qquad\qquad\qquad +\mcL\sum\limits_{i\in[K]}a^\star(i)^q \left(16\ \frac{\sqrt{2 {\rm e} \log T } + 32}{\sqrt{\tau^{\rm C}_t(i)}}  \right)^{q}\;\Big\lvert\; \F_i \in \mcC_T(i) \;\forall i \in [K]\Bigg]\\
    \label{eq:CEUCB2}
    &\leq \E_{\bnu}^{\rm C}\Bigg[ U_h\left(\sum_{i\in[K]} a^{\rm C}_T(i)\F_{i,T}^{\rm C}\right)  - U_h\left(\sum_{i\in[K]} \frac{\tau_t^{\rm C}(i)}{T}\F_i\right) \nonumber\\
    &\qquad\qquad\qquad +\mcL\sum\limits_{i\in[K]}a^{\rm C}_T(i)^q\left(16\ \frac{\sqrt{2 {\rm e} \log T } + 32}{\sqrt{\tau^{\rm C}_t(i)}}  \right)^{q}\;\Big\lvert\; \F_i \in \mcC_T(i) \;\forall i \in [K]\Bigg]\\
    &\leq \underbrace{\E_{\bnu}^{\rm C} \left[ U_h\left(\sum_{i\in[K]} a^{\rm C}_T(i)\F_{i,T}^{\rm C}\right)  - U_h\left(\sum_{i\in[K]} \frac{\tau_t^{\rm C}(i)}{T}\F_i\right) \;\Big\lvert\; \F_i \in \mcC_T(i) \;\forall i \in [K]\right]}_{\triangleq \;C_3(T)}\nonumber\\
    &\qquad\qquad\qquad +\mcL K\left(32\ \frac{\sqrt{2 {\rm e} \log T } + 32}{\sqrt{\rho T {\varepsilon}}}  \right)^{q}\ ,
    \label{eq:CEUCB3}
\end{align}
where,
\begin{itemize}
    \item~\eqref{eq:CEUCB1} follows from \holder  defined in Definition~\ref{assumption:Holder} and the conditioning on the fact that $\F_i\in\mcC_T(i)$ for every $i\in[K]$; 
    \item~\eqref{eq:CEUCB2} follows from the upper confidence bound in~\eqref{eq:UCB_alpha2};
    \item and~\eqref{eq:CEUCB3} follows from the explicit exploration phase of the RS-UCB-M algorithm.
\end{itemize}
Furthermore, note that $C_3(T)$ can be expanded as
\begin{align}
    C_3(T)\;&=\; \E_{\bnu}^{\rm C} \Bigg[ U_h\left(\sum_{i\in[K]} a^{\rm C}_T(i)\F_{i,T}^{\rm C}\right)  - U_h\left(\sum\limits_{i\in[K]} \alpha_T(i)\F_i\right)\nonumber\\
    &\qquad\qquad +U_h\left(\sum\limits_{i\in[K]}a^{\rm C}_T(i)\F_i\right) - U_h\left(\sum_{i\in[K]} \frac{\tau_t^{\rm C}(i)}{T}\F_i\right) \;\Big\lvert\; \F_i \in \mcC_T(i) \;\forall i \in [K]\Bigg]\\
    &\leq \underbrace{\E_{\bnu}^{\rm C} \Bigg[U_h\left(\sum\limits_{i\in[K]} a^{\rm C}_T(i)\F_i\right) - U_h\left(\sum_{i\in[K]} \frac{\tau_t^{\rm C}(i)}{T}\F_i\right) \;\Big\lvert\; \F_i \in \mcC_T(i) \;\forall i \in [K]\Bigg]}_{C_4(T)}\nonumber\\
    &\qquad\qquad\qquad+\mcL K\left(32\ \frac{\sqrt{2 {\rm e} \log T } + 32}{\sqrt{\rho T {\varepsilon}}}  \right)^{q}\ .
\end{align}
Finally, note that the term $C_4(T)$ is similar to the term $B_{22}(T)$ in the RS-UCB-M analysis in Appendix~\ref{proof:UCB upper bound}, and can be handled in the exact same way. Recall the finite time instant $T(\varepsilon) = T_0(\varepsilon) + m$, where $m$ has been defined in~\eqref{eq:m} and $T_0(\varepsilon)$ has been defined in~\eqref{eq:T_epsilon}. For all $T>T(\varepsilon)$, we have the following bound on $C_4(T)$.
\begin{align}
    C_4(T)\;\leq\; 2\mcL K W^q  \left(T\left(\left(\frac{1}{T^2}+1\right)^K-1\right) + \left(\frac{K}{T}\right)^q\right)\ .
\end{align}

Aggregating $C_1(T)--C_4(T)$, we have that for all $T>T(\varepsilon)$,

\begin{align}
\label{eq:discrete_error_CE}
    \mathfrak{R}_{\bnu}^{\rm C}(T)\;&\leq\; \Delta(\varepsilon) + \frac{1}{T}\Bigg[ \underbrace{2 T^{1-q/2} \mcL K\left({32}\ \frac{\sqrt{2 {\rm e} \log T } + 32}{\sqrt{\rho {\varepsilon}} }  \right)^{q}}_{\triangleq\; C_5(T)} \nonumber\\
    &\qquad\qquad\qquad + 2\mcL K W^q  \left(T^2\left(\left(\frac{1}{T^2}+1\right)^K-1\right) + K^q T^{1-q} \right) \nonumber\\
    & \qquad\qquad\qquad\qquad + BT\bigg(\bigg( \frac{1}{T^2}+1\bigg)^K -1\bigg)\Bigg]\ .
\end{align}


Furthermore, similar to the RS-UCB-M analysis, it can be readily verified that $C_5(T)$ is the dominating term for any $T>\e^K$. Hence, we can simplify the upper bound as follows.
\begin{align}
\label{eq:theorem_lastline_CE}
    \mathfrak{R}_{\bnu}^{\rm C}(T)\; &\leq\; \Delta(\varepsilon) + 5 \mcL KW\Big(32 \varepsilon^{-\frac{1}{2}} \rho^{-\frac{1}{2}} T^{-\frac{1}{2}}\Big( \sqrt{2\e\log T} + 32\Big)\Big)^q 
\end{align}
where \eqref{eq:theorem_lastline_CE} follows from \(W > 1\) for sub-Gaussian distributions as shown in Appendix \ref{Appendix:W_finitess} and \(q \in (0, 1]\).

\subsection{Proof of Theorem~\ref{corollary:RS-UCB-M} for CE-UCB-M}

Let us set 
\begin{align}
\varepsilon = \Theta \left(\left(K^{\frac{2}{q}}\log T / T\right)^{\kappa}\right)\ ,    
\end{align}
where $\kappa = \frac{1}{\frac{2\beta}{q}+2}$. Assuming \(\Delta_{\min}(\varepsilon) = \Omega(\varepsilon^\beta)\ \), it can be readily verified that this choice of the discretization level $\varepsilon$ satisfies the condition in~\eqref{eq:T_epsilon}. Hence, for discrete regret, from~\eqref{eq:discrete_error_CE} we have
\begin{align}
\label{eq:Discrete_order_CE}
    \Bar{\mathfrak{R}}_{\bnu}^{\rm C}(T) &= O \left ((K^{\frac{2}{q}}\log T/T)^{\left(1-\kappa\right)\frac{q}{2}} \right) 
    \ .
\end{align}
Furthermore, from Lemma~\eqref{lemma:Delta_error}, we have
\begin{align}
    \Delta(\varepsilon) &\leq \mcL (KW)^r \left(\frac{\varepsilon}{2}\right)^r  \ .
\end{align}
Hence, \(\Delta(\varepsilon) = O(\varepsilon^r)\). Consequently, we have
\begin{align}
\label{eq:delta_order_ce}
    \Delta(\varepsilon) = O\left(K^{r \left( 1 + \frac{2\kappa}{q} \right)}\left(\log T / T\right)^{r \kappa} \right).
\end{align}
From~\eqref{eq:Discrete_order_CE} and \eqref{eq:delta_order_ce} the regret of the CE-UCB-M is bounded from above by
    \begin{align}
    \label{eq:UCB upper bound appendix CE}
        \mathfrak{R}_{\bnu}^{\rm C}(T) &= O \left (\max\Big\{ K^{r \left( 1 + \frac{2\kappa}{q} \right)}\left(\log T / T\right)^{r \kappa}  \;,\;(K^{\frac{2}{q}}\log T/T)^{\left(1-\kappa\right)\frac{q}{2}} \Big\} \right) \\
        & = O \left (\max\Big\{K^{r \left( 1 + \frac{2\kappa}{q} \right)}, K^{1-\kappa} \Big\} \max\Big\{ \left(\log T / T\right)^{r \kappa}  \;,\;  (\log T/T)^{(1-\kappa)\frac{q}{2}}\Big\} \right)\ \ .
    \end{align}
When $r\leq \beta + q/2$, this bound becomes
\begin{align}
    \mathfrak{R}_{\bnu}^{\rm C}(T) \leq O \left (\max\Big\{K^{r \left( 1 + \frac{2\kappa}{q} \right)}, K^{1-\kappa} \Big\}   (\log T/T)^{r\kappa} \right)\ .
\end{align}

\section{Additional Experiments}
\label{Appendix: Additional Experiments}


In this section, we provide figures mentioned in the main paper, additional details of the experiments, some additional comparisons, and some details about computing resources. 

\textbf{Computing Resources.} All experiments are conducted on Mac Mini 2023 equipped with 24 Gigabytes of RAM, and 2 CPU cores have been used for each experiment.

\textbf{General Setting.} We have experimented using Gini deviation. For a Bernoulli distributed \(K-\)arm bandit instance, we have run 1000 independent trials for each configuration. 

\textbf{Regret versus horizon.} The configuration of the experiment considered in Figure ~\ref{fig:regret_uniform} is the following. We chose the number of arms as \(K=2\), the exploration coefficient as \(\rho=0.1\), the arm distributions as \(\text{Bern}(0.4)\) and \(\text{Bern}(0.9)\). 
Figure~\ref{fig:regret_uniform} shows that uniform sampling is not regret-efficient in the premise of mixtures.

\textbf{Regret versus gaps.} In Figure~\ref{fig:ucb_eta}, for the \(2-\)armed bandit instance, we examine the regret of RS-UCB-M for varying mean values of the arms. Specifically, we set the mean of the first arm as \(p_1 = 0.55\) and vary the mean values of the second arm \(p_2\).  The exploration coefficient is set to \(\rho=0.1\) and time horizon is set to \(T=10^5\). From~\ref{fig:ucb_eta},  we observe that with increasing difference between the means of the two arms, the regret increases. Additionally, we run a similar experiment for \(p_1=0.59\) which demonstrates similar results.

\textbf{Regret versus exploration coefficient.} In Figure~\ref{fig:ucb_zeta}, we show the behaviour of the regret of the RS-UCB-M algorithm for increasing values of the exploration coefficient \(\rho\). We fix the horizon at \(T=7.5 \cdot 10^4\) and the number of arms is set to \(K=3\). We observe that when the exploration coefficient \(\rho\) increases, there is an increase in the regret.

\begin{figure}[h]
    \centering
    \begin{subfigure}[b]{0.3\textwidth} 
        \centering
        \includegraphics[width=\textwidth]{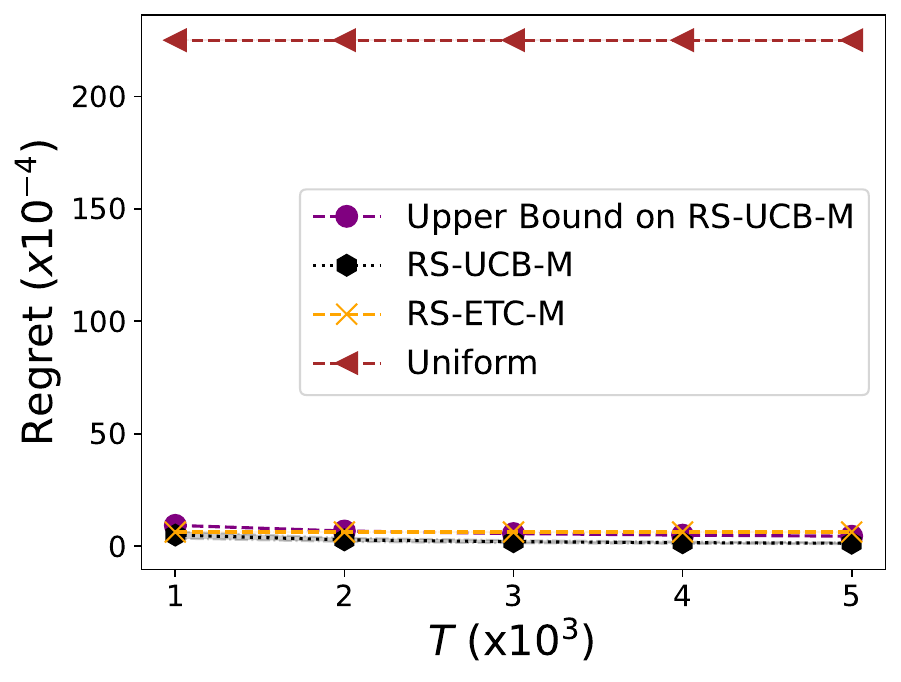}
        \caption{Regret for algorithms RS-UCB-M, RS-ETC-M and Uniform}
        \label{fig:regret_uniform}
    \end{subfigure}
    \hspace{0.03\textwidth} 
    \begin{subfigure}[b]{0.3\textwidth} 
        \centering
        \includegraphics[width=\textwidth]{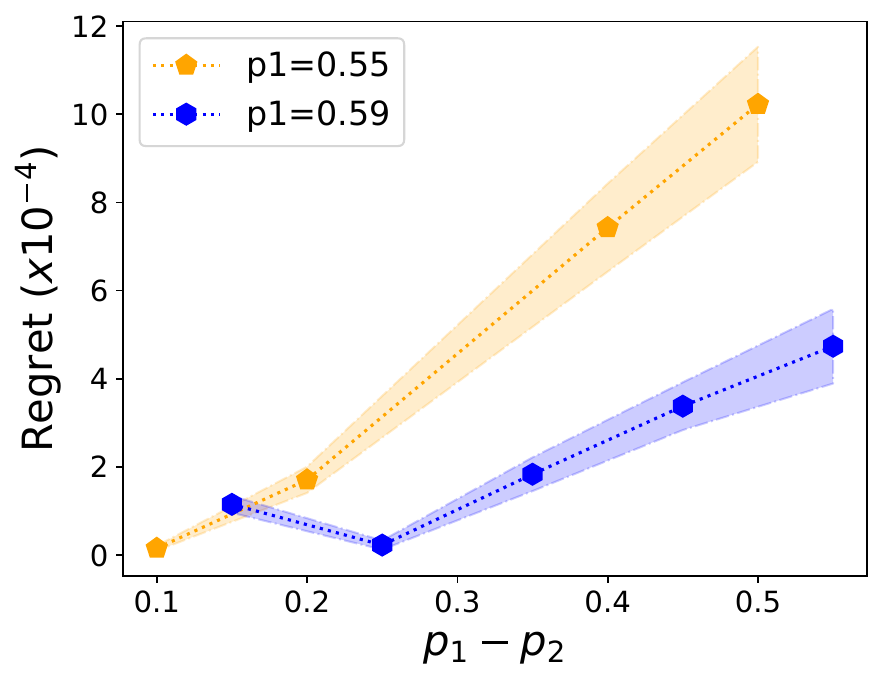}
        \caption{Regret of RS-UCB-M algorithm for different values of Bernoulli pmf differences }
        \label{fig:ucb_eta}
    \end{subfigure}
    \hspace{0.03\textwidth} 
    \begin{subfigure}[b]{0.3\textwidth} 
        \centering
        \includegraphics[width=\textwidth]{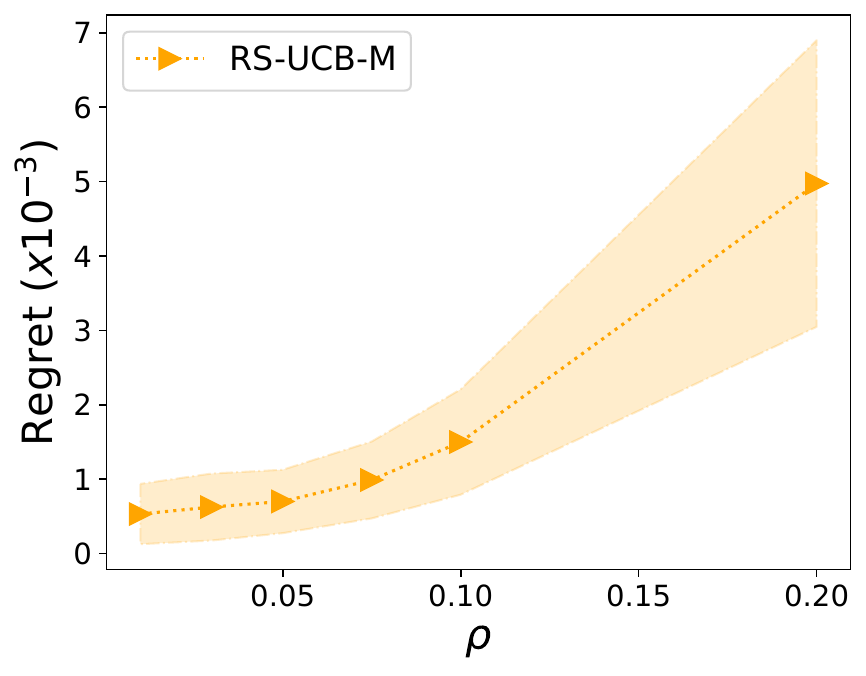}
        \caption{Regret of RS-UCB-M algorithm for different values of exploration coefficient \(\rho\)}
        \label{fig:ucb_zeta}
    \end{subfigure}

    \caption{Regrets of algorithms for different settings.}
    \label{fig:2x2images}
\end{figure}

\end{document}